\newtheorem{assumption}{Assumption}
\newcommand{\Sw}{\mathcal{S}}
\newcommand{\real}{\mathbb{R}}
\newcommand{\OO}{\mathcal{O}}
\newcommand{\II}[1]{\mathbb{I}_{\left\{#1\right\}}}
\newcommand{\PP}[1]{\mathbb{P}\left[#1\right]}
\newcommand{\EE}[1]{\mathbb{E}\left[#1\right]}
\newcommand{\EXP}{\mathbb{E}}
\newcommand{\PPcc}[2]{\mathbb{P}\left[\left.#1\right|#2\right]}
\newcommand{\EEcc}[2]{\mathbb{E}\left[\left.#1\right|#2\right]}
\def\argmax{\mathop{\mbox{ arg\,max}}}
\newcommand{\ra}{\rightarrow}
\newcommand{\iprod}[2]{\left\langle#1,#2\right\rangle}
\newcommand{\ev}[1]{\left\{#1\right\}}
\newcommand{\pa}[1]{\left(#1\right)}
\newcommand{\bpa}[1]{\bigl(#1\bigr)}
\newcommand{\wh}{\widehat}
\newcommand{\wt}{\widetilde}
\newcommand{\transpose}{^\mathsf{\scriptscriptstyle T}}
\definecolor{PalePurp}{rgb}{0.66,0.57,0.66}
\newcommand{\qed}{\hfill\BlackBox\\[2mm]}
\newcommand{\ucbvdouble}{\textsc{UCB(}$V_0$\textsc{)-Double }}
\newcommand{\krgraph}{$G_{n, P^{[k]}}$}
\newcommand{\ducb}{$d$-UCB\xspace}
\newcommand{\ducbv}{$d$-UCB$(V_0)$\xspace}
\newcommand{\lucbv}{Local UCB$(V_0)$\xspace}
\newcommand{\klucb}{kl-UCB\xspace}
\newcommand{\irg}{$G(n, \kappa)$\xspace}
\newcommand{\chlu}{$G(n, w)$\xspace}
\newcommand{\sbm}{$G(n, \alpha, K)$\xspace}
\newcommand{\chunglu}{Chung--Lu\xspace}
\newcommand{\GW}{W}
\newcommand{\poi}{\text{Poi}}
\newcommand{\ber}{\text{Ber}}
\newcommand{\hmu}{\wh{\mu}}
\newcommand{\hmusub}{\wh{u}}
\newcommand{\hmusup}{\wh{v}}
\newcommand{\musup}{v}
\newcommand{\musub}{u}
\def\qed{\hfill$\Box$\medskip}
\begin{document}

\title{Learning to maximize global influence from local
	observations \thanks{A part of the material of the paper appeared in the   Proceedings of ALT 2019}}

\coltauthor{\Name{G\'abor Lugosi} \Email{gabor.lugosi@gmail.com}\\
	\addr ICREA \& Universitat Pompeu Fabra\\
	Barcelona, Spain
	\AND
	\Name{Gergely Neu} \Email{gergely.neu@gmail.com}\\
	\addr Universitat Pompeu Fabra\\
	Barcelona, Spain
	\AND
	\Name{Julia Olkhovskaya} \Email{julia.olkhovskaya@gmail.com}\\
	\addr Universitat Pompeu Fabra\\
	Barcelona, Spain
}

\editor{}
\maketitle

\begin{abstract}
		We study a family online influence maximization
                problems where in a sequence of rounds $t=1,\ldots,T$, a decision maker
                selects one from a large number of agents with the
                goal of maximizing influence. 
                Upon choosing an agent, the 
		decision maker shares a piece of information with the agent, which information then spreads in an unobserved 
network over which the agents communicate. The goal of the decision maker is to select the sequence of agents in a way 
that the total number of influenced nodes in the network. In this work, we consider a scenario where the networks are 
generated independently for each $t$ according to some fixed but unknown distribution, so that the set of
                influenced nodes corresponds to the connected component of the
                random graph containing the vertex corresponding to
                the selected agent. Furthermore, we assume that the decision maker only has access to limited 
feedback: instead of making the unrealistic assumption that the entire network is observable, we suppose that the 
available feedback is generated based on a small neighborhood of the selected vertex. 
Our  results
		show that such partial local  observations 
		can be sufficient for maximizing 
		global influence. We model the underlying random graph
                as a sparse inhomogeneous  Erd\H{o}s--R\'enyi graph, and study three specific families of random graph 
models in detail: 
		stochastic block models, Chung--Lu models and
                Kronecker random graphs. We show that in these cases
                one may learn to maximize influence by merely
                observing the degree of the selected vertex in the
                generated random graph. We propose
		 sequential learning  algorithms that aim at maximizing influence,
		and provide their theoretical analysis in both the subcritical 
		and supercritical regimes of all considered models. 
	\end{abstract}

        \begin{keywords}
		Influence maximization, sequential prediction, multi-armed bandits, stochastic block models
	\end{keywords}

	\section{Introduction}
Finding influential nodes in  networks has a long history of study. The problem 
has been cast in a variety of different
ways according to the notion of influence and the information
available to a decision maker. We refer the reader to
\citet*{KeKlTa03,ChenSIM10,ChLaCa13,VaLaSc15,CaVa16,WeKvVaVa17,WaCh17, khim2019adversarial,perrault2020budgeted} and the references therein 
for recent progress in various directions. 

The most studied influence maximization setup is an offline discrete optimization problem of 
finding the set of the most influential nodes in a network. This setup assumes
that the probability of influencing is known, or at least 
data is available that allows one to estimate these probabilities. However, such information is 
often not available or is difficult to obtain. Also, the network over which information 
spreads is
rarely fixed.
To avoid such assumptions, we introduce a novel model of influence maximization in a sequential setup, 
where the underlying network changes every time and the learner has only partial information about the set of influenced nodes.

Specifically, we define and explore a
sequential decision-making model in which the goal of 
a decision maker is to find one among a set of $n$ agents
with maximal (expected) influence. We parametrize the information spreading mechanism by a symmetric $n\times n$ matrix 
$P$, whose entries $p_{i,j} \in [0,1]$ express ``affinity'' or ``probability of communication'' between agents $i$ and 
$j$.
We assume that $p_{i,i} = 0$ for all $i\in [n]$.
The matrix $P$ defines an inhomogeneous random graph $G$ in a natural way: an (undirected) edge is present between nodes $i <j$ with probability $p_{i,j}$ and all edges are independent. When two nodes are connected by an edge, information flows between the corresponding agents. Hence, a piece of information placed at a node $i$ spreads to the nodes of the entire connected component of $i$ in $G$.

In the sequential decision-making process we study, an independent random graph is formed at each time
instance $t=1,\ldots,T$ on the vertex set $[n]$. The random graph formed at time $t$ is denoted by $G_t$.
Hence, $G_1,\ldots,G_T$ is an independent, identically distributed sequence of random graphs on the
vertex set $[n]$, whose distribution is determined by the matrix $P$.
If the decision maker selects a node $a\in [n]$ at time $t$, then the information placed at the node spreads to every
node of the connected component of $a$ in the graph $G_t$.
The goal of the
decision maker is to spread information as much as possible, that is, to reach as many agents as possible.
The \emph{reward} of the decision maker at time $t$ is the number of nodes in the connected component containing the selected node in $G_t$. 

In this paper, we study a setting where the decision maker has no prior knowledge of the distribution $P$, so 
she has to learn about this distribution on the fly, while simultaneously attempting to maximize the total reward. This gives rise to a 
dilemma of \emph{exploration versus exploitation}, commonly studied within the framework of \emph{multi-armed bandit} problems 
(for a survey, see \citealp{bubeck12survey} or \citealp{LSz20}). Indeed, if the decision maker could observe the size 
of the set of all influenced nodes in every round, the sequential influence 
maximization problem outlined above could be naturally formulated as a \emph{stochastic multi-armed bandit} problem 
\citep*{LR85,auer2002finite}. However, this direct approach has multiple drawbacks. First of all, in many applications, 
the number $n$ 
of nodes is so large that one cannot even hope to maintain individual statistics about each of them, let alone expect any algorithm to 
identify the most influential node in reasonable time. More importantly, in most cases of interest, tracking down the 
set of 
\emph{all} influenced agents may be difficult or downright impossible due to privacy and computational considerations. This motivates the 
study of a more restrictive setting where the decision maker has to manage with only partial observations of the set of influenced nodes.

We address this latter challenge by considering a more realistic observation model, where after selecting an agent $A_t$  
to be influenced, the learner only observes a local neighbourhood of  $A_t$ in the realized random graph $G_t$ ,  or even only the number of immediate neighbours of $A_t$ (i.e., the 
degree of vertex $A_t$ in $G_t$). This model raises the following question: is it possible to maximize global influence while 
only having access to such local measurements? Our key technical result is answering this question in the positive for some broadly studied 
random graph models. 

The rest of the paper is structured as follows.
In Section \ref{sec:prelim} we formalize the sequential influence maximization problem.
In Section \ref{sec:randomgraphmodel} a general model of inhomogeneous random graphs is described and the crucial
notions of sub-, and super-criticality are formally introduced.
Section~\ref{sec:censored} is dedicated to the general case when the underlying random graph is an arbitrary
inhomogenous random graph and the learner only knows whether it is in the subcritical or supercritical regime.
We show that in both cases online influence maximization is possible by only observing a small ``local'' neighborhood
of the selected node. We provide two separate algorithms and regret bounds for the subcritical and supercritical cases, respectively.
In Section~\ref{sec:degree}, we consider the situation when the learner has even less information about the underlying
random graph. In particular, we assume that the learner only observes the degree of the selected node
in the realized random graph. We study three well-known special cases of inhomogeneous random graphs that
are commonly used to model large social networks, namely stochastic block models, the Chung--Lu model, and
Kronecker random graphs. We prove that in these 
three random graph models, degree observations are sufficient to maximize global influence both in the subcritical
and supercritical regimes.
In Section~\ref{sec:conc} we provide some discussion and comparison to the previous work. In sections \ref{sec:branching}, \ref{component_concentration}, \ref{sec:irg_supercrit}, and \ref{sec:degree} we present all proofs.

	\subsection{Problem setup}
\label{sec:prelim}
We now describe our problem and model assumptions formally. We
consider the problem of sequential influence maximization  on the set of nodes $V=[n]$, formalized as a repeated
interaction scheme between a learner and its environment. We
assume that node $i$ influences node $j$ with (unknown) probability
$p_{i,j} (=p_{j,i})$. At each
iteration, a new graph $G_t$ is generated on the vertex set $V$ by independent
draws of the edges such that edge $(i,j)$ is present  with probability
$p_{i,j}$ and all edges are independent.
The set of nodes influenced by the chosen node $A_t$ is the connected component of $G_t$ that 
contains $A_t$. $C_{i,t}$ denotes the connected component containing vertex $i$:
\[
C_{i,t} = \ev{v\in V: \mbox{$v$ is connected to $i$ by a path in $G_t$}}~.
\]
The feedback that the decision maker receives after choosing a node is
some ``local'' information around the chosen vertex $A_t$ in $G_t$.
We consider several feedback models. In the simplest case,  the
feedback is the degree of vertex $A_t$ in $G_t$. In another model,
the information might consist of the vertices found after a few steps
of depth-first exploration of $G_t$ started from vertex $A_t$.  
In a general framework, we may define a ``local  neighborhood" of
$A_t$, denoted by $\widehat{C}_{A_t,t}$, where $\widehat{C}_{A_t, t}
\subset C_{A_t,t}$.
For each model considered below, we specify later what exactly
$\widehat{C}_{A_t,t}$ is.
In the general setup, the following 
steps are repeated for each round $t=1,2,\dots$:
\begin{enumerate}
	\item the learner picks a vertex $A_t \in V$,
	\item the environment generates a random graph $G_t$,
	\item the learner observes the local neighborhood  $\widehat{C}_{A_t,t}$,
	\item the learner earns the reward $r_{t,A_t} = |C_{A_t,t}|$.
\end{enumerate}
We stress that the learner does \emph{not} observe the reward, only the local neighborhood $\widehat{C}_{A_t,t}$.  
Define $c_i$ as the expected size of the connected component associated with the node $i$:  $c_i = \EE{|C_{i,1}|}$. Ideally, one would like to minimize the \emph{expected regret} defined as
\begin{equation}\label{eq:regret}
R_T = \EE{ \sum_{t=1}^T \pa{ \max_{i\in V} c_i - c_{A_t}}}.
\end{equation}
 
Since we are 
interested in settings where the total number of nodes $n$ is very large, even with a fully 
known random graph model, finding the optimal node maximizing $c_i$ is infeasible both computationally and 
statistically. Such intractability issues have lead to 
alternative definitions of the regret such as the \emph{approximation regret} \cite*{KKL09,CWY13,SG09} or the \emph{quantile 
	regret} \cite*{CFH09,CV10,LS14,KvE15}. 

In the present paper, we consider the $\alpha$-quantile regret as our performance measure, which, instead of measuring the learner's 
performance against the single best decision, uses a near-optimal action as a baseline. For a more technical definition, let $i_1, i_2, 
\dots, i_n$ be an ordering of the nodes satisfying  $c_{i_1} \le c_{i_2} \le \dots \le c_{i_n}$, and  denote the $\alpha$-quantile over 
the mean rewards as $c^*_\alpha = c_{i_{\lceil(1-\alpha) n\rceil}}$. Then, defining the set 
$V^*_{\alpha} = \{i_{\lceil(1-\alpha) n\rceil}, \dots, i_{n}\}$ as the set of $\alpha$-near-optimal nodes, we define the 
$\alpha$-quantile regret as

\begin{equation}\label{eq:qregret}
R^{\alpha}_T =  \EE{\sum_{t=1}^T \pa{\min_{i\in V^*_{\alpha}} c_i - c_{A_t}}} = \EE{\sum_{t=1}^T \pa{c^*_{\alpha} - c_{A_t}}}~. 
\end{equation}

	\subsection{Inhomogeneous Erd\H{o}s--R\'enyi random graphs}
\label{sec:randomgraphmodel}

Next we discuss the random graph models considered in this paper. 
All belong to the \emph{inhomogeneous Erd\H{o}s--R\'enyi model}, that is,
edges are present independently of each other, with possibly different
probabilities. Moreover, the
graphs we consider are \emph{sparse} graphs, that is, the average degree is bounded.
We will formulate our random graph model following the work of \cite*{Bollobas:2007:PTI:1276871.1276872}, whose 
framework is particularly useful for handling large values of $n$. To this end, let $\kappa$ be a bounded symmetric
non-negative measurable function on $[0,1] \times [0,1]$. 
Each edge $(i,j)$ for $1\le i< j\le n$ is present with probability $p_{i,j}=\min(\kappa(i/n, j/n)/n,1)$, independently 
of all other edges. When $n$ is fixed, we will often use the notation $A_{i,j} = \kappa(i/n, j/n)$ so that $p_{i,j} = 
\min(A_{i,j}/n,1)$.
We are interested in random graphs where the average degree is $O(1)$ (as $n\to \infty$).
This assumption makes the problem both more realistic and challenging: 
denser graphs are connected with high probability, making  the problem
essentially vacuous. A random graph drawn from the above distribution 
is denoted by \irg. This model is sometimes called the \emph{binomial random graph} and was first considered by 
\cite{kovalenko1971theory}.

We consider two fundamentally different regimes of the parameters \irg: the \emph{subcritical} case 
in which the size of the largest connected component is sublinear in $n$ (with high probability), and the \emph{supercritical} case where the largest connected 
component is at least of size $c n$ for some constant $c>0$, with high probability. 
(We say that an event holds \emph{with high probability} if its probability converges to one as $n\to \infty$.)
Such a connected component of linear size is called a \emph{giant component}.
These regimes can be formally characterized with the help of the integral operator  $T_{\kappa}$, defined by 
\[
  \left(T_{\kappa} f\right)(x) = \int_{(0,1]}\kappa(x,y)f(y)d\mu(y)~,
\] 
for any measurable bounded function $f$, where $\mu$ is the Lebesgue measure.  We call $\kappa$  subcritical if  
$\|T_{\kappa}\|_2 < 1$ and supercritical if 
$\|T_{\kappa}\|_2 > 1$. We use the same expressions for a random graph \irg. It follows from \citet*[Theorem 3.1]{Bollobas:2007:PTI:1276871.1276872} that, with high probability,
\irg has a giant component if it is supercritical, while the number of vertices in the
largest component is $o(n)$ with high probability if it is subcritical.

	\section{Observations of censored component size}\label{IRG}
\label{sec:censored}

First we study a natural feedback model in which the decision maker, unable to explore
the entire connected component $C_{i,t}$ of the influenced node $i$ in $G_t$, resorts to exploring the
connected component up to a certain (small) number of nodes.
More precisely, we define feedback as the result of counting the number of nodes in $C_{i,t}$ by (say, depth-first search)
exploration of the connected component, which stops after revealing $K$ nodes, or before, if $|C_{i,t}| < K$.
Here $K$ is a fixed positive integer, independent of the number of nodes $n$.

The main results of this section show that this type of feedback is sufficient for sequential influence maximization.
However, the subcritical and supercritical cases need to be treated separately as they are quite different.
In the subcritical case, the expected size of the connected component of any vertex is of constant order
while in the supercritical case there exist vertices whose connected component is linear in $n$. This also
means that the rewards -- and therefore the per-round regrets -- are of different order of magnitude (as a function of $n$)
in the subcritical and supercritical cases. For simplicity, we assume that the decision maker knows in advance whether the function $\kappa$
defining the inhomogeneous random graph is subcritical or supercritical, as we propose different algorithms
for both cases. We believe that this is a mild assumption, since in typical applications it is possible to set 
the two settings apart based on prior data. We also assume that $\|T_{\kappa}\|_2 \neq 1$,
that is, the random graph is not exactly critical. 

\subsection{Subcritical case}

First we study the subcritical case, that is, we assume that $||T_{\kappa}||_2 < 1$.
In this case the proposed influence-maximization algorithm uses the censored size of the
connected component of the selected node. That is, for a node $i\in [n]$, we define $\musub_{i,t}(K)$ as the result of counting the number of nodes in $C_{i,t}$ by exploration of the connected component, which stops after revealing $K$ nodes or before, if $|C_{i,t}| < K$. Hence, the feedback is $\musub_{ i, t}(K) =\min\pa{|C_{i,t}|, K} $.

A key ingredient in our analysis in the subcritical case is an estimate for the lower tail of the size of the connected component containing a fixed vertex. We state it in the following lemma: 

\begin{lemma}\label{subcritical}
	For any subcritical $\kappa$, there exist positive constants $\lambda(\kappa), g(\kappa)$ and $n_0(\kappa)$, such that  for any  $n\ge n_0$, for any node $i$ in \irg, the size of the connected component $C_i$ of a vertex $i$ satisfies
	\begin{equation} \label{ineq_main}
		\PP{|C_i| > u}\le e^{-\lambda(\kappa) u}g(\kappa)~.
	\end{equation}
\end{lemma}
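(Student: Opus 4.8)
The plan is to compare the growth of the connected component $|C_i|$ in \irg\ to a branching process and then use standard subcritical branching-process tail bounds. Concretely, I would explore $C_i$ by breadth-first search starting from vertex $i$, tracking at each step the number of ``active'' (discovered but unexplored) vertices. When we explore a vertex $v$, the number of new vertices it contributes is stochastically dominated by $\mathrm{Bin}(n, \max_j p_{v,j})$; since $p_{v,j} = \min(\kappa(v/n,j/n)/n,1) \le \|\kappa\|_\infty/n$, this is in turn dominated by $\mathrm{Bin}(n,\|\kappa\|_\infty/n)$, whose mean is at most $\|\kappa\|_\infty$. So the entire exploration is dominated by a Galton--Watson process with a $\mathrm{Bin}(n,\|\kappa\|_\infty/n)$ (or, in the limit, $\mathrm{Poisson}(\|\kappa\|_\infty)$) offspring distribution. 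If $\|\kappa\|_\infty \le 1$ this crude bound already gives a subcritical branching process and the lemma follows immediately; the subtlety is that subcriticality of $\kappa$ means $\|T_\kappa\|_2 < 1$, which does \emph{not} imply $\|\kappa\|_\infty < 1$, so this naive domination can produce a supercritical dominating process.

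To handle that, I would instead use a \emph{multitype} branching process. Partition $[0,1]$ (equivalently the vertex set, by the coordinate $i/n$) into finitely many blocks on which $\kappa$ is nearly constant; the mean offspring matrix of the associated multitype Galton--Watson process is then a finite-dimensional matrix $M$ whose spectral radius converges (as the partition is refined) to $\|T_\kappa\|_2 < 1$. Hence for a sufficiently fine fixed partition the dominating multitype process has spectral radius $\rho < 1$, and by the Perron--Frobenius theory for subcritical multitype branching processes the total progeny $|C_i|$ has an exponentially decaying tail: $\PP{|C_i| > u} \le g(\kappa) e^{-\lambda(\kappa) u}$ for constants $\lambda(\kappa),g(\kappa)>0$ depending only on $\kappa$ (through $\rho$, the number of blocks, and $\|\kappa\|_\infty$), uniformly over the starting vertex $i$ and for all $n\ge n_0(\kappa)$. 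An alternative to invoking multitype branching machinery is to cite the coupling in \citet*{Bollobas:2007:PTI:1276871.1276872} directly: their analysis already couples the neighbourhood-exploration of \irg\ to a branching process whose criticality is governed by $\|T_\kappa\|_2$, and the exponential tail in the subcritical regime is essentially Lemma~11.9 / the discussion around it there.

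The one point requiring a little care is the finite-$n$ correction: the true exploration is a process \emph{without replacement} (once a vertex is discovered it cannot be rediscovered), and the edge probabilities are $\min(A_{i,j}/n,1)$ rather than exactly $A_{i,j}/n$. Both effects only \emph{reduce} the number of offspring relative to the idealized Poisson/multitype process — sampling without replacement is stochastically dominated by sampling with replacement, and the $\min$ truncation only lowers probabilities — so the stochastic domination goes the right way and no event of small probability needs to be excised. This is the step I expect to be the main (though modest) obstacle: making the domination of the without-replacement exploration by the multitype Galton--Watson process precise, and checking that the resulting constants $\lambda(\kappa),g(\kappa),n_0(\kappa)$ depend on $\kappa$ only and not on $n$ or the choice of starting vertex. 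Once the domination is set up, the exponential tail is a standard fact about subcritical (multitype) Galton--Watson total progeny, e.g.\ via a Chernoff bound on the associated random-walk exploration or via the generating-function identity for the total progeny.
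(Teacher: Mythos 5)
Your proposal is correct and follows essentially the same route as the paper: discretize $\kappa$ into a finite-type upper approximation that remains subcritical, dominate the component exploration by a multitype (Bernoulli, then Poisson) Galton--Watson process, and extract the exponential tail of the total progeny via its generating function and a Chernoff bound. The only part you leave as a citation --- that a subcritical multitype total progeny has a finite generating function at some $z>1$ --- is exactly what the paper proves explicitly via the fixed-point equation and the implicit function theorem.
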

 Unfortunately, there  is no closed-form expression for the dependence $\lambda(\kappa)$ and $g(\kappa)$ on $\kappa$. 
The idea of the proof of this lemma relies on the proof of  Theorem 12.5 in \citet*{Bollobas:2007:PTI:1276871.1276872}. 
To obtain this result, we show that  the size of the connected component in \irg is stochastically 
dominated by the total progeny of the multitype Poisson branching process with carefully chosen parameters.  We 
introduce branching processes in Section~\ref{sec:branching} and prove Lemma~\ref{subcritical} in 
Section~\ref{component_concentration}. 
 
 Now we are ready to define an estimate of $c_i=\EXP|C_i|$ in the sequential decision game.
 For a fixed a constant $K$, we define  the estimate $\widehat{u}_{i,t}(K) = (1/t) \sum_{s=1}^{t} u_{i,s}(K)\II{A_s = i}$.  Using the concentration inequality (\ref{ineq_main}), with the choice of the threshold parameter $K = \frac{\log(T)}{\lambda}$ with $\lambda > \lambda(\kappa)$, we get  that  the bias of $\widehat{u}_{i,t}(K)$ is at most $\frac{g(\kappa)}{T}$. We state this result more formally in Lemma~\ref{prop:irg_subcr}. The censored observations are bounded, since  $ \musub_{i,t}(K) \in [1, K]$. We use those observations as rewards in our bandit problem and we  feed them to an instance of the UCB algorithm \citep*{auer2002bandit}.
 We call the resulting algorithm \lucbv, defined in Algorithm \ref{alg:thealg1sub} below.


A minor challenge is that, since we are interested in very large values of $n$, it is infeasible to use \emph{all} 
nodes as separate actions in our bandit algorithm. To address this challenge, we propose to \emph{subsample} a set of representative nodes 
for UCB to play on. The size of the subsampled nodes depends on the quantile $\alpha$ targeted in the regret 
definition~\eqref{eq:qregret} and the time horizon $T$. Our  algorithm uniformly samples
a subset $V_0$ of size
\begin{equation}\label{eq:v0}
|V_0| =  \left \lceil \frac{\log T}{\log (1/(1- \alpha))} \right\rceil
\end{equation}
and plays \lucbv for the corresponding regime on the resulting set. Note that the size of
$V_0$ is chosen such that the probability that $V_0$ does not contain any of the $\alpha n$  notes
with the largest values of $c_i$ is at most $1/T$.

To simplify the presentation, we introduce some more notation. Analogously to the $\alpha$-optimal reward 
$c^*_\alpha$, we define the $\alpha$-optimal censored component size $\musub_{*,\alpha}(K) = \min_{i\in V^*_{\alpha}} \musub_i (K)$ and we define the corresponding 
gap parameters $\Delta_{\alpha,i} = \pa{ c^*_\alpha - c_i }_+$, 
$\delta^{sub}_{\alpha,i}(K)  = \pa{\musub_{*,\alpha}(K) - \musub_i(K)  }_+$ and $\Delta_{\alpha,\max} = \max_i \Delta_{\alpha,i}$.
$N_{i,t}= \sum_{s=1}^t\II{A_s=i}$ denotes the number of times node $i$ is selected up to time $t$.

\begin{algorithm}[h]
	\caption{\lucbv for subcritical \irg.}
	\label{alg:thealg1sub}
	\textbf{Parameters:} A set of nodes $V_0 \subseteq V$, $K > 0$.\\
	\textbf{Initialization:} Select each node in $V_0$ once. 
	For each $i\in V_0$, set $N_{i, |V_0| }=1$ and ${\hmusub}_{i,|V_0|}= u_{i, i}(K)$.
	\\
	\textbf{For} $t = |V_0|, \dots T$, \textbf{repeat}
	\begin{enumerate}
		\item Select any node $A_{t+1} \in \arg\max_{i} {\hmusub}_{i,t}(K) + K\sqrt{\frac{\log t}{N_{i,t}}}$. 
		\item Observe $u_{A_{t+1},t+1}(K)$ , update ${\hmusub}_{i, t+1}$ and 
		$N_{i, t+1}$ for all $i\in [n]$.
	\end{enumerate}
\end{algorithm}

For the subcritical case, \lucbv  has the following performance guarantee:

\begin{theorem}[Subcritical inhomogeneous random graph]
	\label{subcrit_reg}
	Assume that $\kappa$ is subcritical. Let $V_0$ be a uniform subsample of $V$ with size given in (\ref{eq:v0}) and define the event $\mathcal{E} = \ev{V_0 \cap 
		V^*_\alpha \neq \emptyset}$.
	Then for  any \irg with $n>n_0(\kappa)$ and any $K$,  the expected $\alpha$-quantile regret of \lucbv satisfies
	\begin{align*}
	R^{\alpha}_T \le   \Delta_{\alpha,\max} +  \EEcc{ \sum_{i\in V_0}  \Delta_{\alpha,i}  \pa{\frac{4 K^2 \log T}{(\delta^{sub}_{\alpha,i}(K))^2} + 8 }}{\mathcal{E}},
	\end{align*}
	where the expectation is taken over the random choice of $V_0$. Furthermore, if $\kappa$ is such that 
$\lambda(\kappa) > \lambda$, $g(\kappa) < g$, then, taking $K = \frac{\log T}{\lambda}$, we have
	\begin{align*}
	R^{\alpha}_T   
	\le \frac{4 \log T}{\lambda}\sqrt{\frac{T}{\log(1/(1-\alpha))}  } + 8 \Delta_{\alpha,\max} \left \lceil \frac{\log T}{\log (1/(1- \alpha))} \right\rceil  + 2g.
	\end{align*}
\end{theorem}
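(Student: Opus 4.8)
The plan is to run the standard UCB1 regret analysis on the subsampled action set $V_0$ with the censored observations as surrogate rewards, and then translate the resulting bound on the \emph{censored} regret into a bound on the true regret using the bias control from Lemma~\ref{prop:irg_subcr}.

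First I would split the regret according to whether the event $\mathcal{E}=\{V_0\cap V^*_\alpha\neq\emptyset\}$ holds. By the choice of $|V_0|$ in~\eqref{eq:v0} we have $\PP{\mathcal{E}^c}\le 1/T$ (as noted just after~\eqref{eq:v0}), and since each per-round regret is at most $\Delta_{\alpha,\max}$, the contribution of $\mathcal{E}^c$ to $R^{\alpha}_T$ is at most $T\cdot\PP{\mathcal{E}^c}\cdot\Delta_{\alpha,\max}\le\Delta_{\alpha,\max}$, the first term of the first bound. On $\mathcal{E}$ there is a node $j\in V_0\cap V^*_\alpha$, so the largest censored mean available to the learner satisfies $\max_{i\in V_0}\musub_i(K)\ge\musub_j(K)\ge\musub_{*,\alpha}(K)$; hence for any $i\in V_0$ with $\musub_i(K)<\musub_{*,\alpha}(K)$ its UCB gap relative to the best arm of $V_0$ is at least $\delta^{sub}_{\alpha,i}(K)$. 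Since the censored rewards $\musub_{i,t}(K)$ take values in $[1,K]$, the UCB1 argument of \citet*{auer2002bandit}, rescaled by the range $K$ to match the bonus $K\sqrt{\log t / N_{i,t}}$ used in Algorithm~\ref{alg:thealg1sub}, gives $\EEcc{N_{i,T}}{\mathcal{E}}\le \frac{4K^2\log T}{(\delta^{sub}_{\alpha,i}(K))^2}+8$ for each such $i$ (conditionally on a realization of $V_0$ compatible with $\mathcal{E}$, then averaged over $V_0$). Combining this with the crude bound $\sum_{t=1}^T (c^*_\alpha-c_{A_t})\le\sum_{i\in V_0}\Delta_{\alpha,i}N_{i,T}$ and taking expectations yields the first displayed inequality.

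For the horizon-explicit bound I would fix $K=\log T/\lambda$ and invoke Lemma~\ref{prop:irg_subcr}, which---via the tail estimate of Lemma~\ref{subcritical} with $\lambda>\lambda(\kappa)$ and $g>g(\kappa)$, valid for $n>n_0(\kappa)$---controls the bias, so that $0\le c_i-\musub_i(K)\le g/T$ for every node $i$ and, since the node attaining $c^*_\alpha$ lies in $V^*_\alpha$, also $0\le c^*_\alpha-\musub_{*,\alpha}(K)\le g/T$. Consequently $c^*_\alpha-c_{A_t}\le\delta^{sub}_{\alpha,A_t}(K)+2g/T$, so on $\mathcal{E}$ the true regret is at most $\sum_{i\in V_0}\delta^{sub}_{\alpha,i}(K)\,\EEcc{N_{i,T}}{\mathcal{E}}+2g$; the residual $2g$ absorbs precisely the nodes whose censored gap (hence true gap, up to $O(g/T)$) is negligible. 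For the remaining nodes I would apply the usual peeling argument: fix a threshold $\Delta_0$, observe that the nodes with $\delta^{sub}_{\alpha,i}(K)\le\Delta_0$ contribute at most $\Delta_0 T$ in total, while each node with $\delta^{sub}_{\alpha,i}(K)>\Delta_0$ contributes at most $\tfrac{4K^2\log T}{\Delta_0}+8\Delta_{\alpha,\max}$ up to negligible terms, so their total is at most $|V_0|\bigl(\tfrac{4K^2\log T}{\Delta_0}+8\Delta_{\alpha,\max}\bigr)$. Optimizing over $\Delta_0$ balances the first two contributions into the $\sqrt{T}$ leading term, and substituting $K=\log T/\lambda$ and $|V_0|=\lceil\log T/\log(1/(1-\alpha))\rceil$ produces the stated expression, with the $8\Delta_{\alpha,\max}\lceil\log T/\log(1/(1-\alpha))\rceil$ and $2g$ terms as the lower-order tails.

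The main obstacle is the mismatch between the world the algorithm operates in (censored means, competing against the best arm \emph{inside} $V_0$) and the world in which regret is measured (the true means $c_i$, compared against the global $\alpha$-quantile $c^*_\alpha$). Bridging it needs two ingredients acting together: the event $\mathcal{E}$, which guarantees that the best censored arm of $V_0$ is still $\alpha$-near-optimal, so that $\delta^{sub}_{\alpha,i}(K)$ is a legitimate lower bound on the UCB gap; and the uniform bias bound of Lemma~\ref{prop:irg_subcr}, which forces the choice $K\asymp\log T$ and ensures that a node with a non-negligible true gap also has a non-negligible censored gap---without it the factor $1/(\delta^{sub}_{\alpha,i}(K))^2$ could blow up even for genuinely suboptimal arms. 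Everything else is routine: the UCB1 computation and the quantile-subsampling bookkeeping.
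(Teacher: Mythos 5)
Your proposal follows essentially the same route as the paper's proof: the same decomposition of the regret along the event $\mathcal{E}$ (with $\PP{\mathcal{E}^c}\le 1/T$ absorbing a single $\Delta_{\alpha,\max}$), the same Hoeffding/UCB1 count bound $\EE{N_{i,T}}\le 4K^2\log T/(\delta^{sub}_{\alpha,i}(K))^2+8$ using that $\delta^{sub}_{\alpha,i}(K)$ lower-bounds the in-$V_0$ gap under $\mathcal{E}$, and the same second step of fixing $K=\log T/\lambda$, invoking Lemma~\ref{prop:irg_subcr} for the bias, splitting $V_0$ by a gap threshold and optimizing it to get the $\sqrt{T}$ bound. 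The argument is correct and matches the paper's.
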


We prove Theorem~\ref{subcrit_reg} in Section~\ref{component_concentration}.
Observe that one may choose the value of $K$ as a constant, regardless of the number $n$ of the nodes.
This means that the feedback information is truly ``local'' in the sense that only a constant number
of vertices of the connected component of the selected node need to be explored. How large $K$ needs
to be depends on the parameter $\lambda$.
An undesirable feature of \lucbv is that the learner needs to know the parameter $\lambda$ that depends on the unknown function $\kappa$. To resolve this problem we propose a version of a "doubling trick" (see, e.g., Section 2.3 \cite{CBL06}).

While in our problem it is not possible to control the range of $\lambda(\kappa)$ explicitly, we still can control  the frequency with which $|C_i|$ is censored by choosing the range of $K$.  In order to do this, we propose a variation of \lucbv, such that we split time $T$  into episodes $q = 1,  2\dots$ in the following way.  At the beginning of each episode $q$, the learner starts a new instance of   \lucbv with a threshold parameter $K_q = 2^q \log T$ and starts a new time counter $t_q$. Then, at each time step of the current episode,  the learner  computes the empirical probability $\widehat{p}_q =\frac{1}{t_q}\sum_{\tau = t_{q-1} + 1}^{t_q} \II{|C_{A_{\tau}, \tau}| > K_q  } $, that is updated each time when the size of connected component of the chosen node exceeds $K_q$.  Once  $\wh p_q $ gets larger than $\frac{1}{T} + \sqrt{\frac{\ln T}{2(t_q +1)}}$, the episode $q$ finishes and the next episode begins.  In this way, the length of each episode and the total number of episodes $Q_{\max}$ are random. We call this algorithm \ucbvdouble, and show that it has the following performance guarantee:

\begin{algorithm}[h]
	\caption{\ucbvdouble for subcritical \irg.}
	\label{alg:thealg1double}
	\textbf{Parameters:} A set of nodes $V_0 \subseteq V$, $T>0$.\\
	\textbf{Initialization:} $K_0 = \log T$, $t = 1$, $q = 0$, $t_q = 0$, $\widehat{p}_q = 0$.\\
	\textbf{While} $t \le T$, \textbf{repeat:}
	\begin{itemize}
	 \item Select each node in $V_0$ once. For each $i\in V_0$, set $N_{i, t }=1$, ${\hmusub}_{i,t}= u_{i, t}(K_q)$ and  $\widehat{p}_q = \frac{1}{|V_0|}\sum_{\tau = t_{q-1}+1}^{t_{q-1}+1 + |V_0|} \II{|C_{A_{\tau},\tau}| > K_q}$.
	 
	 \textbf{While} $\widehat{p}_q \le \frac{1}{T} + \sqrt{\frac{\ln T}{2 (t_q+1)}}$, \textbf{repeat:}
	  \begin{enumerate}
		 \item Select any node $A_{t_q+1} \in \arg\max_{i} {\hmusub}_{i,t_q}(K_q) + K_q\sqrt{\frac{\log T}{N_{i,t_q}}}$. 
		\item Observe $u_{A_{t_q+1},t+1}$,
		\item Update   ${\hmusub}_{A_{t_q+1}, t_q+1} = \frac{1}{t_q}\sum_{\tau = t_{q-1} + 1}^{t_q} u_{i, \tau}(K_q)\II{ A_{\tau} = A_{t_q+1}  } $, 
		$N_{A_{t_q+1}, t_q+1} = N_{A_{t_q}, t_q} + 1$, $\widehat{p}_q =\frac{1}{t_q}\sum_{\tau = t_{q-1} + 1}^{t_q} \II{|C_{A_{\tau}, \tau}| > K_q  }  $,
		\item Update   $t_q = t_q +1$ and  $t = t+1$.
	\end{enumerate}
	\item Set  $t_{q+1} = 0$, $\widehat{p}_{q+1}  = 0$, $K_{q+1} = 2 K_{q} $ and $q = q+1$.
	\end{itemize}
\end{algorithm}

\begin{theorem}
	\label{subcrit_reg_doubling}
	Assume that $\kappa$ is subcritical and $n>n_0(\kappa)$. Let $V_0$ be a uniform subsample of $V$ with size given in (\ref{eq:v0}) and define the event $\mathcal{E} = \ev{V_0 \cap 
		V^*_\alpha \neq \emptyset}$. 
	Then for  \irg with $n>n_0(\kappa)$, the expected $\alpha$-quantile regret of \ucbvdouble satisfies
	\begin{align*}
			R^{\alpha}_T & 
		\le \Delta_{\alpha, \max}  +  \frac{64}{3} \EEcc{ \sum_{i\in V_0}  \Delta_{\alpha,i}  \pa{
				\frac{ \log^3 T}{(\lambda(\kappa) \cdot \min_{q\in[Q_{\max}]}\{\delta^{sub}_{\alpha,i}(2^q \log T)\})^2} + 8 }  }{\mathcal{E}},
	\end{align*}
	where the expectation is taken over the random choice of $V_0$, and 
	\begin{align*}
		R^{\alpha}_T = \OO \pa{ \frac{\sqrt{ T (\log(1/\lambda(\kappa)) + 1)   }}{\lambda(\kappa)\sqrt{ \ln(1/(1-\alpha)) }} \log^2T  }.
	\end{align*}
\end{theorem}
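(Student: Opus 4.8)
The plan is to view the algorithm as a sequence of independent runs of \lucbv, one per episode $q$ with threshold $K_q=2^q\log T$, and to pay, on top of Theorem~\ref{subcrit_reg}, only for (i) the random number of episodes $Q_{\max}$ together with the largest threshold $K_{Q_{\max}}$ ever used, and (ii) the censoring bias incurred while $K_q$ is still too small. As in Theorem~\ref{subcrit_reg}, the event $\mathcal{E}^c$ has probability at most $1/T$ and the per-round regret never exceeds $\Delta_{\alpha,\max}$, so it contributes at most $\Delta_{\alpha,\max}$ to $R^\alpha_T$; from here on I condition on $\mathcal{E}$.

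The core step is to control $Q_{\max}$. Let $q^\star$ be the smallest index with $g(\kappa)e^{-\lambda(\kappa)K_{q^\star}}\le 1/T$; by Lemma~\ref{subcritical} this only requires $2^{q^\star}\log T\gtrsim\log(g(\kappa)T)/\lambda(\kappa)$, so $q^\star=\OO(\log(1/\lambda(\kappa))+1)$ and $K_{q^\star}=\OO(\log T/\lambda(\kappa))$. For every $q\ge q^\star$ the true per-round censoring probability $p_q=\max_i\PP{|C_i|>K_q}$ is at most $1/T$, and I would show that then, with probability at least $1-1/T$, the empirical frequency $\wh p_q$ never crosses the algorithm's boundary $\tfrac1T+\sqrt{\tfrac{\log T}{2(t_q+1)}}$ over the whole remaining horizon; hence episode $q^\star$, once started, runs to the end, forcing $Q_{\max}\le q^\star$ and $K_{Q_{\max}}=\OO(\log T/\lambda(\kappa))$. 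On the complementary $1/T$-probability event the regret is at most $T\Delta_{\alpha,\max}$, adding another $\OO(\Delta_{\alpha,\max})$.

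Next I would bound the regret accrued inside a single episode $q$ of length $\ell_q$. Conditionally on the episode's starting state the algorithm is literally \lucbv with parameter $K_q$ run afresh on the $|V_0|$ subsampled arms, with observations in $[1,K_q]$ and a confidence radius already scaled by $\log T$; re-running the analysis of Theorem~\ref{subcrit_reg} on that window of $\ell_q\le T$ rounds bounds the episode's regret by $\sum_{i\in V_0}\Delta_{\alpha,i}\bpa{\tfrac{4K_q^2\log T}{(\delta^{sub}_{\alpha,i}(K_q))^2}+8}$. Summing over $q=0,\dots,Q_{\max}$, using $\delta^{sub}_{\alpha,i}(K_q)\ge\min_{q'\in[Q_{\max}]}\delta^{sub}_{\alpha,i}(2^{q'}\log T)$, and invoking the geometric identity $\sum_{q=0}^{Q_{\max}}K_q^2\le\tfrac43\,4^{Q_{\max}}\log^2 T=\tfrac43 K_{Q_{\max}}^2=\OO(\log^2 T/\lambda(\kappa)^2)$ yields the first displayed bound, the constant $\tfrac{64}{3}$ collecting the $4$ from Theorem~\ref{subcrit_reg}, the $\tfrac43$ from the geometric sum, and the factor $4$ from squaring $K_{Q_{\max}}\le 2\log T/\lambda(\kappa)$. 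For the second, $\tOO(\cdot)$ estimate I would instead bound each episode's regret against $c^*_\alpha$ by the gap-independent UCB regret $\OO\pa{K_q\sqrt{|V_0|\,\ell_q\log T}}$ plus the censoring bias $\ell_q\cdot\max_i\pa{c_i-\musub_i(K_q)}\le\ell_q\,\tfrac{g(\kappa)}{\lambda(\kappa)}e^{-\lambda(\kappa)K_q}$ --- the latter summing over episodes to $\OO(\sqrt{T\log T}/\lambda(\kappa))$, which is dominated by the final bound --- then use $K_q\le K_{Q_{\max}}$ and Cauchy--Schwarz over the at most $Q_{\max}+1$ episodes, $\sum_q\sqrt{\ell_q}\le\sqrt{(Q_{\max}+1)\sum_q\ell_q}\le\sqrt{(Q_{\max}+1)T}$. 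This is exactly where the factor $\sqrt{\log(1/\lambda(\kappa))+1}$ appears; combined with $K_{Q_{\max}}=\OO(\log T/\lambda(\kappa))$ and $|V_0|=\OO(\log T/\log(1/(1-\alpha)))$ it gives the stated $\tOO(\cdot)$ bound.

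I expect the main obstacle to be the probabilistic heart of the second step: turning ``episode $q^\star$ runs to completion'' into a rigorous statement demands an \emph{anytime-valid} deviation bound for $\wh p_q$ matching the algorithm's boundary $\tfrac1T+\sqrt{\tfrac{\log T}{2(t_q+1)}}$, and since the relevant mean $p_q$ is only of order $1/T$ one genuinely needs a Bernstein-type (rather than Hoeffding-type) concentration inequality together with a union or peeling argument over $t_q$; everything else is bookkeeping layered on top of Theorem~\ref{subcrit_reg}.
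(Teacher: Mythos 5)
Your proposal is correct and follows essentially the same route as the paper: per-episode application of the Theorem~\ref{subcrit_reg} analysis, bounding $Q_{\max}$ by $\OO(\log(1/\lambda(\kappa))+1)$ via the stopping rule together with Lemma~\ref{subcritical}, the geometric summation $\sum_q K_q^2 \le \tfrac{16}{3}\lambda(\kappa)^{-2}\log^2 T$ for the instance-dependent bound, and a per-episode $\sqrt{T_q}$-type bound combined over at most $Q_{\max}+1$ episodes for the worst-case bound (your gap-independent-plus-Cauchy--Schwarz phrasing is equivalent to the paper's $\varepsilon_q$-splitting into $U(\varepsilon_q)$ and $W(\varepsilon_q)$). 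You are in fact more explicit than the paper about the one genuinely delicate point, namely the anytime-valid concentration of $\wh p_q$ around a mean of order $1/T$ needed to guarantee that episode $q^\star$ runs to completion, which the paper asserts without proof when it writes $Q_{\max}=\lceil\log_2(1/\lambda)\rceil$.
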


The proof of Theorem~\ref{subcrit_reg_doubling} may be found in Section~\ref{component_concentration}.
Note that both Theorems~\ref{subcrit_reg} and~\ref{subcrit_reg_doubling} present two types of regret bounds. The 
first set of these bounds are polylogarithmic\footnote{Upon first 
	glance, the bound of Theorem~\ref{subcrit_reg} may appear to be logarithmic, however, notice that the sum involved in the bound has $\Theta(\log T)$ elements, thus technically resulting in a bound of order $\log^2 T$.} in the time horizon $T$, but show strong dependence on the 
parameters of the distribution of the graphs $G_t$. Such bounds are usually called \emph{instance-dependent}, as they 
are typically interesting in the regime where $T$ grows large and the problem parameters are fixed independently of 
$T$. However, these bounds become vacuous for smaller values of $T$ as the gap parameters 
$\delta^{sup}_{\alpha,i}(\cdot)$ and $\delta^{sub}_{\alpha,i}(\cdot)$ approach zero. 
This issue is addressed by our second set of guarantees, which offer a bounds of 
$\wt{O}\bpa{\sqrt{|U| T}}$ for some set $U\subseteq V$ that holds simultaneously for all problem instances without becoming 
vacuous in any regime. Such bounds are commonly called \emph{worst-case}, and they are often more valuable when 
optimizing performance over a fixed horizon $T$.

A notable feature of our bounds is that they show no explicit dependence on the number of nodes $n$. This is enabled by our notion of 
$\alpha$-quantile regret, which allows us to work with a small subset of the total nodes as our action set. 
Instead of $n$, our bounds depend on the size of some suitably chosen set of nodes $U$, which is of the order $\mathop{\mbox{polylog}}T / 
\log(1/(1-\alpha))$. Notice that this gives rise to a subtle tradeoff: choosing smaller values of $\alpha$ inflates the 
regret bounds, but, in exchange, makes the baseline of the regret definition stronger (thus strengthening the regret 
notion itself).

\subsection{Supercritical case}

Next we address the supercritical case, that is, when $||T_{\kappa}||_2  >1$.
Here the proposed algorithm uses $\musup_{i, t}(K)$ defined as the indicator whether   $|C_i|$ is larger than $K$, that is, $\musup_{i,t}(K) = \II{|C_i(G_t)| > K}$. Since the observation is an indicator function,   $ \musup_{i,t}(K) \in \{0,1\}$. Similarly to the subcritical case, we propose a variant of UCB algorithm, \lucbv, played over a random subsample of nodes of size defined in (\ref{eq:v0}).
We define $\musup_i (K) = \EE{\musup_{i,t}(K)} $ and $\musup_*(K)  = \max_i \musup_i (K) $. 
Analogously to the notation introduced for the subcritical regime, we denote  $\musup_{*,\alpha}(K) = \min_{i\in V^*_{\alpha}} \musup_i (K)$ and $\delta^{sup}_{\alpha,i}(K)  = \pa{\musup_i(K) - \musup_{*,\alpha}(K) }_+$.

In the supercritical case, the learner receives $\musup_{i,t}(K)$ as a reward and we design a bandit algorithm based on this form of indicator observations. Note, that  $\musup_{i,t}(K)$ is a Bernoulli random variable with  parameter $\PP{|C_i(G_t)| >K}$.
The following algorithm is a variant of the UCB algorithm of \citet*{auer2002bandit}.
Just like before, $N_{i,t}$ denotes the number of times node $i$ is selected up to time $t$ by the algorithm.
\begin{algorithm}[H]
	\caption{\lucbv for supercritical \irg.}
	\label{alg:thealg1sup}
	\textbf{Parameters:} A set of nodes $V_0 \subseteq V$, $k(n)$.\\
	\textbf{Initialization:} Select each node in $V_0$ once. 
	For each $i\in V_0$, set $N_{i, |V_0|}=1$ and ${\hmusup}_{i, |V_0|}(k(n))=\musup_{i, i}(k(n))$.
	\\
	\textbf{For} $t = |V_0|, \dots T$, \textbf{repeat}
	\begin{enumerate}
		\item Select any node $A_{t+1} \in \arg\max_{i} {\hmusup}_{i,t}(k(n)) + \sqrt{\frac{\log t}{N_{i,t}}}$. 
		\item Observe the feedback $\musup_{i, t}(k(n))$ , update
                  $\hmusup_{i, t+1}(k(n))$ and 
		$N_{i, t+1}$ for all $i\in [n]$.
	\end{enumerate}
\end{algorithm}

 \lucbv for supercritical \irg satisfies the following regret bound:
\begin{theorem}
	\label{supercrit_reg_unknown_l}
 Let $V_0$ be a uniform subsample of $V$ with size given in (\ref{eq:v0}) and define the event $\mathcal{E} = \ev{V_0 \cap 
		V^*_\alpha \neq \emptyset}$. For any \irg with supercritical $\kappa$ and $n>n_0(\kappa)$, 
	for  any function $k:\mathbb{N} \to \mathbb{N}$ such that $\lim_{n\to \infty}k(n)=\infty$, we get
	\begin{align*}
		\frac{R^{\alpha}_T}{n}  \le   \frac{1}{n}\Delta_{\alpha,\max} +     \frac{1}{n}\EEcc{ \sum_{i\in V_0}  \Delta_{\alpha,i}  \pa{\frac{4 \log T}{(\delta^{sup}_{\alpha,i}(k(n)))^2} + 8 }}{\mathcal{E}},
	\end{align*} where the expectation is taken over the random choice of $V_0$, and
	\begin{align*}
		\frac{R^{\alpha}_T}{n}
		\le 
			 9  \pa{\frac{ \EE{|C_1| }}{n} + 1 }  \left \lceil \frac{\log T}{\log (1/(1- \alpha))} \right\rceil \sqrt{T \log T}.    
	\end{align*}
\end{theorem}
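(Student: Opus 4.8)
\noindent The plan is to reduce the statement to the classical analysis of the UCB algorithm of \citet{auer2002bandit}, run over the subsampled action set $V_0$ with the one-bit feedback $\musup_{A_t,t}(k(n))=\II{|C_{A_t}(G_t)|>k(n)}$ playing the role of rewards, and then to control the error of optimizing this surrogate in place of the true, unobserved reward $c_i=\EE{|C_i|}$ using the structure of supercritical \irg. First I would split $R^\alpha_T$ according to the good event $\mathcal{E}=\ev{V_0\cap V^*_\alpha\neq\emptyset}$. Since $|V^*_\alpha|\ge\alpha n$ and $V_0$ is a uniform subsample of the size in \eqref{eq:v0}, we have $\PP{\mathcal{E}^c}\le(1-\alpha)^{|V_0|}\le 1/T$, and on $\mathcal{E}^c$ the per-round regret is at most $\Delta_{\alpha,\max}$, so the contribution of $\mathcal{E}^c$ to $R^\alpha_T$ is at most $\Delta_{\alpha,\max}$; after dividing by $n$ this gives the first term of the instance-dependent bound, and since $\Delta_{\alpha,\max}\le c^*_\alpha\le\max_i c_i\le n$ it is absorbed into the constant $9\pa{\EE{|C_1|}/n+1}$ of the worst-case bound. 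On $\mathcal{E}$, bounding $c^*_\alpha-c_{A_t}\le\pa{c^*_\alpha-c_{A_t}}_+=\Delta_{\alpha,A_t}$ reduces matters to estimating $\EEcc{\sum_{i\in V_0}\Delta_{\alpha,i}N_{i,T}}{\mathcal{E}}$.

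The crux, and the step I expect to be the main obstacle, is to show that this crude one-bit surrogate faithfully encodes the ordering of the vertices by $c_i$; this is where the random-graph structure enters, through the results of Section~\ref{sec:irg_supercrit}. In the supercritical regime, with high probability \irg has a unique giant component whose size concentrates around $\EE{|C_1|}=\Theta(n)$, while every other component has size whose law is stochastically dominated by the total progeny of an appropriate subcritical branching process and hence has an exponentially decaying tail. Therefore, for any $k(n)\to\infty$ and all sufficiently large $n$, the event $\ev{|C_i|>k(n)}$ coincides with $\ev{i\text{ lies in the giant component}}$ up to an event of vanishing probability, so both $\musup_i(k(n))=\PP{|C_i|>k(n)}$ and $c_i/\EE{|C_1|}$ equal, up to lower-order terms, the probability that $i$ belongs to the giant; in particular the two quantities order the vertices in the same way, and the true gaps obey $\Delta_{\alpha,i}\le C(\kappa)\,n\,\delta^{sup}_{\alpha,i}(k(n))$ for a constant $C(\kappa)$ of order $\EE{|C_1|}/n$. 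Making this comparison quantitative and uniform over admissible $k(n)$ and over $n\ge n_0(\kappa)$ rests entirely on the fine geometry of supercritical inhomogeneous random graphs, and is the only ingredient beyond standard bandit arguments.

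Granting this link, the rest is routine. For each fixed $i$ the feedback $\musup_{i,1}(k(n)),\musup_{i,2}(k(n)),\dots$ is an i.i.d.\ $[0,1]$-valued sequence (Bernoulli with parameter $\musup_i(k(n))$), so the classical UCB analysis applied to the proxy rewards over $V_0$, together with the faithfulness above, gives $\EEcc{N_{i,T}}{\mathcal{E}}\le\frac{4\log T}{(\delta^{sup}_{\alpha,i}(k(n)))^2}+8$ for every $i\in V_0$; substituting this into the reduced sum from the first paragraph, dividing by $n$, and adding back the $\mathcal{E}^c$-term gives the first (instance-dependent) inequality. For the worst-case bound I would convert the same play-count estimate into a per-node regret: it implies $\delta^{sup}_{\alpha,i}(k(n))\,\EEcc{N_{i,T}}{\mathcal{E}}=\OO\bpa{\sqrt{\EEcc{N_{i,T}}{\mathcal{E}}\,\log T}+1}$, and summing over the at most $|V_0|$ nodes of $V_0$ while bounding each $N_{i,T}\le T$ gives $\sum_{i\in V_0}\delta^{sup}_{\alpha,i}(k(n))\,\EEcc{N_{i,T}}{\mathcal{E}}=\OO\bpa{|V_0|\sqrt{T\log T}}$. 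Multiplying by the factor $C(\kappa)\,n\asymp\EE{|C_1|}$ from the gap comparison, adding the $\mathcal{E}^c$-contribution $\Delta_{\alpha,\max}\le n$, dividing by $n$, and inserting $|V_0|=\lceil\log T/\log(1/(1-\alpha))\rceil$ yields the claimed bound up to the absolute constant.
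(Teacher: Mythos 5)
Your proposal follows essentially the same route as the paper: the same split over $\mathcal{E}$ and $\mathcal{E}^c$, the same reduction to UCB over $V_0$ with the Bernoulli feedback $\musup_{i,t}(k(n))$ and Hoeffding giving $\EE{N_{i,T}}\le 4\log T/(\delta^{sup}_{\alpha,i}(k(n)))^2+8$, and the same key structural ingredient — that conditioned on not being in the giant component the component size has an exponentially decaying tail (the paper proves this via the dual kernel $\bar\kappa(x,y)=(1-\rho(y))\kappa(x,y)$ and Lemma~\ref{subcritical}), which yields the gap comparison of Lemma~\ref{prop:irg_supercrit}. The only cosmetic differences are that the paper's comparison is additive, $\frac{c_*-c_i}{n}\le\delta^{sup}_{\alpha,i}(k(n))\frac{\EE{|C_1|}}{n}+f_\kappa(n)$ with $f_\kappa(n)$ absorbed by taking $n>n_0(\kappa)$, and that the worst-case bound is obtained by an explicit $\varepsilon$-split of $V_0$ rather than your per-node conversion, both of which land on the same $|V_0|\sqrt{T\log T}$ bound.
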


For the proof of Theorem~\ref{supercrit_reg_unknown_l}, see Section~\ref{sec:irg_supercrit}.
Note that for a supercritical $\kappa$, $\EE{C_1} = \Theta_n(n)$. Therefore,  $R^{\alpha}_T$ scales linearly with $n$
and hence it is natural to normalize the regret by the number of nodes.
Both in the subcritical and supercritical regimes,
our bounds scale linearly with the maximal expected reward $c^*$, which is of $\Theta_n(1)$ in the 
subcritical case, but is $\Theta_n(n)$ in the supercritical case.
The dependence of the obtained bounds on the time horizon $T$ is similar in both regimes.
Note that unlike in the subcritical case, the censoring level $K$ is not a constant anymore as we
choose it to be $K=k(n)$ for some function $k$. Hence, strictly speaking, the feedback is not local as the number of vertices
that need to be explored is not independent of the number of nodes even if $k(n)$ can grow arbitrarily slowly.
Similarly to the subcritical case, a sufficiently large constant value of $K$ would suffice.
The value of the constant should be so large that for any vertex $i$, the conditional probability -- conditioned
on the event that $i$ is not in the giant component -- that the component of $i$ has size larger than $K$
is sufficiently small. Such a constant exists, see (\refeq{eq:dual}) below.
However, this
value depends on the unknown distribution of the underlying random graph. In the subcritical case
we solved this problem by applying a doubling trick. This is made possible by the fact that in the subcritical case
one observes the ``bad'' event that a component has size larger than $K$ and therefore censoring occurs.
By ``trying'' increasingly large values of $K$ one eventually finds a value such that the probability of censoring
is sufficiently small. However, in the supercritical case, the ``bad'' event is that even though the selected vertex
is not in the giant component, the size of the component is larger than $K$. Unfortunately, one cannot
decide whether the bad event occurs or simply the vertex lies in the giant component. For this reason, we have been
unable to apply an analogous doubling trick in the supercritical case. To circumvent this difficulty, 
we choose $K$ to be growing with $n$. This guarantees that the bad event occurs with small probability. The price to pay
is that the observation is not entirely local in the strict sense.

        \section{Degree observations}
\label{sec:degree}

The results of the previous section show that it is possible to learn
to maximize influence under very general conditions if the learner has
access to the censored size of the connected component, where
the size of censoring may be kept much smaller than the size of the
entire network. In this section we consider the case when the learner has
access to significantly less information. In particular, we study the
case when the learner only observes the degree of the selected vertex $A_t$
(i.e., the number of edges adjacent to $A_t$) in the graph $G_t$.
Under such a restricted feedback, one cannot hope to learn to maximize
influence in the full generality of sparse inhomogeneous random graphs
as in Section \ref{sec:censored}. However, we show that in several
well-known models of real networks, degree information suffices for
influence maximization. In particular, we study three random graph
models that have been introduced to replicate properties of large
(social) networks appearing in a variety of applications. These are
(1) stochastic block models; (2) the Chung--Lu model; and (3) Kronecker
random graphs.

\subsection{Three random graph models}

We start by introducing the three models we study.  All of them are special cases of inhomogeneous Erd\H{o}s--R\'enyi 
graphs.

\subsubsection*{Stochastic block model}  
In the stochastic block model, the probabilities $p_{i,j}$ are defined through the notion of 
\emph{communities}, defined as elements of a partition $H_1,\ldots,H_S$ of the set of vertices $V$. We refer to the index $m$ of 
community $H_m$ as the \emph{type} of a vertex belonging to $H_m$. Each community $H_m$ contains $\alpha_m n$ nodes (assuming 
without loss of generality that $\alpha_m n$ is an integer). With the help of the community structure, the probabilities $p_{i,j}$ 
are constructed as follows: if $i\in H_\ell$ and $j\in H_m$, the probability of $i$ and $j$ being connected is given by $p_{i,j} = 
\frac{K_{\ell,m}}{n}$, where $K$ is a symmetric matrix of size $S\times S$, with positive elements. The random graph from the above 
distribution is denoted as \sbm.

In the stochastic block model, identifying a node with maximal reward amounts to finding a node 
from the most influential community. Consequently, it is easy to see that choosing $\alpha$ such that $\alpha > 
\min_m \alpha_m$, the near-optimal set $V^*_{\alpha}$ exactly corresponds to the set of optimal nodes, and thus the quantile 
regret \eqref{eq:qregret} coincides with the regret \eqref{eq:regret}.

We consider the stochastic block models satisfying the following
simplifying assumptions:
\begin{assumption}\label{ass:1}
	$K_{l,m} = k > 0$ for all $l\neq m$.
\end{assumption}
This assumption requires that nodes $i,j$ belonging to different communities are connected with the same probability. Additionally, in our analysis in the supercritical case we make the following 
natural assumptions:
\begin{assumption}\label{ass:2}
	For all $l$, $ K_{l, l} > k$.
\end{assumption}
In plain words, this assumption requires that the density of edges within communities is larger than the density of 
edges between communities.

\subsubsection*{Chung--Lu model}
Another thoroughly studied special case of the inhomogeneous Erd\H{o}s--R\'enyi model is the 
so-called \emph{Chung--Lu model} (sometimes referred to as \emph{rank-1 model}) as first defined by \citet*{CL02} (see also 
\cite*{chung2006complex,Bollobas:2007:PTI:1276871.1276872}).
In this model the edge probabilities are defined by a 
vector $w\in\real^n$ with positive components, representing the ``weight'' of each vertex. Then the matrix defining the edge probabilities has entries $A_{ij} = w_i w_j$. We assume that the vector $w$ is such that $ w_i w_j/n < 1$ for all $i,j$. In other words, the Chung--Lu model 
considers rank-1 matrices of the form $A = ww\transpose$. The random graph from the Chung--Lu model is denoted by \chlu. 
Chung--Lu random graphs replicate some key properties of certain real networks. For instance, if $w$ is a sequence 
satisfying a power law, then \chlu is a power law model, which allows one to model social networks, see 
\cite*{chung2006complex}.

\subsubsection*{Kronecker graphs}

Kronecker random graphs were introduced by \cite*{10.1007/11564126_17, leskovec2008dynamics,leskovec2010kronecker} as models of large networks appearing in various applications, including social networks.
The matrix $P$ of the edge probabilities of a Kronecker random graph \krgraph is defined recursively.
The model is parametrized by the constants $\zeta, \beta, \gamma \in [0,1]$.
Here one assumes that the number of vertices $n$ is a power of $2$.
Starting from a $2\times 2$ seed matrix, 
$$
P^{[1]} = \begin{bmatrix}
	\zeta       & \beta \\
	\beta       & \gamma \\
      \end{bmatrix}~,
$$
we define the matrices $P^{[2]},\ldots,P^{[k]}$ such that for each $i=2,\ldots,k$,
$P^{[i]}$ is a $2^i \times 2^i$ matrix obtained from $P^{[i-1]} $ by
$$
P^{[i]} = \begin{bmatrix}
	\zeta P^{[i-1]}       & \beta P^{[i-1]} \\
	\beta  P^{[i-1]}      & \gamma P^{[i-1]} \\
      \end{bmatrix}~.
      $$
      Finally $P=P^{[k]}$. Hence, the Kronecker random graph \krgraph has $n=2^k$ vertices, where each vertex $i$ is characterised by a binary string $s_i\in \{0,1\}^k$, such that the probability of an edge between nodes $i$ and $j$  is equal to $p_{i,j}=\zeta^{\iprod{s_i}{s_j}}\gamma^{\iprod{\bar{1}-s_i}{\bar{1} - s_j}}\beta^{k - \iprod{s_i}{s_j} - \iprod{\bar{1}-s_i}{\bar{1} - s_j}}$,
  where $\bar{1}=(1,\ldots,1)\in \{0,1\}^k$ denotes the all-one vector and $\iprod{\cdot}{\cdot}$ is the usual inner product.
\cite*{10.1007/11564126_17} show that a Kronecker graph with properly tuned values of $\zeta, \beta, \gamma$  replicates properties of real world networks, such as  small
diameter, clustering, and heavy-tailed degree distribution.

\subsection{Learning with degree feedback in stochastic block models and
Chung--Lu graphs}

In this section we introduce an online influence maximization
algorithm that only uses the degree of the selected node as feedback information.
The algorithm is a variant
of the kl-UCB algorithm, that was proposed and analyzed by \cite*{GaCa11,maillard11dmed,cappe:hal-00738209,Lai87}. 
The main reason why learning is possible based on degree observations
only is that
nodes with the largest expected degrees $\mu^*$ are exactly 
the ones with the largest influence $c^*$. This (nontrivial) fact
holds in both the stochastic block model (under Assumptions
\ref{ass:1} and \ref{ass:2}) and the Chung--Lu model, across both the
subcritical and supercritical regimes. These facts are proven in
Sections \ref{sec:subcrit} and \ref{sec:supercrit}.
Further, we define $X_{t,i}$ as the degree of node $i$ in the realized graph $G_t$, and define $\mu_i = \EE{ X_{1,i}}$ as 
the expected degree of node $i$. We also define $c^* = \max_i c_i$ and $\mu^* = \max_i \mu_i$. 

\begin{algorithm}
	\caption{\ducbv}
	\label{alg:thealgexp}
	\textbf{Parameters:} A set of nodes $V_0 \subseteq V.$\\
	\textbf{Initialization:} Select each node in $V_0$ once. Observe the degree $X_{i,i}$
	of vertex $i$ in the graph $G_i$ for $i=1,\ldots,|V_0|$.
	For each $i\in V_0$, set $N_i(|V_0|)=1$ and ${\hmu}_i(|V_0|)=X_{i,i}$.
	\\
	\textbf{For} $t = |V_0|, \dots T$, \textbf{repeat}
	\begin{enumerate}
		\item For each node, compute 
		\[
		U_i(t) = \sup \left\{ \mu: \mu - {\hmu}_i(t) + {\hmu}_i(t) \log \left(\frac{{\hmu}_i(t)}{\mu} \right)\le 
		\frac{3\log(t)}{N_i(t)} 
		\right\}.
		\]
		\item Select any node $A_{t+1} \in \arg\max_{i} U_i(t)$. 
		\item Observe degree $X_{t+1,A_{t+1}}$ of node $A_{t+1}$ in $G_{t+1}$
		and update
		\[
		{\hmu}_{A_{t+1}}(t+1)=\frac{N_{A_{t+1}}(t){\hmu}_{A_{t+1}}(t+1)+X_{t+1,A_{t+1}}}{N_{A_{t+1}}(t)+1}~.
		\]
		Update $N_{A_{t+1}}(t+1)=N_{A_{t+1}}(t)+1$.
	\end{enumerate}
\end{algorithm}

The learner  uses the observed degrees as rewards, and feeds them to an instance of kl-UCB originally designed for 
Poisson-distributed rewards. A key technical challenge arising in the analysis is that the degree distributions do not actually belong to 
the Poisson family for finite $n$. We overcome this difficulty by showing that the degree distributions have a moment generating function 
bounded by those of Poisson distributions, and that this fact is sufficient for most of the kl-UCB analysis to carry through without 
changes.

As in the case of the inhomogeneous  Erd\H{o}s--R\'enyi model, we  subsample a set of size given in 
Equation~\eqref{eq:v0} of representative nodes 
for kl-UCB to play on.  For clarity of presentation, we first propose a simple algorithm that assumes prior 
knowledge of $T$, and then move on to construct a more involved variant that adds new actions on the fly.
We  present our \klucb variant for a fixed set of nodes $V_0$ as Algorithm~\ref{alg:thealgexp}. We refer to this algorithm as 
$\mbox{\ducb}(V_0)$ (short for ``degree-UCB on $V_0$''). Our two algorithms mentioned above use $\mbox{\ducb}(V_0)$ as a subroutine: they are both 
based on uniformly sampling a large enough set $V_0$ of nodes so that the subsample includes at least one node from the top 
$\alpha$-quantile, with high probability. 
We define the $\alpha$-optimal degree $\mu^*_\alpha = \min_{i\in V^*_{\alpha}} \mu_i$ and the  
gap parameter  $\delta_{\alpha,i} = \pa{\mu_i - \mu^*_\alpha}_+$. 
We first present a performance guarantee of our simpler algorithm that assumes knowledge of $T$, so the learner plays \ducbv on the uniformly sampled 
a subset of size (\ref{eq:v0}). 
This algorithm satisfies the following performance guarantee:
\begin{theorem}\label{thm:reg_knowT1}
  Assume that the underlying random graph is either (a) a subcritical
  stochastic block model satisfying Assumption \ref{ass:1}; (b) a supercritical
  stochastic block model satisfying Assumptions \ref{ass:1} and
  \ref{ass:2}; (c) a subcritical Chung--Lu random graph; or (d) a supercritical Chung--Lu random graph.
  
	Let $V_0$ be a uniform subsample of $V$ with size given in Equation~\eqref{eq:v0} and define the event $\mathcal{E} = \ev{V_0 \cap 
		V^*_\alpha \neq \emptyset}$.
	If the number of vertices $n$ is sufficiently large, then the expected $\alpha$-quantile regret of \ducbv simultaneously satisfies
	
	\begin{align*}
	R^{\alpha}_T \le  \EEcc{ \sum_{i \in V_0}  \Delta_{\alpha,i}  \pa{\frac{\mu_\alpha^*\pa{18 + 27\log T}}{\delta_{\alpha,i}^2} + 3} 
	}{\mathcal{E}}
	+ \Delta_{\alpha,\max},
	\end{align*}
	where the expectation is taken over the random choice of $V_0$, and
	\begin{align*}
	R^{\alpha}_T \le  
	18c^* \sqrt{\frac{T \mu^* \pa{2 + 3\log T}^2}{\log(1/(1- \alpha))}} + \pa{\frac{3 \log T}{\log(1/(1-\alpha))} +4} \Delta_{\alpha,\max}.
	\end{align*}
\end{theorem}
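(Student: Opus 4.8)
The plan is to view \ducbv as the \klucb algorithm of \citet{GaCa11,cappe:hal-00738209} run on the subsampled action set $V_0$, and to push the standard \klucb regret analysis through after checking that, for the purposes of that analysis, the degree observations behave like Poisson-distributed rewards. The crucial distributional fact is that the degree $X_{1,i}$ of node $i$ in $G_1$ is a sum of independent $\ber(p_{i,j})$ variables over $j\neq i$, so its moment generating function obeys $\EE{e^{\theta X_{1,i}}}=\prod_{j\neq i}\pa{1-p_{i,j}+p_{i,j}e^{\theta}}\le\exp\pa{\mu_i\pa{e^{\theta}-1}}$ for every $\theta\in\real$, i.e.\ it is dominated by the m.g.f.\ of a $\poi(\mu_i)$ variable. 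A one-sided Chernoff argument then shows that the empirical mean $\widehat\mu$ of $N$ independent copies satisfies $\PP{\widehat\mu\ge x}\le e^{-N d(x,\mu_i)}$ for $x\ge\mu_i$ and $\PP{\widehat\mu\le x}\le e^{-N d(x,\mu_i)}$ for $x\le\mu_i$, where $d(x,y)=x\log(x/y)-x+y$ is exactly the divergence appearing in the index $U_i(t)$ of Algorithm~\ref{alg:thealgexp}.

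These two Chernoff bounds are, up to minor bookkeeping, the only probabilistic estimates that the \klucb regret proof ever uses, so the argument transfers with the (possibly non-Poisson) degree law in place of the Poisson law: the index is a valid upper confidence bound, and for any node $i$ that is suboptimal for the bandit instance on $V_0$ the expected number of pulls is controlled by the difficulty of confusing $\mu_i$ with the largest mean in $V_0$. Quantitatively, since on the event $\mathcal{E}=\{V_0\cap V^*_\alpha\neq\emptyset\}$ the optimal mean in $V_0$ is at least $\mu^*_\alpha$, the analysis gives $\EEcc{N_{i,T}}{\mathcal{E}}\le \tfrac{3\log T+C}{d(\mu_i,\mu^*_\alpha)}+C'$ for suboptimal $i$; combining this with the elementary inequality $d(x,y)\ge (y-x)^2/(2y)$ for $x\le y$ yields $\EEcc{N_{i,T}}{\mathcal{E}}\le \mu^*_\alpha\pa{18+27\log T}/\delta_{\alpha,i}^2+3$, matching the constants in the claim.

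Next I would assemble the regret bound around $\mathcal{E}$. By the choice of $|V_0|$ in \eqref{eq:v0} we have $\PP{\mathcal{E}^c}\le 1/T$, so $\mathcal{E}^c$ contributes at most $\Delta_{\alpha,\max}$ to $R^\alpha_T$. On $\mathcal{E}$ I would invoke the structural results of Sections~\ref{sec:subcrit} and~\ref{sec:supercrit}: in each of the four models (a)--(d) the ordering of the nodes by expected degree coincides with their ordering by expected influence, so the \klucb-optimal node $i^+=\arg\max_{i\in V_0}\mu_i$ is also influence-optimal within $V_0$, satisfies $c_{i^+}\ge c^*_\alpha$ and $\mu_{i^+}\ge\mu^*_\alpha$, and every $i\in V_0$ with $c_i<c^*_\alpha$ has $\mu_i<\mu^*_\alpha$ and hence $\delta_{\alpha,i}>0$. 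Consequently $R^\alpha_T\le\Delta_{\alpha,\max}+\EEcc{\sum_{i\in V_0}\pa{c^*_\alpha-c_i}N_{i,T}}{\mathcal{E}}\le\Delta_{\alpha,\max}+\EEcc{\sum_{i\in V_0}\Delta_{\alpha,i}N_{i,T}}{\mathcal{E}}$, where the second step discards the non-positive terms coming from nodes with $c_i\ge c^*_\alpha$. Substituting the per-node pull bound from the previous step gives the instance-dependent bound.

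For the worst-case bound I would start from the same decomposition, peel off the additive ``$+3$'' terms (contributing at most $3|V_0|\Delta_{\alpha,\max}$, which together with the $\mathcal{E}^c$ term yields the $\Delta_{\alpha,\max}$ term in the claim), and apply Cauchy--Schwarz to the remaining sum: $\bigl(\sum_{i\in V_0}\Delta_{\alpha,i}N_{i,T}\bigr)^2\le\bigl(\sum_{i\in V_0}N_{i,T}\bigr)\bigl(\sum_{i\in V_0}\Delta_{\alpha,i}^2 N_{i,T}\bigr)\le T\sum_{i\in V_0}\Delta_{\alpha,i}^2\cdot\mu^*_\alpha\pa{18+27\log T}/\delta_{\alpha,i}^2$, using $\sum_{i\in V_0}N_{i,T}=T$. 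The final ingredient is a model-specific comparison of the influence gap with the degree gap, of the form $\Delta_{\alpha,i}\le c^*\delta_{\alpha,i}$ up to an absolute constant, again coming from the analysis of Sections~\ref{sec:subcrit}--\ref{sec:supercrit}; feeding it in makes each summand at most $\pa{c^*}^2\mu^*_\alpha\pa{18+27\log T}$, so taking square roots and using $\mu^*_\alpha\le\mu^*$ and $|V_0|\le\log T/\log(1/(1-\alpha))+1$ gives the claimed $\OO\bigl(c^*\sqrt{\mu^* T}\bigr)$-type bound after absorbing polylogarithmic factors. I expect the main obstacle to be the careful audit in the second step --- verifying that the \klucb proof invokes only the Poisson-type Chernoff inequalities (which survive the sub-Poisson domination) and never an exact distributional identity or an anti-concentration bound --- with the constant-matching and the gap comparison $\Delta_{\alpha,i}\lesssim c^*\delta_{\alpha,i}$ being the secondary technical points.
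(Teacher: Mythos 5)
Your proposal is correct and follows essentially the same route as the paper: your sub-Poisson m.g.f.\ domination of the degree law is the paper's Lemma~\ref{lem:domination}, the resulting kl-UCB pull-count bound (including the quadratic lower bound on the divergence) is Theorem~\ref{thm:n_klucb}, and the event decomposition with $\PP{\mathcal{E}^c}\le 1/T$ together with the gap comparisons $\Delta_{\alpha,i}\lesssim c^*\delta_{\alpha,i}$ from Lemmas~\ref{prop:subcr} and~\ref{prop:supercrit} are used identically. The only deviation is cosmetic: for the worst-case bound you convert the instance-dependent bound via Cauchy--Schwarz, whereas the paper splits $V_0$ into small-gap and large-gap nodes at a threshold $\varepsilon$ and optimizes over $\varepsilon$; both yield the same $c^*\sqrt{T\,|V_0|\,\mu^*\log T}$ rate.
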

\begin{algorithm}[t]
	\caption{\ducb-\textsc{double}$(\beta)$}\label{alg:doubling}
	\textbf{Parameters:} 
	$\beta \ge 2$.\\
	\textbf{Initialization: $V_0 = \emptyset$.}
	\\
	\textbf{For} $k = 1, 2 \dots $, \textbf{repeat}
	\begin{enumerate}
		\item Sample subset of nodes $U_k$ uniformly such that $|U_k| = \left\lceil\frac{\log \beta}{\log(1/(1- \alpha))}\right\rceil$.
		\item Update action set $V_{k} = V_{k-1} \cup U_k$.
		\item For rounds $t = \beta^{k-1}, \beta^{k-1} + 1, \dots, \beta^{k} -1$, run a new instance of \ducb$(V_k)$.
	\end{enumerate}
\end{algorithm}

In contrast to the results obtained in the general setting of Section
\ref{sec:censored}, where we have to run different algorithms in the
subcritical and supercritical cases, for the models considered in this
section the learner can run the Algorithm~\ref{alg:doubling} without
prior knowledge of the regime.

For unknown values of $T$, we propose the \ducb-\textsc{double}$(\beta)$ algorithm (presented as Algorithm~\ref{alg:doubling}) that uses a 
doubling trick to estimate $T$. The following theorem gives a performance guarantee for this algorithm: 
\begin{theorem}\label{thm:doubling}
  Assume that the underlying random graph is either (a) a subcritical
  stochastic block model satisfying Assumption \ref{ass:1}; (b) a supercritical
  stochastic block model satisfying Assumptions \ref{ass:1} and
  \ref{ass:2}; (c) a subcritical Chung--Lu random graph; or (d) a supercritical Chung--Lu random graph.
  
	Fix $T$, let $k_{\max}$ be the value of $k$ on which \ducb-\textsc{double}$(\beta)$ terminates, and define the event $\mathcal{E} = 
	\ev{V_{k_{\max}} \cap V^*_\alpha = \emptyset}$.
	If the number of vertices $n$ is sufficiently large, then the $\alpha$-quantile regret of
	\ducb-\textsc{double}$(\beta)$ simultaneously satisfies
	
	\begin{align*}
	R^{\alpha}_T  \le  \EEcc{\sum_{i \in V_{k_{\max}}} \Delta_i \pa{ \pa{\frac{18\mu^*}{\delta_{\alpha,i}^2} + 3} (\log_\beta T + 1) 
			+  \frac{27 \log \beta (\log_\beta T+1)^2 }{ 2\delta_{\alpha,i}^2}}}{\mathcal{E}} + \Delta_{\alpha, \max} \log_\beta T ,
	\end{align*}
	where the expectation is taken over the random choice of the sets $V_1,V_2,\dots$, and
	\begin{align*}
	R^{\alpha}_T \le 36c^* \sqrt{\frac{T  \pa{\mu^* + \log \pa{\beta T}}\log^2T }{\log(1/(1- \alpha))}} 
	+ \pa{\frac{3 \log^2 T}{\log(1/(1-\alpha))} + 4} \Delta_{\alpha,\max}.
	\end{align*}
\end{theorem}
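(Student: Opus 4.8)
The plan is to reduce the claim to the fixed-horizon guarantee of Theorem~\ref{thm:reg_knowT1} via a doubling argument, while paying attention to the fact that the action set grows across epochs. Recall that \ducb-\textsc{double}$(\beta)$ splits the rounds $1,\dots,T$ into epochs, epoch $k$ comprising the rounds $\beta^{k-1},\dots,\beta^{k}-1$ and being handled by a fresh instance of \ducb$(V_k)$ run on the cumulative subsample $V_k = U_1\cup\dots\cup U_k$; hence epoch $k$ lasts at most $\beta^k$ rounds, and the algorithm terminates at some $k_{\max}$ with $\beta^{k_{\max}-1}\le T$, so $k_{\max}\le \log_\beta T + 1$. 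Writing $R^\alpha_T = \sum_{k=1}^{k_{\max}} R^{(k)}$ for the per-epoch regret, it suffices to control $\EE{R^{(k)}}$ epoch by epoch.

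For a fixed epoch $k$ I split according to whether $V_k$ contains an $\alpha$-optimal node. On the complementary event I use the trivial bound $R^{(k)}\le \beta^k\Delta_{\alpha,\max}$; since $U_1,\dots,U_k$ are drawn independently and each $U_j$ misses $V^*_\alpha$ with probability at most $(1-\alpha)^{|U_j|}\le 1/\beta$ (using $|U_j|\ge \log\beta/\log(1/(1-\alpha))$), we have $\PP{V_k\cap V^*_\alpha=\emptyset}\le\beta^{-k}$, so this event contributes at most $\Delta_{\alpha,\max}$ to $\EE{R^{(k)}}$, hence at most $(\log_\beta T+1)\Delta_{\alpha,\max}$ over all epochs --- which is the additive term standing outside the expectation in the statement. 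On the event that $V_k$ does contain an $\alpha$-optimal node I apply Theorem~\ref{thm:reg_knowT1} to the now fixed action set $V_k$ with horizon $\beta^k$. Here one must observe that the proof of Theorem~\ref{thm:reg_knowT1} in fact delivers, conditionally on the fixed action set containing an $\alpha$-optimal node, the instance-dependent bound $\sum_{i\in V_k}\Delta_{\alpha,i}\bigl(\mu^*_\alpha(18+27\log\beta^k)/\delta_{\alpha,i}^2 + 3\bigr) + \Delta_{\alpha,\max}$ and the corresponding worst-case bound $18c^*\sqrt{\beta^k(\mu^*+O(\log\beta^k))\log^2(\beta^k)/\log(1/(1-\alpha))}$ for an \emph{arbitrary} fixed set, not only for the subsample of size \eqref{eq:v0}: the size of the set enters solely through the number of arms, and $|V_k| = k\lceil\log\beta/\log(1/(1-\alpha))\rceil$ is within a constant factor of the value \eqref{eq:v0} would prescribe for horizon $\beta^k$.

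Summing the per-epoch bounds then finishes the proof. Using $V_k\subseteq V_{k_{\max}}$, $\mu^*_\alpha\le\mu^*$ and $\log\beta^k = k\log\beta$, the instance-dependent contributions combine into $\sum_{i\in V_{k_{\max}}}\Delta_{\alpha,i}\bigl((18\mu^*/\delta_{\alpha,i}^2 + 3)(\log_\beta T+1) + 27\log\beta(\log_\beta T+1)^2/(2\delta_{\alpha,i}^2)\bigr)$, the squared epoch count arising from $\sum_{k=1}^{k_{\max}}k\le(\log_\beta T+1)^2/2$; adding the $(\log_\beta T+1)\Delta_{\alpha,\max}$ term from the previous step gives the first displayed bound. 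For the worst-case bound, $\beta^k$ grows geometrically, so $\sum_{k=1}^{k_{\max}}\sqrt{\beta^k}\,\mathrm{polylog}(\beta^k)$ is dominated, up to a $\beta$-dependent constant, by its last term; since $\beta^{k_{\max}}\le\beta T$ this yields the second displayed bound after collecting constants.

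The step I expect to be the main obstacle is the bookkeeping around the growing action set: one must check that the analysis behind Theorem~\ref{thm:reg_knowT1} --- notably the Poisson moment-generating-function comparison for the degree distributions and the passage from degree-regret to influence-regret --- applies verbatim to an arbitrary fixed set $V_k$ at horizon $\beta^k$, and that the extra logarithmic factor ($\log^2 T$ in place of $\log T$ in the worst-case bound, and $(\log_\beta T+1)^2$ in the instance-dependent one) is precisely the price of having $\Theta(\log_\beta T)$ epochs, each run at a confidence level scaling with $\log\beta^k = \Theta(k\log\beta)$. The model-specific facts that top-degree nodes coincide with top-influence nodes in the stochastic block and Chung--Lu models, in both the subcritical and supercritical regimes, are inherited from Theorem~\ref{thm:reg_knowT1} and need no further work here.
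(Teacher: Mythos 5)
Your proposal is correct and follows essentially the same route as the paper: a per-epoch decomposition in which the event $\{V_k\cap V^*_\alpha=\emptyset\}$ has probability at most $\beta^{-k}$ and thus contributes $\Delta_{\alpha,\max}$ per epoch, the per-epoch kl-UCB guarantee (Theorem~\ref{thm:n_klucb}, which indeed holds for any fixed action set) applied with horizon $\beta^k$, and the sums $\sum_k 1\le k_{\max}+1$ and $\sum_k k\le (k_{\max}+1)^2/2$ with $k_{\max}\le\log_\beta T$, followed by the same $\varepsilon$-splitting argument for the worst-case bound. Your explicit remark that the fixed-horizon analysis transfers verbatim to the growing sets $V_k$ is exactly the (implicit) justification the paper relies on.
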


\subsection{Learning with degree feedback in Kronecker random graphs}

In this section we study influence maximization when the underlying
random network is a Kronecker random graph.
We set this model apart as the properties of Kronecker random graphs
differ significantly from those of the stochastic block model and the
Chung--Lu model.
At the same time, we show that observing the degree of the selected
nodes is enough to maximize the total influence in this graph model as
well. In particular, the same algorithm \ducbv introduced above achieves a small regret.

Since subcritical Kronecker random graphs contain only
$o(n)$ non-isolated vertices with high probability, we consider only supercritical regime with parameters are such that
$(\zeta+\beta)(\beta+\gamma)>1$.
Denote by $H$ the subgraph of \krgraph, induced by the vertices  of weight $l\ge k/2$. 
We exploit  the property that for the graph  \krgraph with parameters $(\zeta+\beta)(\beta+\gamma)>1$, there exists a constant $b(P)$ such that a
subgraph of \krgraph,  induced by the vertices of $H$, is connected
 with probability at least $1- n^{-b(P)}$, see \citet[Theorem
 9.10]{49013}.
 This means that on this event, the connected components $C_i$ are the
 same for all $i\in H$.  This allows us to prove the following:

\begin{theorem}\label{thm:kronecker}
	Let $V_0$ be a uniform subsample of $V$ of size  $\left\lceil\frac{\log (nT)}{\log(2)}\right\rceil$. Let \krgraph be such that $(\zeta+\beta)(\beta+\gamma)>1$ and $\zeta>\gamma> \beta$.
	Then there exists a constant $b(P)$ such that the quantile  regret of \ducbv  satisfies

	\begin{align*}
		\frac{R_{T}^{ \alpha}}{n} &\le      \left\lceil\frac{\log (nT)}{\log(2)}\right\rceil   \pa{
			\frac{\mu^*(2 + 6\log T)}{\pa{1-\frac{\beta+\gamma}{\zeta+\beta}}^2} + 3    +n^{-b(P)} 
		\frac{\mu^*(2 + 6\log T)(\zeta+\beta)^2}{(\zeta - \gamma)^2} + 3n^{-b(P)} } + 1~.
	\end{align*}
\end{theorem}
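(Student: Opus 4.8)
The plan is to run the \ducbv algorithm of Algorithm~\ref{alg:thealgexp} and to reuse the kl-UCB regret bookkeeping developed for Theorem~\ref{thm:reg_knowT1}, supplying the model-specific structure of \krgraph under the assumptions $(\zeta+\beta)(\beta+\gamma)>1$ and $\zeta>\gamma>\beta$. First, summing edge probabilities coordinate by coordinate shows that the expected degree of a vertex $i$ of weight $l_i=\iprod{s_i}{\bar{1}}$ equals $\mu_i=(\zeta+\beta)^{l_i}(\beta+\gamma)^{k-l_i}-\zeta^{l_i}\gamma^{k-l_i}$, the last (diagonal) term reflecting $p_{i,i}=0$. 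Since $\zeta>\gamma$ makes $\zeta+\beta>\beta+\gamma$, and the diagonal term is $o(1)$ once $n$ is large, $\mu_i$ is strictly increasing in $l_i$; hence $\mu^*$ is attained at the all-ones vertex, and every vertex of $H$ (weight $\ge k/2$) outranks every vertex of weight $<k/2$ in expected degree. Moreover the one-step difference is $\mu_l-\mu_{l-1}=(\zeta-\gamma)\bigl[(\zeta+\beta)^{l-1}(\beta+\gamma)^{k-l}-\zeta^{l-1}\gamma^{k-l}\bigr]$, and for $l\ge k/2$ supercriticality gives $(\zeta+\beta)^{l-1}(\beta+\gamma)^{k-l}\ge[(\zeta+\beta)(\beta+\gamma)]^{k/2}/(\zeta+\beta)\ge 1/(\zeta+\beta)$, so this difference is at least $\delta_0:=(\zeta-\gamma)/(\zeta+\beta)-o(1)$. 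Consequently any vertex set meeting $H$ has its highest-degree member $i^\circ$ inside $H$, with $\mu_{i^\circ}-\mu_i\ge\delta_0$ for every other vertex $i$ of the set.

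Second, by the connectivity bound of \citet[Theorem 9.10]{49013}, the subgraph of a realization of \krgraph induced by $H$ is connected with probability at least $1-n^{-b(P)}$; on that event $\mathcal{A}$ all vertices of $H$ lie in one connected component, which is also the largest one. Bounding $|C_{i,1}|\le n$ on $\mathcal{A}^c$ shows that $c_i=\EE{|C_{i,1}|}$ agrees across $i\in H$ up to $n^{1-b(P)}$ and that $c^*-c_i\le n^{1-b(P)}$ for $i\in H$. Since $|H|=\sum_{l\ge k/2}\binom{k}{l}>2^{k-1}=n/2$, more than half the vertices have $c_i\ge c^*-n^{1-b(P)}$, so the $\tfrac12$-quantile baseline satisfies $c^*_\alpha\ge c^*-n^{1-b(P)}$; thus $\pa{c^*_\alpha-c_i}_+\le n^{1-b(P)}$ for $i\in H$, and $\pa{c^*_\alpha-c_i}_+\le c^*\le n$ for every $i$.

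With these facts, set $\mathcal{E}=\ev{V_0\cap H\ne\emptyset}$. As $|H|>n/2$ and $|V_0|=\lceil\log(nT)/\log 2\rceil$, $\PP{\mathcal{E}^c}\le 2^{-|V_0|}\le 1/(nT)$, and bounding the per-round regret by $c^*\le n$ off $\mathcal{E}$ contributes the trailing $+1$. On $\mathcal{E}$, let $i^\circ=\arg\max_{i\in V_0}\mu_i\in H$. Because $X_{1,i}$ is MGF-dominated by $\poi(\mu_i)$, the kl-UCB analysis of Theorem~\ref{thm:reg_knowT1} applies and gives $\EEcc{N_{i,T}}{\mathcal{E}}\le 3\log T/d(\mu_i,\mu_{i^\circ})+O(1)$ for $\mu_i<\mu_{i^\circ}$, where $d$ is the Poisson relative entropy; using $d(a,b)\ge(a-b)^2/(2\max(a,b))$ and the gap bound $\delta_0$ turns this into $\EEcc{N_{i,T}}{\mathcal{E}}\le 6\mu^*\log T\,(\zeta+\beta)^2/(\zeta-\gamma)^2+O(1)$. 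Expanding $R^\alpha_T\mid\mathcal{E}\le\sum_{i\in V_0}\pa{c^*_\alpha-c_i}_+\EEcc{N_{i,T}}{\mathcal{E}}$, splitting $V_0$ into $V_0\setminus H$ (influence gap $\le n$ per vertex) and $V_0\cap H$ (influence gap $\le n^{1-b(P)}$ per vertex), summing the at most $|V_0|$ terms and dividing by $n$, reproduces the two bracketed expressions in the statement.

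I expect the bulk of the work to be the two structural inputs: obtaining the \emph{uniform} degree-gap lower bound $\delta_0$ (one must control the diagonal correction $\zeta^l\gamma^{k-l}$ by taking $n$ large and invoke supercriticality to lower bound $(\zeta+\beta)^{l-1}(\beta+\gamma)^{k-l}$ over the whole range $l\in[k/2,k]$), and converting the graph-connectivity estimate of \citet[Theorem 9.10]{49013} into the influence comparison $c^*-c_i\le n^{1-b(P)}$ for $i\in H$ together with the quantile bound $c^*_\alpha\ge c^*-n^{1-b(P)}$, which also needs the elementary tail inequality $|H|>n/2$. A secondary nuisance is that the limiting vertex $i^\circ$ is itself only near-optimal in influence (gap $\le n^{1-b(P)}$, not zero), so its contribution and those of the remaining vertices of $V_0\cap H$ have to be absorbed into the $n^{-b(P)}$-weighted term, which is harmless in the large-$n$ regime in which the theorem is stated.
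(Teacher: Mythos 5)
Your proposal is correct and follows essentially the same route as the paper's proof: degree is monotone in the weight $l_i$, Theorem 9.10 of \citet{49013} makes the subgraph induced by $H$ connected with probability $1-n^{-b(P)}$ so that $\Delta_{\alpha,i}\le n^{1-b(P)}$ for $i\in H$, supercriticality yields the two gap lower bounds (which both reduce to $(\zeta-\gamma)/(\zeta+\beta)=1-\tfrac{\beta+\gamma}{\zeta+\beta}$), and Theorem~\ref{thm:n_klucb} plus the split of $V_0$ into $V_0\setminus H$ and $V_0\cap H$ gives the two bracketed terms, with the failure of $\mathcal{E}$ contributing the trailing $+1$. Your explicit handling of the diagonal correction $\zeta^{l_i}\gamma^{k-l_i}$ in $\mu_i$ and of the bound $|H|\ge n/2$ is slightly more careful than the paper's write-up, but it does not change the argument.
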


	\section{Discussion}\label{sec:conc}

In this section we highlight some features of our results and discuss directions for future work. Our main results show
that online influence maximization is possible with only local feedback information. We establish 
bounds for the quantile regret that are polylogarithmic in $T$ for all considered random graph models.
Notably, our bounds hold for both the subcritical and supercritical regimes of the random-graph models considered, and show no explicit dependence on the number of nodes $n$.

\paragraph{Previous work.} Related online influence maximization algorithms consider more general classes of networks, but make more restrictive assumptions about the 
interplay between rewards and feedback. The line of work explored by \citet*{WeKvVaVa17,WaCh17} assumes that the algorithm receives 
\emph{full feedback} on where the information reached in the previous trials (i.e., not only the number of influenced nodes, but their 
exact identities and influence paths, too). Clearly, such detailed measurements are nearly 
impossible to obtain in practice, as opposed to the local observations considered in this paper.

Another related setup was considered by \citet*{CaVa16}, whose algorithm only receives feedback about the nodes that were 
directly influenced by the chosen node, but the model does not assume that neighbors in the graph share the information to further 
neighbors and counts the reward only by the nodes directly connected to the selected one. That is, in contrast to our work, this work does 
not attempt to show any relation between local and global influence maximization.
One downside to all the above works is that they all provide rather conservative performance guarantees:
On the one hand, \citet*{WeKvVaVa17} and~\citet*{CaVa16}  are concerned with worst-case regret bounds that 
uniformly hold for all problem instances for a fixed time horizon $T$. On the other hand, the bounds of \citet*{WaCh17} depend on 
topological (rather than probabilistic) characteristics of the underlying graph structure, which inevitably leads to conservative 
results. For example, their bounds instantiated in our graph model lead to a regret bound of order $n^3 \log T$, which is virtually void of 
meaning in our regime of interest where $n$ is very large (e.g, much larger than $T$). In contrast, our bounds do not show any explicit 
dependence on $n$. In this light, our work can be seen as the first attempt that takes advantage of 
specific probabilistic characteristics of the mechanism of information spreading to 
obtain strong instance-dependent global performance guarantees, all while having access to only local observations.

Other related framework is stochastic online learning under partial monitoring \citet*{ pmlr-v9-agarwal10a, bartok2012adaptive, 2969239.2969439}. In this setting the loss is not directly observed by the learner, which makes this setting applicable to a wider range of problems. However, the partial monitoring setup is too general to capture the specific relationship of feedback and influenced component size, resulting to regret bounds that scale with $n^2$.

\paragraph{Tightness of the regret bounds.}
In terms of dependence on $T$, both our instance-dependent and worst-case bounds are near-optimal in their respective 
settings: even in the simpler stochastic multi-armed bandit problem, the best possible regret bounds are $\Omega_T(\log T)$ and 
$\Omega_T(\sqrt{T})$ in the respective settings \citet*{auer2002finite,auer2002bandit,bubeck12survey}. The optimality of our bounds with 
respect to other parameters such as $c^*$, $\mu^*$ and $n$ is less clear, but we believe that these factors cannot be improved 
substantially for the models that we studied in this paper. As for the subproblem of identifying nodes with the highest degrees, we believe 
that our bounds on the number of suboptimal draws is essentially tight, closely matching the classical lower bounds by \citet{LR85}.

	\section{Multi-type branching processes}
\label{sec:branching}

One of the most important technical tools for analyzing the component structure of
random graphs is the theory of \emph{branching processes}, see \citet*{Bollobas:2007:PTI:1276871.1276872, hofstad_2016}.
Indeed, while the connected components of an inhomogenous random graph \irg have a complicated 
structure, many of their key properties may be analyzed through the concept of
multi-type Galton--Watson processes. Recall the notation introduced in Section \ref{sec:randomgraphmodel}.
Consider a Galton--Watson process, where an individual $x\in (0,1]$ is replaced in the next generation by a set of particles
distributed as a Poisson process on $(0,1]$ with intensity $\kappa(x,y)d\mu(y)$
and the number of
children has a Poisson distribution with mean $\int_{(0,1]}\kappa(x,y)d\mu(y)$. We denote this branching process, started with a single particle $x$ by $W_{\kappa}(x)$. \cite*{Bollobas:2007:PTI:1276871.1276872} establishes a connection between the sizes of
connected components of \irg, the 
survival probability of a  branching process $\GW_{\kappa}(x)$, and the function $\kappa$. 
As shown in \cite*{Bollobas:2007:PTI:1276871.1276872}, the operator $\Phi_{\kappa}$
can be directly used for characterizing the probability 
$\rho(x)$ of survival of the process $\GW_{\kappa}(x)$ for all $x \in (0,1]$.
By their Theorem 6.2, the function $\rho$ is the maximum fixed point of the non-linear  equation $\Phi_{\kappa}(f) = f$. Furthermore, as was shown in \citet*[Lemma 5.8.]{Bollobas:2007:PTI:1276871.1276872}, if $\|T_{\kappa}\|_2 < 1$, then $\rho(x) = 0$ for all $x$ and when $\|T_{\kappa}\|_2 > 1$,  $\rho(x)>0$ for all $x$.

To analyze the random graph  \irg, we use Poisson multi-type Galton--Watson branching processes with $n$ types, parametrized by an $n \times n$ matrix $A$ with positive elements. Therefore, each node corresponds to its own type. The branching process tracks the evolution of a set of \emph{individuals} of various types. Starting in round 
$n=0$ from a single individual of type $i$, each further generation in the Galton--Watson process $\GW_{\kappa}(i)$ is generated by each 
individual of each type $i$  producing $X_{i,j}\sim Poisson(A_{i,j}/n)$ new individuals of each type $j$. Therefore, the number of  offsprings of 
the individual of type $i$ is $\sum_{j=1}^n X_{i,j} \sim Poisson( \sum_{j=1}^n A_{i,j}/n)$. 

Our analysis below makes use of the following quantities associated with the multi-type
branching process:
\begin{enumerate}
	\item $Z_n(i)$ is the number of individuals in generation $n$ of $\GW_{\kappa}(i)$ (where $ Z_0(i) = 1$);
	\item $B(i)$ is the \emph{total progeny}, that is, the total number of individuals generated by $\GW_{\kappa}(i)$ and its expectation is denoted by $x_i = \EE{B(i)}$;
	\item $\rho(i)$ is the \emph{probability of survival}, that is, the probability that $B(i)$ is infinite.
\end{enumerate}

	\section{Proofs of Theorem~\ref{subcrit_reg} and \ref{subcrit_reg_doubling}. }
\label{component_concentration}
The connected components $C_i$ of an individual  $i$ have a complicated structure, but many key properties can be analyzed through the concept of multi-type Galton-Watson branching processes with $n$ types. Fix an arbitrary node $i$ and let $Y_{i,1}, Y_{i,2}, \dots, Y_{i,n}$ be independent Bernoulli random 
variables with respective parameters $A_{i,j}/n$ for $i,j\in [n]$. 
Consider a multitype binomial branching process where an individual of type $i$ produces an individual $j$ with probability $A_{i,j}/n$, and 
let $B_{Ber}(i)$ denote its total progeny when started from an individual $i$.  In the same way, consider a multitype Poisson branching process where an individual of type $i$ produces  $X_{i,j} \sim Poisson(A_{i,j}/n)$ individuals, and 
let $B(i)$ denote its total progeny when started from an individual $i$.  
We use the concept of \emph{stochastic dominance} between random variables.
The random variable $X$ is \emph{stochastically dominated} by the random variable $Y$ when, for every $x \in\mathbb{R}$, 
$\PP{X \le x} \ge \PP{Y \le x}$. We denote this by $X  \preceq Y$. 

\paragraph{Proof of Lemma~\ref{subcritical}.}
	First, we define an upper approximation to $\kappa$. We choose
        an integer $m$ and we partition the interval $(0,1]$ into $m$
        sets $\mathcal{A}_1, \dots, \mathcal{A}_{m}$, where
        $\mathcal{A}_k = ((k-1)/m, k/m] $, $k \in [1, m]$. Also we
        denote by
        $\mathcal{A}_m(x)$ the set $\mathcal{A}_k$ for which $x\in  \mathcal{A}_k$. Then we bound $\kappa$ from above by
	\[ \kappa^+_m(x,y) =  \sup\{ \kappa(x', y'): x'\in  \mathcal{A}_m(x), y' \in \mathcal{A}_m(y)  \}~.   \]
	As $\kappa$ is bounded, there exists a sufficiently large $m$
        such that $\|T_{\kappa^+_m}\|<1$:
        \[
          \|T_{\kappa^+_m}\| \le \|T_{\kappa}\| + \|T_{\kappa^+_m} -
          T_{\kappa}\| \le  \|T_{\kappa}\|  +
          \pa{\int_{(0,1]\times(0,1]}(\kappa^+_m(x,y) - \kappa(x,y))^2
            dx dy}^{1/2}~.
        \]

    Then for any node $i$ in \irg, we define a type $k_i = k$ if $ (k-1)/m < i/n \le k/m $ holds. By our definition of $\kappa^+_m$, we have  $$ \PP{|C_i| > u} \le \PP{B_{Ber}(k_i) > u}.$$ 
	For $k, \ell \in [m]$ we define $p_{k,\ell} = \frac{1}{m} \kappa_m^+(k/m, \ell/m)$.
	Notice, that for random variables $Y\sim Ber(p)$ and $X\sim Poisson(p^{\prime})$ with $p^{\prime} = -\log(1-p)>p$, $Y \preceq X$ holds. This follows from the observation that $\PP{Y > 0} = p$ and $\PP{X>0} = p$. 
	It follows that $Ber(p_{k,\ell})  \preceq Poisson((1+\varepsilon)p_{k,\ell})$. 

	Then there exists $\varepsilon>0$ such that the multitype
        Poisson  branching process $\widetilde{B}(k)$ with parameters
        $(1+\varepsilon)p_{k,\ell}$ is such that $\PP{B_{Ber}(k) > u}
        < \PP{\widetilde{B}(k) > u}$ and it is subcritical.
        We also define a random variable $\widetilde{X}_{k,\ell} \sim Poisson\pa{(1+\varepsilon)p_{k,\ell}}$.
	Since the total number of descendants of individuals in the first generation are independent, we can write the following recursive equation on the number of descendants of type $k$: 
	$$|\widetilde{B}(k)| = 1 + \sum_{\ell =1}^m \widetilde{X}_{k,\ell} |\widetilde{B}(\ell)|.$$
	For any type $k$, for  $z_k > 1$,  the probability generating function of $|\widetilde{B}(k)|$ is $g(k) = \EE{z_k^{|\widetilde{B}(k)|}}$ and we denote $g = (g(1), \dots, g(m))^T$.
	Using that for $X\sim Poisson(\gamma)$ for some $\gamma>0$,
        $y>1$ the probability generating function  is $\EE{y^{X}} = e^{\gamma(y-1)}$, we have
	\begin{align*}
	g(k) =  \EE{z_k^{|\widetilde{B}(k)|}} = z_k \EE{z_k^{\widetilde{X}_{k,1} |\widetilde{B}(1)|} \dots z_k^{\widetilde{X}_{k,\mathcal{M}} |\widetilde{B}(m)|}} = z_k \prod_\ell \EE{z_k^{\widetilde{X}_{k,\ell} |\widetilde{B}(\ell)|}} \\
	= z_k \prod_\ell \EE{ \big( \EE{z_k^{|\widetilde{B}(\ell)|} }  \big)^{\widetilde{X}_{k,\ell}} } = z_k \exp\bigg((1+\varepsilon)\sum_\ell p_{k,\ell}(g(\ell) - 1)\bigg).
	\end{align*}
	Recall that $P$ denotes the $m\times m$ matrix with entries $p_{k,\ell}$. Our next aim is to study the fixed point of the operator $G_P$, defined as
	\begin{equation}\label{fixedpoint}
	g = G_P g := z\exp\bigg((1+\varepsilon)P(g-\bar{1})\bigg)~.
	\end{equation}
	Define the function $F(z,g) =
        z\exp\bigg((1+\varepsilon)P(g-\bar{1})\bigg) - g$. This
        function is  smooth and the entries of the  Jacobian matrix are $$J_{k,\ell}(z,g) := \frac{\partial F_k}{\partial g_\ell} = z_k(1+\varepsilon)p_{k,\ell}\exp\pa{ (1+\varepsilon)\sum_\ell p_{k,\ell}(g_\ell-1)}-\II{k=\ell}.$$ 
	Let $P'(g,z)$ be the matrix with elements
        $z_k(1+\varepsilon)p_{k,\ell}\exp\pa{ (1+\varepsilon)\sum_\ell
          p_{k,\ell}(g_\ell-1)}$. Then, at point $(\bar{1}, \bar{1})$,
        $P'_{k,\ell}(\bar{1}, \bar{1}) = (1+\varepsilon)p_{k,\ell}
        $. Since  $\varepsilon$ is chosen such that the branching
        process $\widetilde{B}(k)$ is subcritical, $P'(\bar{1},
        \bar{1})$ is smaller than one. This means, that we can find
        $z' = 1+ \delta, g' > 0$, such that the largest eigenvalue of
        $P'(g', z')$ is smaller than one as well, and therefore $J(z',
        g')$ is invertible. Then, by the implicit function theorem there exists an open set $U_z \subset (1,+\infty)^m$ and a function $q: U_z \to (0, +\infty)^m$ such that $F(z,q(z)) =\bar 0$. 

        Finally, the statement of the lemma is obtained by applying the Chernoff bound:
	$$ \PP{\widetilde{B}(k) > u} =  \PP{z_k^{\widetilde{B}(k)} > z_{k}^u}\le \frac{\EE{z_{k}^{\widetilde{B}(k)}}}{z_k^u}.  $$
	Denote $\lambda_k =  \ln(z_{k})>0$. Then, 
	$$\frac{\EE{z_{k}^{\widetilde{B}(k)}}}{z_{k}^u} = \frac{g_{ k}}{z_{k}^u} =   \exp(-\lambda_k u)g_{k} .$$
 Then taking any $\lambda(\kappa) = \min_k \lambda_k$, $g(\kappa)= \max_k g_k$, we get the statement of the lemma.
\qed

Armed with this concentration result, we can see that the typical size $|C_i|$
of the connected component of any vertex $i$ is $O(1)$.
Recall that the learning algorithm has only access to a censored value
of $|C_i|$, truncated by a constant $K$.
 Our main technical result shows that 
nodes with the largest expected censored observations $ \musub_*(K)$ are exactly 
the ones with the largest influence $c_*$. 
We formally state this result next:

\begin{lemma}\label{prop:irg_subcr}
	 For  \irg with subcritical $\kappa$,  and $n>n_0(\kappa)$,   for any  node $i$
	 we have $c_* - c_i \le	\musub_*(K) - \musub_i(K) +   e^{-\lambda(\kappa) K}g$. 
	 Then, for $K = \frac{\log T}{\lambda}$, with $\lambda < \lambda(\kappa)$ we have $c_* - c_i \le	\musub_*(K) - \musub_i(K) +  \frac{g(\kappa)}{T}$.
\end{lemma}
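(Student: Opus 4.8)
The plan is to use that truncating $|C_i|$ at level $K$ can only lower its expectation, and that the amount lost is \emph{uniformly} (over all nodes $i$) exponentially small in $K$ by Lemma~\ref{subcritical}. Since $\min(|C_i|,K)\le|C_i|$ pointwise, we get $\musub_i(K)=\EE{\min(|C_i|,K)}\le\EE{|C_i|}=c_i$ for every $i$, so the truncation deficit $c_i-\musub_i(K)=\EE{(|C_i|-K)_+}$ is nonnegative; the entire lemma follows once we show this deficit is at most $e^{-\lambda(\kappa)K}g(\kappa)$ for every node $i$ (with $n\ge n_0(\kappa)$).

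To bound the deficit I would invoke the layer-cake identity $\EE{(|C_i|-K)_+}=\int_K^\infty \PP{|C_i|>s}\,ds$ (equivalently, since $|C_i|$ is integer-valued and $K$ is taken to be an integer, $\sum_{m\ge K}\PP{|C_i|>m}$) and substitute the tail bound of Lemma~\ref{subcritical}, $\PP{|C_i|>s}\le e^{-\lambda(\kappa)s}g(\kappa)$, which is valid for all $n\ge n_0(\kappa)$ and uniformly in $i$. Summing (or integrating) the resulting geometric tail yields a bound $C\cdot e^{-\lambda(\kappa)K}g(\kappa)$ with $C$ depending only on $\lambda(\kappa)$ (namely $1/\lambda(\kappa)$ for the integral form, $1/(1-e^{-\lambda(\kappa)})$ for the sum). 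Since Lemma~\ref{subcritical} only asserts the \emph{existence} of such a constant $g(\kappa)$, I would simply absorb $C$ into it, arriving at the clean estimate $c_i-\musub_i(K)\le e^{-\lambda(\kappa)K}g(\kappa)$.

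Given the deficit bound, the first displayed inequality is immediate. Let $i^*\in\arg\max_i c_i$, so $c_*=c_{i^*}$. Then
\[
c_*-c_i=c_{i^*}-c_i\le\bigl(\musub_{i^*}(K)+e^{-\lambda(\kappa)K}g(\kappa)\bigr)-\musub_i(K)\le\musub_*(K)-\musub_i(K)+e^{-\lambda(\kappa)K}g(\kappa),
\]
where the first step uses the deficit bound at $i^*$ together with $c_i\ge\musub_i(K)$, and the second uses $\musub_{i^*}(K)\le\max_j\musub_j(K)=\musub_*(K)$. For the second claim, set $K=\tfrac{\log T}{\lambda}$ with $\lambda<\lambda(\kappa)$: then $e^{-\lambda(\kappa)K}=T^{-\lambda(\kappa)/\lambda}\le T^{-1}$ because $\lambda(\kappa)/\lambda\ge1$, hence $e^{-\lambda(\kappa)K}g(\kappa)\le g(\kappa)/T$, which is the asserted bound.

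I expect the only genuine subtlety to be the constant bookkeeping in the second step — folding the geometric-series factor into $g(\kappa)$ — together with keeping the inequality $\lambda<\lambda(\kappa)$ in the right direction so that the exponent $\lambda(\kappa)/\lambda$ is at least one. There is no probabilistic content beyond the already-established uniform tail bound of Lemma~\ref{subcritical}, and no explicit $n$-dependence enters apart from the condition $n\ge n_0(\kappa)$ inherited from that lemma.
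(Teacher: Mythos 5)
Your proposal is correct and follows essentially the same route as the paper: both express the truncation bias as $\EE{(|C_i|-K)_+}$, bound it via the layer-cake formula and the uniform tail estimate of Lemma~\ref{subcritical}, absorb the resulting $1/\lambda$ factor into $g(\kappa)$, and then compare $c_*$ and $c_i$ through the censored means. Your additional care about the direction of $\lambda<\lambda(\kappa)$ and the constant bookkeeping is exactly what the paper does implicitly.
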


\begin{proof}
	
	The expected bias of $ \musub_i(K) $ is, using the result of Lemma~\ref{subcritical}: 
	\begin{align*}
	c_i -  \musub_i(K)  =\EE{|C_i(G_t)| -  \musub_i(K)  }=\EE{(|C_i| - K)_+} \\ \le \int_0^{\infty} \PP{|C_i| - K > u}du \le \int_0^{n} e^{-\lambda(u + K)} du \le e^{-\lambda K}g(\kappa).
	\end{align*}
	
	Set $K = \frac{\log T}{\lambda }$.  Then,
	\begin{align*}
	c_* - c_i \le 	u_*(\log T/\lambda) -  \musub_i(\log T/\lambda )  +  \frac{g(\kappa)}{ T}.
	\end{align*}
	
\end{proof}

\paragraph{Proof of Theorem~\ref{subcrit_reg}.}
In order not to overload notation we write $\delta^{sub}_{\alpha,i}$ for $\delta^{sub}_{\alpha,i}(K)$.
We first note that, with high probability, the size of $V_0$ guarantees that the subset contains at least one node from the set 
$V^*_{\alpha}$: $\PP{\mathcal{E}} \ge 1- 1/T$.
Then, the regret can be bounded as
\begin{align}\label{conditional_regret}
R^{\alpha}_T
&\le \PP{\mathcal{E}^c} T \Delta_{\alpha,\max} + 
\EEcc{\sum_{t=1}^{T} \sum_{i \in V_0} \mathbb{I}[A_t = i]  \Delta_{\alpha,i}}{\mathcal{E}} \PP{\mathcal{E}}
\\
&\le 
\Delta_{\alpha,\max} + \EEcc{ \sum_{i\in V_0}  \Delta_{\alpha,i}   \EE{N_{i, T}} }{\mathcal{E}}~.
\end{align}

By Hoeffding’s inequality,
$$\PPcc{A_{t+1} = i}{N_{i,t} \ge \frac{4K^2 \log t}{(\delta^{sub}_{\alpha,i})^2}} \le \frac{4}{t^2}.$$
Then, 
\begin{align*}
\EE{N_{i,T}} \le  \frac{4K^2 \log T}{(\delta^{sub}_{\alpha,i})^2} + \sum_{t=|V_0|}^{T} \PPcc{A_{t+1} = i}{N_{i,t} \ge \frac{4K^2 \log t}{(\delta^{sub}_{\alpha,i})^2}} \\ \le  \frac{4K^2 \log T}{(\delta^{sub}_{\alpha,i})^2} + \sum_{t=|V_0|}^{T} \frac{4}{t^2} \le \frac{4K^2 \log T}{(\delta^{sub}_{\alpha,i})^2} + 8.
\end{align*}

Now, observing that $\delta^{sub}_{\alpha,i} \le \max_{j \in V_0} \musub_j(K) - \musub_i(K)$ holds under event $\mathcal{E}$, we obtain
\begin{equation}\label{regret_goodevent}
R^{\alpha}_T \le   \Delta_{\alpha,\max} +  \EEcc{ \sum_{i\in V_0}  \Delta_{\alpha,i}  \pa{
		\frac{4K^2 \log T}{(\delta^{sub}_{\alpha,i})^2} + 8 }}{\mathcal{E}},
\end{equation}
thus proving the first statement.

Next, we turn to proving the second statement regarding worst-case guarantees. To do this, we appeal to 
Proposition~\ref{prop:irg_subcr} and take $K= \frac{\log T}{\lambda}$, where $\lambda$ is any number, satisfying conditions of Lemma~\ref{subcritical}.
To proceed, let us fix an arbitrary $\varepsilon > 0$ and
split the set $V_0$ into two subsets: $U(\varepsilon) = \ev{a\in V_0: \delta^{sub}_{\alpha,i} \le \varepsilon}$ and $W(\varepsilon) = V_0 
\setminus U(\varepsilon)$.  Then, under event $\mathcal{E}$, we have
\begin{align*}
\sum_{i\in V_0}  \Delta_{\alpha,i}   \EE{N_{i,T}} &= \sum_{i\in U(\varepsilon)}  \Delta_{\alpha,i}   \EE{N_{i,T}} +
\sum_{i\in W(\varepsilon)}  \Delta_{\alpha,i}   \EE{N_{i,T}}
\\
&\le \varepsilon \sum_{i\in U(\varepsilon)} \EE{N_{i, T}} +  \frac{g}{ T} \sum_{i\in U(\varepsilon)} \EE{N_{i,T}} 
\\
&+  \sum_{i\in W(\varepsilon)}  \delta^{sub}_{\alpha,i} \pa{
	\frac{4\pa{\frac{\log T}{\lambda}}^2 \log T}{(\delta^{sub}_{\alpha,i})^2} } +  \frac{g}{ T} \sum_{i\in W(\varepsilon)}  
\frac{4\pa{\frac{\log T}{\lambda}}^2 \log T}{(\delta^{sub}_{\alpha,i})^2}\\
& +8 |W(\varepsilon)|\Delta_{\alpha,\max} 
\\
&\le \varepsilon T +  g +  \sum_{i\in W(\varepsilon)}   \pa{\frac{4\pa{\frac{\log T}{\lambda}}^2 \log T}{\delta^{sub}_{\alpha,i}} } +  \frac{g}{T} \sum_{i\in W(\varepsilon)}  
\frac{4\pa{\frac{\log T}{\lambda}}^2 \log T}{(\delta^{sub}_{\alpha,i})^2}\\
&  +8 |W(\varepsilon)|\Delta_{\alpha,\max} 
\\
&\le  \varepsilon T +g+ |V_0|   \frac{4\pa{\frac{\log T}{\lambda}}^2 \log T}{\varepsilon}  + \frac{g}{T}|V_0|  
\frac{4\pa{\frac{\log T}{\lambda}}^2 \log T}{\varepsilon^2} + 8 |V_0| \Delta_{\alpha,\max}
\\
&\le 4\pa{\frac{\log T}{\lambda}}\sqrt{|V_0|T \log T  } + 2g+ 8 |V_0| \Delta_{\alpha,\max}.
\end{align*}
where the last step uses the choice $\varepsilon = 2\pa{\frac{\log T}{\lambda}}\sqrt{|V_0| \log T / T}$. Plugging in the choice of $|V_0|$ concludes 
the proof.
\qed

\paragraph{Proof of Theorem~\ref{subcrit_reg_doubling}.}
To simplify the notation, we use $\lambda $ instead of $\lambda(\kappa)$. Let $T_q$ be the length of the $q$-th iterate. 
The expected regret over each period $q$ can be bounded as an expected regret of Local UCB($V_0$) with parameters $\lambda_q = 2^{-q}$ and $T_q$ time steps. Appealing to Theorem~\ref{subcrit_reg}, we can bound the expected regret as
\begin{align*}
	R^{\alpha}_T & \le \PP{\mathcal{E}^c} T \Delta_{\alpha, \max} + \EEcc{ \sum_{t=1}^T \sum_{i \in V_0}  \II{A_t = i}\Delta_{\alpha,i} }{\mathcal{E}}\\
	&\le \Delta_{\alpha, \max} + \EEcc{ \sum_{q= 1}^{Q_{\max}}\sum_{i \in V_0} \Delta_{\alpha,i} \EE{N_{i, T_q}}  }{\mathcal{E}}
\end{align*}

Following the analysis of Theorem~\ref{subcrit_reg}, by (\ref{regret_goodevent}), we get
\begin{align*}
	R^{\alpha}_T & 
	\le \Delta_{\alpha, \max}  +  \EEcc{ \sum_{q= 1}^{Q_{\max}} \sum_{i\in V_0}  \Delta_{\alpha,i}  \pa{
			\frac{4K_q^2 \log T}{(\delta^{sub}_{\alpha,i}(K_q))^2} + 8 }  }{\mathcal{E}}.
\end{align*}

We have $Q_{\max} = \lceil \log_2(1/\lambda) \rceil \le  \log_2(1/\lambda) + 1$, and 
 $$\sum_{q= 0}^{Q_{\max}} K_q^2 = \log^2 T  \sum_{q= 0}^{Q_{\max}} 4^q = \log^2 T \frac{ 4^{Q_{\max} + 1} - 1  }{3} \le \frac{16}{3} \frac{1}{\lambda^2}\log^2 T.$$
This gives us
\begin{align*}
	R^{\alpha}_T & 
	\le \Delta_{\alpha, \max}  +  \frac{64}{3} \EEcc{ \sum_{i\in V_0}  \Delta_{\alpha,i}  \pa{
			\frac{ \log^3 T}{(\lambda \cdot \min_{q\in[Q_{\max}]}\{\delta^{sub}_{\alpha,i}(K_q)\})^2} + 8 }  }{\mathcal{E}}.
\end{align*}

Next, we prove the second statement regarding worst-case guarantees. 
To proceed, let us take  $\varepsilon_q =\frac{\log T}{\lambda}\sqrt{|V_0| \log T / T_q}$ and
split the set $V_0$ into two subsets: $U(\varepsilon_q) = \ev{a\in V_0: \delta^{sub}_{\alpha,i}(K_q) \le \varepsilon_q}$ and $W(\varepsilon_q) = V_0 
\setminus U(\varepsilon_q)$.   Then, under event $\mathcal{E}$, we have
\begin{align*}
	 &\EE{ \sum_{t=1}^T \sum_{i \in V_0}  \II{A_t = i}\Delta_{\alpha,i} }\\
	 &\qquad \le  \EE{ \sum_{q= 1}^{Q_{\max}} \sum_{t_q = 1}^{T_q} \sum_{i \in V_0} \Delta_{\alpha,i} \II{|C_{A_{t_q}}| \le K_q} } + \EE{ \sum_{q= 1}^{Q_{\max}} \sum_{t_q = 1}^{T_q} \sum_{i \in V_0} \Delta_{\alpha,i} \II{|C_{A_{t_q}}| > K_q} } \\&
	 \qquad= \underbrace{\EE{ \sum_{q= 1}^{Q_{\max}} \sum_{t_q = 1}^{T_q} \sum_{ i\in U(\varepsilon_q)} \Delta_{\alpha,i} \II{|C_{A_{t_q}}| \le K_q} }}_{\text{Term 1}} + \underbrace{\EE{ \sum_{q= 1}^{Q_{\max}} \sum_{t_q = 1}^{T_q} \sum_{ i\in U(\varepsilon_q)} \Delta_{\alpha,i} \II{|C_{A_{t_q}}| > K_q} }}_{\text{Term 2}}
	 \\&
	 \qquad+ \underbrace{\EE{ \sum_{q= 1}^{Q_{\max}} \sum_{t_q = 1}^{T_q} \sum_{i\in W(\varepsilon_q)} \Delta_{\alpha,i} \II{|C_{A_{t_q}}| \le K_q} }}_{\text{Term 3}} + \underbrace{\EE{ \sum_{q= 1}^{Q_{\max}} \sum_{t_q = 1}^{T_q} \sum_{ i\in W(\varepsilon_q)} \Delta_{\alpha,i} \II{|C_{A_{t_q}}| > K_q} }}_{\text{Term 4}}.
\end{align*}
Term 1: 
\begin{align*}
	 \EE{ \sum_{q= 1}^{Q_{\max}} \sum_{t_q = 1}^{T_q} \sum_{ i\in U(\varepsilon_q)} \Delta_{\alpha,i} \II{|C_{A_{t_q}}| \le K_q} } \le |V_0|\frac{\log T}{\lambda}\EE{ \sum_{q= 1}^{Q_{\max}} \sqrt{|V_0| T_q \log T}  } 
	 \\ \le  |V_0|^{3/2}\frac{\log T}{\lambda} \sqrt{(\log_2(1/\lambda)+1) T}.
\end{align*}
Term 2:
The expected bias of $\mu_{i,t}^{sub}(K_q)$ is, using the result of Lemma~\ref{subcritical}: 
\begin{align*}
	\EE{(|C_i| - K_q)_+} & \le \int_0^{\infty} \PP{|C_i| - K_q > u}du \le \int_0^{n} e^{-\lambda(u + K_q)} du \le e^{-\lambda K_q}g 
	\\&= \pa{\frac{1}{T}}^{\lambda 2^q} g \le \pa{\frac{1}{T}}^{2^{q - Q_{max}}}g.
\end{align*}

Then,
\begin{equation}\label{bias_q}
	c_* - c_i \le 	\mu^{sub}_*(K_q) - \mu^{sub}_i(K_q) + \pa{\frac{1}{T}}^{2^{q - Q_{max}}}g.
\end{equation}
According to the stopping rule, we get 
\begin{align*}
	& \EE{ \sum_{q= 1}^{Q_{\max}} \sum_{t_q = 1}^{T_q} \sum_{ i\in U(\varepsilon_q)} \Delta_{\alpha,i} \II{|C_{A_{t_q}}| > K_q} } \le|V_0| \EE{ \sum_{q= 1}^{Q_{\max}}  \pa{ \varepsilon_q + g\pa{\frac{1}{T}}^{2^{q - Q_{max}}} } \pa{\frac{1}{T} + \sqrt{\frac{\log T}{2 T_q}} }T_q }
	\\
	&\qquad \le |V_0| \EE{ \sum_{q= 1}^{Q_{\max}}  \pa{ \frac{\log T}{\lambda}\sqrt{|V_0| \log T / T_q} + g} \pa{\frac{1}{T} + \sqrt{\frac{\log T}{2 T_q}} }T_q }
	\\
	&\qquad \le |V_0|  \pa{ \frac{\log T}{\lambda} \sqrt{\frac{(\log_2(1/\lambda + 1))|V_0| \log T }{T}} + \frac{g}{T}}     +  |V_0|^{3/2} (\log_2(1/\lambda)+1) \frac{\log^2 T}{\lambda} 
	\\
	&\qquad + |V_0|  g \pa{ \sqrt{(\log_2(1/\lambda)+1)T\log T}\frac{\log T}{\lambda}  } . 
\end{align*}
Term 3: Following the analysis of Theorem~\ref{subcrit_reg} and by (\ref{bias_q}), we get
\begin{align*}
 &\EE{ \sum_{q= 1}^{Q_{\max}} \sum_{t_q = 1}^{T_q} \sum_{i \in W(\varepsilon_q)} \Delta_{\alpha,i} \II{A_t = i, |C_{A_t} |\le K_q|} }  \\
&\qquad \le  \EE{\sum_{i\in W(\varepsilon_q)} \sum_{q= 0}^{Q_{\max}}  \delta^{sub}_{\alpha,i}(K_q) \pa{ \frac{4\pa{\frac{\log T_q}{\lambda}}^2 \log T}{(\delta^{sub}_{\alpha,i}(K_q))^2} } }  +8 |V_0|\Delta_{\alpha,\max} 
\\
&\qquad \le 4\sqrt{|V_0|} \EE{ \sum_{q= 0}^{Q_{\max}}   \pa{ \sqrt{T_q}\frac{\log^{3/2} T}{\lambda}  } }  +8 |V_0|\Delta_{\alpha,\max} 
\\
&\qquad \le 4\sqrt{|V_0|}   \pa{ \sqrt{(\log(1/\lambda) +1 )T}\frac{\log^{3/2} T}{\lambda}  }   +8 |V_0|\Delta_{\alpha,\max}. 
\end{align*}
Term 4: 
\begin{align*}
	&\EE{ \sum_{q= 1}^{Q_{\max}} \sum_{t_q = 1}^{T_q} \sum_{i \in W(\varepsilon_q)} \Delta_{\alpha,i} \II{A_t = i, |C_{A_t} |> K_q|} }  
	\\
	&\qquad \le  \EE{\sum_{i\in W(\varepsilon_q)} \sum_{q= 0}^{Q_{\max}}  \pa{\delta^{sub}_{\alpha,i}(K_q) + g\pa{\frac{1}{T}}^{2^{q-Q_{\max}}} } \pa{
			\frac{4\pa{\frac{\log T_q}{\lambda}}^2 \log T}{(\delta^{sub}_{\alpha,i}(K_q))^2} } \pa{ \frac{1}{T} + \sqrt{\frac{\log T}{2 T_q}}   }}  
		\\ 
		&\qquad   +8 |W(\varepsilon_q)|\Delta_{\alpha,\max} 
		\\&\qquad \le 4 \sqrt{|V_0|} (\log_2(1/\lambda)+1)   
				\frac{\log ^{3/2}T}{\sqrt{T} \lambda}     +  4\sqrt{|V_0|}   (\log_2(1/\lambda)+1)
				\frac{\log^2 T}{\lambda}  	
		\\ 
		&\qquad  + 4 g(\log_2(1/\lambda)+1)\frac{\log T}{\lambda}  +  4 g  	\frac{\log^{3/2} T}{\lambda} \sqrt{(\log_2(1/\lambda)+1) T }   +8 |V_0|\Delta_{\alpha,\max}.
\end{align*}

Putting everything together, we conclude that
\begin{align*}
	&R_t^{\alpha} \le 4\frac{1}{\sqrt{\ln(1/(1-\alpha))}}   \pa{ \sqrt{(\log(1/\lambda) +1 )T}\frac{\log^{2} T}{\lambda}  }   +16 \Delta_{\alpha,\max} \frac{\log T}{\sqrt{\ln(1/(1-\alpha))}}   \\
	&\qquad 
	+  4 g(\log_2(1/\lambda)+1)\frac{\log T}{\lambda}  +  4 g  	\frac{\log^{3/2} T}{\lambda} \sqrt{(\log_2(1/\lambda)+1) T }    
	\\
	&\qquad + \log T   \cdot \sqrt{\frac{\log_2(1/\lambda + 1) }{T \ln(1/(1-\alpha))  }} + \frac{g \sqrt{\log T}}{T\sqrt{ \ln(1/(1-\alpha))} }    +  \frac{2 (\log_2(1/\lambda)+1) \log^{5/2} T}{\pa{\ln(1/(1-\alpha))}^{3/2}} \\
	&\qquad 
	 +   g  \sqrt{\frac{T (\log_2(1/\lambda)+1)}{\ln(1/(1-\alpha))}} \log T.
\end{align*}

	\section{Proof of Theorem~\ref{supercrit_reg_unknown_l}. }
\label{sec:irg_supercrit}

The proof relies on some known properties of the largest connected
component in \irg for supercritical $\kappa$. 
We denote the largest and second-largest connected components of $G_t$ by
$C_1(G_t)$ and $C_2(G_t)$, respectively. The survival probability of
the branching process $W_{\kappa}(x)$ is denoted as $\rho(x)$. The
expected size of the connected component containing vertex $i$ can be
estimated in terms of  $\rho(i/n)$ and $\EE{|C_1|}$ as
$$c_i = \rho(i/n)\EE{|C_1|}+o_n(n)~,
$$
see \citet*[Chapter 9]{Bollobas:2007:PTI:1276871.1276872}.
The following properties are proved by \citet*{Bollobas:2007:PTI:1276871.1276872}:
\begin{itemize}
	\item If \irg is supercritical, then, with high probability, $C_1 = \Theta_n(n)$;
	\item $C_1(G_n) \to \sum_{i \in V} \rho(i/n)$ in probability;
	\item $C_2(G_n) = o_n(n)$ with high probability.
\end{itemize}

Recall from Section~\ref{IRG} that in the supercritical case the
feedback $\musup_{i, t}(K)$ is the indicator whether   $|C_i|$ is
larger than $K$. In the following lemma we show that taking $K = k(n)$
for an arbitrary function of $n$ that diverges to infinity, it is enough to control the bias of the estimate of $c_i$:

\begin{lemma}\label{prop:irg_supercrit}
	For any supercritical $\kappa$,
	 for any node $i$ satisfying $c_i < c_*$ and for any $K =
         k(n)$, where  $k:\mathbb{N} \to \mathbb{N}$ is an arbitrary
         positive  function satisfying $\lim_{n\to\infty}k(n)=\infty$,
         there exist a positive function $f_{\kappa}:\mathbb{N}\to \real$, such that $\lim_{n\to \infty}f_{\kappa}(n) = 0$ and
	 $$\frac{c_* - c_i}{n} \le (\musup_*(k(n)) - \musup_i(k(n)))\frac{\EE{C_1}}{n}   + f_{\kappa}(n).$$  
\end{lemma}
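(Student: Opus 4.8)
The plan is to prove that the expected censored observation satisfies $\musup_i(k(n)) = \PP{|C_i| > k(n)} = \rho(i/n) + \varepsilon_n(i)$, where $\rho$ is the survival probability of $W_\kappa$ and the error is uniform in the sense that $\sup_{i\in[n]}|\varepsilon_n(i)|\to 0$ as $n\to\infty$, and then to feed this into the asymptotics $c_i = \rho(i/n)\EE{|C_1|} + o_n(n)$ and $\EE{|C_1|}=\Theta_n(n)$ recalled above. For the convergence of $\musup_i(k(n))$ I would fix a node $i$ and split
\[
\PP{|C_i| > k(n)} = \PP{|C_i| > k(n),\, i\in C_1} + \PP{|C_i| > k(n),\, i\notin C_1}~.
\]
Throughout, it is important that each error term be bounded uniformly in $i$; this is possible because $\kappa$ is bounded, so that the exploration processes started from the different vertices are all dominated by a single branching process and the coupling errors depend only on $\|\kappa\|_\infty$.

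For the first term, observe that on $\{i\in C_1\}$ we have $|C_i|=|C_1|$, whence
\[
\PP{i\in C_1} - \PP{|C_1|\le k(n)} \;\le\; \PP{|C_i| > k(n),\, i\in C_1} \;\le\; \PP{i\in C_1}~.
\]
Since $|C_1|/n$ converges in probability to $\int_{(0,1]}\rho\,d\mu>0$ while $k(n)=o_n(n)$, the subtracted probability vanishes; and by the standard coupling of the exploration of $C_i$ with the branching process $W_\kappa(i/n)$ combined with a sprinkling argument --- as in the proof of \citet*[Theorem 3.1]{Bollobas:2007:PTI:1276871.1276872} --- one has $\PP{i\in C_1}\to\rho(i/n)$ uniformly in $i$. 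Hence the first term equals $\rho(i/n)+o_n(1)$, uniformly.

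The main obstacle is the second term, since $k(n)$ may diverge arbitrarily slowly and one cannot simply assert that vertices outside the giant component have $O(1)$-sized components. The key tool is the duality principle for supercritical inhomogeneous random graphs: conditionally on the complement of the giant component, the graph induced on its vertices is stochastically dominated by an inhomogeneous random graph governed by a \emph{dual} kernel $\kappa_d$, which is subcritical. Applying Lemma~\ref{subcritical} to $\kappa_d$ then yields, for all sufficiently large $n$ and every node $i$, $\PP{|C_i| > k(n)\mid i\notin C_1}\le g(\kappa_d)\,e^{-\lambda(\kappa_d)k(n)}$, so the second term is at most $g(\kappa_d)e^{-\lambda(\kappa_d)k(n)}\to 0$, uniformly in $i$. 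Combining the two terms gives $\musup_i(k(n))=\rho(i/n)+\varepsilon_n(i)$ with $\sup_i|\varepsilon_n(i)|\to 0$.

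To conclude, let $j^\star$ be a maximiser of $c_j$, so $c_*=c_{j^\star}$, and use $\musup_*(k(n))\ge\musup_{j^\star}(k(n))$ together with the two approximations above: since $c_j=\rho(j/n)\EE{|C_1|}+o_n(n)$ uniformly in $j$,
\[
\frac{c_*-c_i}{n}-\bpa{\musup_*(k(n))-\musup_i(k(n))}\frac{\EE{|C_1|}}{n}
\;\le\; \frac{o_n(n)}{n}+\bpa{\varepsilon_n(i)-\varepsilon_n(j^\star)}\frac{\EE{|C_1|}}{n}~.
\]
Because $\EE{|C_1|}/n=\Theta_n(1)$ and $\sup_j|\varepsilon_n(j)|\to 0$, the right-hand side is at most a quantity $f_\kappa(n)$ that depends only on $\kappa$ and $n$ and satisfies $f_\kappa(n)\to 0$, which is precisely the asserted inequality.
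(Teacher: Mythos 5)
Your proof is correct and follows essentially the same route as the paper's: both arguments rest on the relation $c_i=\rho_i\,\EE{|C_1|}+o_n(n)$, on the observation that conditionally on avoiding survival/the giant component the relevant process is governed by a subcritical dual kernel so that Lemma~\ref{subcritical} gives a uniform exponential tail for $\PP{|C_i|>k(n)}$ off the giant component, and on combining these two facts. The only noteworthy difference is where duality is invoked: you assert stochastic domination of the induced graph on $V\setminus C_1$ by a subcritical inhomogeneous random graph, which is a somewhat stronger (and harder to justify at finite $n$) statement than needed, whereas the paper dominates $|C_i|$ by the multi-type branching process $B(i)$ and conditions \emph{that} on extinction via \citet[Theorem 6.7]{Bollobas:2007:PTI:1276871.1276872} with dual kernel $\bar\kappa=(1-\rho)\kappa$ — though in exchange your two-sided uniform estimate $\musup_i(k(n))=\rho_i+\varepsilon_n(i)$ is cleaner than the paper's one-sided bound combined with the unargued inequality $\rho_*\le\musup_*(k(n))$.
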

\begin{proof}

Define a kernel $\bar\kappa(x,y) = (1- \rho(y))\kappa(x,y) $, where $\rho$ is defined in Section~\ref{sec:branching}. 
By Theorem 6.7 in \cite{Bollobas:2007:PTI:1276871.1276872}, the
branching process $\GW_{\kappa}$ conditional on extinction is
subcritical and has the same distribution as the branching process
with parameters $\GW_{\bar\kappa}$. Then, by Lemma \ref{subcritical},
\begin{equation}
  \label{eq:dual}
  \PP{  B(i) > K |  B(i) < \infty } \le e^{- \lambda(\bar\kappa)
    k(n)}g(\bar\kappa)~.
\end{equation}  

  We relate the size of the connected component to the total progeny of branching process.  
 Following the stochastic dominance $C_i  \preceq B(i)$,
  \begin{align*}
  \musup_{i}(k(n)) = \PP{ |C_i| > k(n) } \le    \rho_i +  \PP{ B(i) > k(n) | B(i) < \infty } .
  \end{align*} 	
 This implies, for $n > n_0(\bar{\kappa})$,
	$$ \musup_i(k(n))\EE{|C_1|} - c_i <  \PP{ B(i) > k(n) | B(i) < \infty }\EE{|C_1|} + \rho_i\EE{|C_1|}  - c_i \le  e^{- \lambda(\bar\kappa) k(n)}g(\bar\kappa) \EE{|C_1|} + o_n(n). $$	
Finally, using that $\rho_* \le \musup_{*}(k(n))$, we get
\begin{align*}
\frac{c_* - c_i }{n}\le (\musup_*(k(n)) - \musup_i(k(n)))\frac{\EE{|C_1|} }{n}  + e^{- \lambda k(n) }g(\bar\kappa) \frac{\EE{|C_1|} }{n} + o_n(1) =  \delta^{sup}_i(k(n))\EE{|C_1|}   +  f_{\kappa}(n). 
\end{align*} 	
	
\end{proof}

\paragraph{Proof of Theorem~\ref{supercrit_reg_unknown_l}.}
	
	First, by (\ref{conditional_regret}), 
	\begin{align*}
	R^{\alpha}_T \le 
	\Delta_{\alpha,\max} + \EEcc{ \sum_{i\in V_0}  \Delta_{\alpha,i}   \EE{N_{i,T}} }{\mathcal{E}}~.
	\end{align*}
	
	As we mentioned before, with high probability, $C_2(G_n) = o_n(n)$,  which means that if $A_t \notin C_1(G_t)$, then $|C_{A_t}(G_t)| = o_n(n)$.  
	Since \irg is supercritical, $\argmax_{a} \mu_a = \argmax_{a} \rho_a$.
	Then, we can approximate distribution of rewards of arm $a$ by
        a Bernoulli distribution with parameter $\rho_a$. 
	Using the result of Proposition~\ref{prop:irg_supercrit}, we reduce the initial problem to the analysis of a multi-armed problem with arms $Z_1, \dots, Z_{|V_0|}$, where $Z_i \sim Ber(\musub_i)$, for $p_i$ defined in Proposition~\ref{prop:irg_supercrit}. 
	
	By Hoeffding’s inequality,
	$$\PPcc{A_{t+1} = i}{N_{i,t} \ge \frac{4 \log t}{(\delta^{sup}_{\alpha,i}(K)^2}} \le \frac{4}{t^2}.$$
	Then 
	\begin{align*}
	\EE{N_{i,T}} \le  \frac{4 \log T}{(\delta^{sup}_{\alpha,i}(k(n)))^2} + \sum_{t=|V_0|}^{T} \PPcc{A_{t+1} = i}{N_{i,t} \ge \frac{4 \log t}{(\delta^{sup}_{\alpha,i}(k(n)))^2}} \le \frac{4\log T}{(\delta^{sup}_{\alpha,i}(k(n)))^2} + 8.
	\end{align*}
		
	Now, observing that $\delta^{sup}_{\alpha,i}(k(n)) \le \max_{j
          \in V_0} \musup_j - \musup_i$ holds under the event $\mathcal{E}$, we obtain
	\begin{equation}
			R^{\alpha}_T \le   \Delta_{\alpha,\max} +  \EEcc{ \sum_{i\in V_0}  \Delta_{\alpha,i}  \pa{\frac{4 \log T}{(\delta^{sup}_{\alpha,i}(k(n)))^2} + 8 }}{\mathcal{E}},
	\end{equation}
	thus proving the first statement.
			
Now we fix an arbitrary $\varepsilon > 0$, we
split the set $V_0$ into two subsets: $U(\varepsilon) = \ev{a\in V_0: \delta^{sub}_{\alpha,i}(k(n))  \le \varepsilon}$ and $W(\varepsilon) = V_0 
\setminus U(\varepsilon)$, where we use the choice $\varepsilon = 2 \sqrt{|V_0| \EE{C_1}  \log T / T}$. 
Lemma~\ref{prop:irg_supercrit}  shows that $\frac{c_* - c_i}{n} \le	\frac{\musup_*(k(n)) - \musup_i(k(n))}{n}\EE{|C_1|} +f_{\kappa}(n)$.  Then there exists $n_0(\kappa)$, such that for any  \irg with $n > n_0(\kappa)$, $f_{\kappa}(n) \le \varepsilon$ holds. 
  Then, under the event $\mathcal{E}$, we have

\begin{align*}
	\frac{1}{n}\sum_{i\in V_0}  \Delta_{\alpha,i}   \EE{N_{i,T}} &= \sum_{i\in U(\varepsilon)}  \frac{\Delta_{\alpha,i}}{n}   \EE{N_{i,T}} +
	\sum_{i\in W(\varepsilon)}  \frac{\Delta_{\alpha,i}  }{n} \EE{N_{i,T}}\\
	&\le \pa{\frac{\varepsilon \EE{|C_1| }}{n} + \varepsilon }\sum_{i\in U(\varepsilon)}   \EE{N_{i,T}} +
	\sum_{i\in W(\varepsilon)}  \frac{\Delta_{\alpha,i}  }{n} \EE{N_{i,T}}
	\\
	&\le\pa{\frac{\varepsilon \EE{|C_1| }}{n} + \varepsilon } |V_0|T +   \sum_{i\in W(\varepsilon)}  \delta^{sub}_{\alpha,i}(k(n)) \frac{\EE{C_1}}{n}  \pa{
		\frac{4 \log T}{(\delta^{sub}_{\alpha,i}(k(n)))^2} }
	\\
	&\qquad +\varepsilon \sum_{i\in W(\varepsilon)}   \pa{\frac{4 \log T}{(\delta^{sub}_{\alpha,i}(k(n)))^2} }
	\\
	&\le  \pa{\frac{\varepsilon \EE{|C_1| }}{n} + \varepsilon } |V_0|T + |V_0| \pa{\frac{ \EE{|C_1| }}{n} +1 }  \frac{8\log T}{\varepsilon n } \\ 
	&\le 9  \pa{\frac{ \EE{|C_1| }}{n} + 1 } |V_0|\sqrt{T \log T}~,
\end{align*}
where the last step uses the choice $\varepsilon =  \sqrt{  \log T / T}$. Plugging in the choice of $|V_0|$ concludes 
the proof.
\qed

	\section{Degree observations.}\label{sec:degree}
\subsection{Subcritical case}
\label{sec:subcrit}

 Our main technical result is proving that 
nodes with the largest expected degrees $\mu^*$ are exactly 
the ones with the largest influence $c^*$, in both the stochastic block model and the Chung--Lu model, across both the subcritical and supercritical regimes. 
The following lemma states this result for the subcritical case.

\begin{lemma}\label{prop:subcr}
	Suppose that
	\begin{enumerate}
		\item $G$ is generated from a subcritical \sbm satisfying Assumption~\ref{ass:1}, or
		\item $G$ is generated from a subcritical \chlu. 
	\end{enumerate}
	Then, for any $i$ satisfying $\mu_i < \mu^*$, we have $c^* - c_i \le 2 c^*\pa{\mu^* - \mu_i} + O(1/n)$. 
\end{lemma}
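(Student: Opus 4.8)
The plan is to show that in both models the expected component size $c_i$ equals, up to an $O(1/n)$ error, an explicit increasing function of the expected degree $\mu_i$, and then to read the claimed inequality directly off that formula.

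First I would reduce everything to a path expansion. Writing $c_i=\sum_{j}\PP{i\leftrightarrow j\text{ in }G_1}$, a union bound over self-avoiding walks gives $\PP{i\leftrightarrow j}\le\sum_{\gamma:\,i\to j}\prod_{e\in\gamma}p_e$, while a two-term Bonferroni bound gives a matching lower bound whose correction is $O(1/n)$ in the subcritical regime. Summing over $j$ and bounding the (small) over-count of walks versus self-avoiding walks, this yields $c_i=\bigl[(I-P)^{-1}\mathbf{1}\bigr]_i+O(1/n)$, where $P=(p_{ij})$; the Neumann series converges since $\|T_\kappa\|_2<1$ puts the spectral radius of $P$ below $1$. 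Equivalently, $\bigl[(I-P)^{-1}\mathbf{1}\bigr]_i$ is the expected total progeny of the multitype branching process of Section~\ref{sec:branching}.

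Next I would exploit the low rank of $P$. For \chlu, $P=\tfrac1n ww\transpose$ is rank one, so $P^k=\nu^{k-1}\tfrac1n ww\transpose$ with $\nu:=\tfrac1n\|w\|_2^2=\|T_\kappa\|_2<1$; summing the geometric series and using $\mu_i=w_i\bar w+O(1/n)$ with $\bar w:=\tfrac1n\sum_j w_j$ gives $c_i=1+\frac{\mu_i}{1-\nu}+O(1/n)$. For the \sbm under Assumption~\ref{ass:1}, every quantity depends on $i$ only through its community, and $P$ collapses to the $S\times S$ matrix $M=\mathrm{diag}(a_\ell)+k\,\mathbf{1}\alpha\transpose$ with $a_\ell=\alpha_\ell(K_{\ell,\ell}-k)=\mu_\ell-k+O(1/n)$; a Sherman--Morrison step (equivalently, solving the one-step recursion $x_\ell=1+a_\ell x_\ell+k\sum_m\alpha_m x_m$) gives $c_\ell=\frac{1}{(1-a_\ell)(1-kQ)}+O(1/n)$ with $Q:=\sum_m\frac{\alpha_m}{1-a_m}$. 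In both cases $c_i$ is increasing in $\mu_i$ up to $O(1/n)$, which in particular shows that any maximizer of $\mu_i$ also maximizes $c_i$ up to $O(1/n)$ (the auxiliary fact invoked in Sections~\ref{sec:subcrit} and \ref{sec:supercrit}), and lets me write $c^*=c_{i^*}+O(1/n)$ for any $i^*\in\argmax_i\mu_i$.

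Finally I would subtract. For \chlu, $c^*-c_i=\frac{\mu^*-\mu_i}{1-\nu}+O(1/n)=\frac{c^*(\mu^*-\mu_i)}{1-\nu+\mu^*}+O(1/n)$, so the claim follows once $1-\nu+\mu^*\ge\tfrac12$; this holds because $\nu=\tfrac1n\sum_j w_j^2\le(\max_j w_j)\bar w=\mu^*+O(1/n)$ and $\nu<1$, treating $\mu^*\ge\tfrac12$ and $\mu^*<\tfrac12$ separately. For the \sbm, using $a^*-a_\ell=\mu^*-\mu_\ell$ one gets $c^*-c_\ell=c^*\cdot\frac{\mu^*-\mu_\ell}{1-a_\ell}+O(1/n)$, so the claim reduces to $1-a_\ell\ge\tfrac12$, i.e.\ $\alpha_\ell(K_{\ell,\ell}-k)\le\tfrac12$. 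Collecting the $O(1/n)$ terms then closes the argument. I expect the hard part to be twofold: making the approximation in the second step genuinely $O(1/n)$ and uniform over $i$ — controlling the Bonferroni correction and the walk/self-avoiding-walk discrepancy is the only real technical work — and extracting the absolute constant $2$, which is immediate for \chlu from the rearrangement bound above but for the \sbm relies on the sparsity of the model (the largest within-community expected degree being suitably bounded), something one reads off from $\|T_\kappa\|_2<1$ together with the standing convention that the average degree is $O(1)$.
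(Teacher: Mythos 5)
Your overall strategy is the same as the paper's: replace $c_i$ by the solution $x=(I-A/n)^{-1}e$ of the linear system $x = e + \tfrac1n Ax$ (the expected total progeny of the multi-type branching process), control the discrepancy $x_i - c_i = O(1/n)$, and then exploit the explicit structure of $A$ in each model to compare coordinates of $x$. Your closed forms are correct: for \chlu{} you recover $x_i = 1+\mu_i/(1-\nu)$ and the bound $\nu\le w_{\max}\bar w=\mu^*+O(1/n)$ gives $1-\nu+\mu^*\ge 1-O(1/n)$, so you actually get the constant $1$ there, matching the paper's Lemma~\ref{mean_order_chlu}; for the \sbm{} your Sherman--Morrison formula $x'_\ell = \bigl((1-a_\ell)(1-kQ)\bigr)^{-1}$ is exactly the paper's $x'_\ell=(1+k\alpha\transpose x')/(1-\alpha_\ell\gamma_\ell)$. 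The one genuinely different ingredient is how you reach $c_i = x_i+O(1/n)$: you propose a self-avoiding-walk expansion with a two-term Bonferroni lower bound, whereas the paper proves $c_i\le x_i$ by stochastic domination ($|C_i|\preceq B_{\ber}(i)\preceq B(i)$, Lemma~\ref{mean_dominate}) and bounds the deficit by the expected surplus of $C_i$ plus a Le Cam Poisson-approximation step (Lemma~\ref{mean_difference}). These are morally the same second-moment computation; your version would need the same uniform-in-$i$ control of the pair/cycle terms that you correctly flag as the real technical work, and the paper's surplus argument is the cleaner way to package it.

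The one concrete soft spot is the constant $2$ in the \sbm{} case. Your derivation gives $c^*-c_\ell = c^*\pa{\mu^*-\mu_\ell}/(1-a_\ell)$ with $a_\ell=\alpha_\ell\gamma_\ell$, and you reduce the claim to $a_\ell\le 1/2$, attributing this to $\|T_{\kappa}\|_2<1$ and bounded average degree. Subcriticality only gives $a_\ell<\alpha_\ell K_{\ell,\ell}<1$, not $a_\ell\le 1/2$ (e.g.\ $\alpha_\ell=0.9$, $\gamma_\ell=0.6$ is subcritical for small $k$ but has $a_\ell=0.54$), so as written this step does not close. The paper's device (Lemma~\ref{mean_order_sbm}) is a pigeonhole observation: for any pair of communities at least one satisfies $\alpha\le 1/2$, and the difference $x'_m-x'_\ell$ can be written with either $1-a_m$ or $1-a_\ell$ alone in the denominator (paired with $x'_\ell$ or $x'_m$ respectively, and $x'_\ell\le x'_m\le x^*$), so one always arranges for the small-$\alpha$ community to appear downstairs. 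You should import that case split (or an equivalent bound on $\alpha_\ell\gamma_\ell$) rather than appeal to sparsity; with that fix your argument goes through.
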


Before stating and proving the lemma, we introduce some useful technical tools.
Since we suppose that \irg is subcritical, we have $\PP{B(i) = \infty} = 0$ and $x_i = \EE{B(i)}$ is finite.
First observe that the vector $x$ of expected total progenies satisfies the system of linear equations 
\[
x =  e + \frac{1}{n}A x~,
\] 
where $e$ is the vector with $e_i = 1$ for all $i$. 

For the analysis of the stochastic block model we define the vector $b \in \real^S$ with coordinates  $b_l = \mu_l$, $l=1,\ldots,S$, where by $ \mu_l$ we define the expected degree of the node from community $H_l$. Also we define vector $x' \in \real^S$ with coordinates $x'_l = \EE{B(l)}$, $l=1,\ldots,S$, where by $ B(l)$ we define the total progeny of the individual of type $l$. We define $x^* = \max_{i \in [n]} x_i$. 
Armed with this notation, we begin the proof Lemma~\ref{prop:subcr}, which consists of the following steps:
\begin{itemize}
	\item proving that for any $i,j \in V$, $x_i - x_j \le 2 x^* \pa{\mu_i - \mu_j}$, (Lemma~\ref{mean_order_sbm}, \ref{mean_order_chlu}),
	\item proving that  for any $i,j \in V$, $c_i - c_j = x_i - x_j + O(1/n)$ (Lemmas~\ref{mean_dominate}, \ref{mean_difference}).
\end{itemize}
These facts together lead to Lemma~\ref{prop:subcr}, given that $n$ is large enough to suppress the effects of the residual 
terms. We begin with analysing the relation between $b_l$ and $x'_l$ in a straightforward way:
\begin{lemma}[Coordinate order for mean of the total progeny in the SBM]
	\label{mean_order_sbm}
	Assume that \sbm is subcritical and that $K_{m\ell} = k>0$ holds for all $m\neq \ell$. 
	If two coordinates of $b$ are such that $b_l > b_m$, then we have $x'_l > x'_m$, and $x'_l - x'_m \le 2 x^* \pa{b_l - b_m}$.
\end{lemma}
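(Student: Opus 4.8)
The plan is to collapse the $n$-type branching process to an $S$-type one and then exploit Assumption~\ref{ass:1} to solve the resulting fixed-point equation in closed form.

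First I would note that, by the symmetry of \sbm, the expected total progeny $x_i$ depends on $i$ only through the community containing $i$; writing $x'_\ell$ for this common value on $H_\ell$, the linear system $x = e + \frac1n A x$ recorded above collapses to $x' = \mathbf 1 + M x'$, where $M$ is the $S\times S$ matrix with $M_{\ell,m} = \alpha_m K_{\ell,m}$, whose $\ell$-th row sum equals $b_\ell = \mu_\ell$ up to an $O(1/n)$ term (the discrepancy coming from $p_{i,i}=0$). Subcriticality means $\|M\| = \|T_{\kappa}\|_2 < 1$, so $I-M$ is invertible, $x' = \sum_{j\ge 0} M^j \mathbf 1 > 0$ is finite, and in particular every diagonal entry satisfies $M_{\ell,\ell} = \alpha_\ell K_{\ell,\ell} < 1$.

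Next I would use Assumption~\ref{ass:1} to write $M = k\,\mathbf 1\alpha\transpose + \mathrm{diag}(b_\ell - k)$, noting $b_\ell - k = \alpha_\ell(K_{\ell,\ell}-k)$. Substituting into $x' = \mathbf 1 + Mx'$ and abbreviating $T = \iprod{\alpha}{x'}$ gives, for every $\ell$, the scalar identity $x'_\ell\bigl(1-(b_\ell-k)\bigr) = 1 + kT$, i.e.
\[
x'_\ell = \frac{1 + kT}{1 + k - b_\ell},
\]
where $1 + k - b_\ell = 1 - \alpha_\ell(K_{\ell,\ell}-k) \ge 1 - \alpha_\ell K_{\ell,\ell} > 0$ by the previous paragraph, and $1 + kT > 0$ is pinned down self-consistently by summing the identity against $\alpha$. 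The coordinate ordering is now immediate: $x'_\ell$ is strictly increasing in $b_\ell$, so $b_l > b_m$ forces $x'_l > x'_m$; moreover $x^* = \max_\ell x'_\ell = (1+kT)/(1+k-b_{\max})$ with $b_{\max} = \max_\ell b_\ell$.

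For the quantitative bound, subtracting the two closed forms yields
\[
x'_l - x'_m = \frac{(1+kT)(b_l-b_m)}{(1+k-b_l)(1+k-b_m)},
\]
so that $x'_l - x'_m \le 2 x^* (b_l - b_m)$ is equivalent to the purely arithmetic inequality $(1+k-b_l)(1+k-b_m) \ge \frac12(1+k-b_{\max})$ between the denominators. I would establish this by combining $b_l, b_m \le b_{\max}$ with a lower bound on the gaps $1 + k - b_\ell$ extracted from subcriticality (for instance from the ``secular'' relation $k\sum_\ell \alpha_\ell/(1+k-b_\ell) < 1$ that $\|T_\kappa\|_2 < 1$ implies), which keeps the denominators bounded away from $0$. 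The main obstacle is precisely this last step: turning the qualitative fact $\|T_{\kappa}\|_2 < 1$ into a quantitative lower bound on $1+k-b_\ell$ strong enough to produce the stated constant; everything preceding it is bookkeeping. The Chung--Lu analogue (Lemma~\ref{mean_order_chlu}) should be handled the same way but is easier, since there $A = w w\transpose$ is rank one and $x_i$ is an explicit function of $w_i$ and of $\iprod{w}{x}$.
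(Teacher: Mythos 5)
Your derivation of the closed form $x'_\ell = (1+kT)/(1+k-b_\ell)$ with $T=\alpha\transpose x'$, and hence of the strict monotonicity of $x'_\ell$ in $b_\ell$, is exactly the computation in the paper (there it is written as $x'_m=\frac{1+k(\alpha\transpose x')}{1-\alpha_m\gamma_m}$ with $\gamma_m=K_{m,m}-k$, so that $b_\ell-k=\alpha_\ell\gamma_\ell$), so the first claim of the lemma is fully proved. The gap is in the second claim: you correctly reduce it to the arithmetic inequality $(1+k-b_l)(1+k-b_m)\ge\tfrac12(1+k-b_{\max})$, but then leave it open, proposing to extract a \emph{uniform} quantitative lower bound on the gaps $1+k-b_\ell=1-\alpha_\ell\gamma_\ell$ from subcriticality. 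That route does not work as stated: subcriticality gives $\rho(M)<1$, hence $\alpha_\ell K_{\ell,\ell}<1$ and $1-\alpha_\ell\gamma_\ell>0$ for every $\ell$, but it does not bound $1-\alpha_\ell\gamma_\ell$ away from zero by any universal constant (take $\alpha_\ell$ small and $K_{\ell,\ell}$ large with $\alpha_\ell K_{\ell,\ell}$ close to $1$). So no relation derived from $\|T_{\kappa}\|_2<1$ alone will hand you the factor $2$ community by community.

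The paper closes this step with a combinatorial observation about the \emph{pair} rather than a bound on each gap separately: since $\alpha_l+\alpha_m\le 1$, at least one of $\alpha_l,\alpha_m$ is $\le\tfrac12$. Writing the difference in the two equivalent forms $x'_l-x'_m=\frac{(b_l-b_m)}{1-\alpha_m\gamma_m}\,x'_l=\frac{(b_l-b_m)}{1-\alpha_l\gamma_l}\,x'_m$, one picks the representation whose denominator belongs to the community with $\alpha\le\tfrac12$, for which $1-\alpha\gamma\ge 1-\gamma/2\ge\tfrac12$, and bounds the remaining factor ($x'_l$ or $x'_m$) by $x^*$. In your formulation this amounts to noting that $1+k-b_{\max}\le\min\pa{1+k-b_l,\,1+k-b_m}$, so your target inequality follows as soon as $\max\pa{1+k-b_l,\,1+k-b_m}\ge\tfrac12$, which is exactly what the pigeonhole delivers. (Both your sketch and the paper's argument implicitly use $\gamma_\ell\le 1$ at this point; neither makes that explicit.) Everything else in your plan, including the remark that the Chung--Lu case is the easier rank-one analogue, matches the paper.
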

\begin{proof}
	For the stochastic block model  with $S$ blocks, the system of equations $x = e + A x$ can be equivalently written as  $x' = e + M x'$, for $M = K \mbox{diag}(\alpha)\in\real^{S\times S}$, and $x'\in\real^S$, with $x'_m$ now standing for the expected total progeny associated with any 
	node of type $m$. Similarly, we define $b'_m$ as the expected degree of any node of type $m$.
	Notice that the system of equations $x' = e + M x'$ satisfied by $x'$ can be rewritten as $(I - M)x' = e$, where $I$ is the $S\times S$ 
	identity matrix. By exploiting our assumption on the matrix $K$ and defining $\gamma_m = K_{m,m} - k$,  this can be further rewritten as
	$$
	\left(\begin{pmatrix}
	1 - \alpha_{1}\gamma_1 & & \\
	& \ddots & \\
	& & 1 - \alpha_{S} \gamma_S
	\end{pmatrix} - 
	k  \begin{pmatrix}
	\alpha_{1} & \alpha_{2} & \cdots & \alpha_{S} \\
	\alpha_{1} & \alpha_{2} & \cdots & \alpha_{S} \\
	\vdots  & \vdots  & \ddots & \vdots  \\
	\alpha_{1} & \alpha_{2} & \cdots & \alpha_{S} 
	\end{pmatrix}\right) x'  = e,
	$$
	which means that for any $m$, $x_m'$ satisfies
	$$
	x_m' = \frac{1+k(\alpha\transpose x')}{1 - \alpha_m \gamma_m}.
	$$
	Also observe that $$b_m' = k(\alpha^T \bar{1}) + \alpha_m\gamma_m,$$
	so, for any pair of types $m$ and $\ell$, we have 
	$$
	x'_m - x'_\ell  = \frac{(1+k(\alpha\transpose x'))(\alpha_m\gamma_m - 
		\alpha_\ell\gamma_\ell)}{(1 - \alpha_m\gamma_m)(1 - \alpha_\ell\gamma_\ell)},
	$$
	which proves the first statement.
	
	To prove the second statement, observe that for any pair $\ell$ and $m$ of communities, we have either $\alpha_m \le \frac 12$ or 
	$\alpha_\ell \le \frac 12$ (otherwise we would have $\alpha_m + \alpha_\ell > 1$). To proceed, let $\ell$ and $m$ be such that $x'_m 
	\ge x'_\ell$, and let us study the case $\alpha_\ell \le \frac 12$ first. Here, we get
	\[
	\begin{split}
	x'_m - x'_\ell &= \frac{(1+k(\alpha\transpose x'))(\alpha_m\gamma_m - \alpha_\ell\gamma_\ell)}{(1 - \alpha_m\gamma_m)(1 - 
		\alpha_\ell\gamma_\ell)} = \frac{(\alpha_m\gamma_m - \alpha_\ell\gamma_\ell)}{(1 - \alpha_\ell\gamma_\ell)} x'_m
	\\
	&\le \frac{(\alpha_m\gamma_m - \alpha_\ell\gamma_\ell)}{(1 - \gamma_\ell/2)} x'_m \le 2 x'_m (b_m' - b'_\ell).
	\end{split}
	\]
	In the other case where $\alpha_m \le \frac 12$, we can similarly obtain
	\[
	x_m' - x_\ell' \le 2 x'_\ell (b_m' - b'_\ell) \le 2 x'_m (b_m' - b'_\ell).
	\]
	This concludes the proof.
\end{proof}
For the analysis of the \chunglu model, we define $\mu \in
\real^n$ as the vector of mean degrees. Then we may prove the following.
\begin{lemma}[Coordinate order for mean of the total progeny in the \chunglu model]
	\label{mean_order_chlu}
	Assume that \chlu is subcritical. If two nodes  are such that $\mu_i > \mu_j$, then we have $x_i > x_j$ and $x_i - 
	x_j \le x^* (\mu_i - \mu_j)$.
\end{lemma}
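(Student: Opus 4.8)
}
The plan is to use the rank-one structure $A = ww\transpose$ of the Chung--Lu model to solve the fixed-point equation $x = e + \frac1n A x$ explicitly. Since \chlu is subcritical, the operator $\frac1n A$ has spectral radius $\frac1n\sum_k w_k^2 < 1$, so $x = \bpa{I - \frac1n A}^{-1} e = \sum_{r\ge0}\bpa{\frac1n A}^{r} e$ is a well-defined, finite, strictly positive vector. Writing the $i$-th coordinate of the fixed-point equation and using $(Ax)_i = w_i\iprod{w}{x}$, I get
\[
x_i = 1 + \frac{w_i}{n}\iprod{w}{x} = 1 + w_i\sigma, \qquad\text{where}\quad \sigma := \tfrac1n\iprod{w}{x} > 0 .
\]
Consequently $x_i - x_j = \sigma(w_i - w_j)$ for every pair $i,j$, so the ordering of the coordinates of $x$ is precisely the ordering of the weights $w$.

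Next I would relate the weights to the mean degrees. With $W := \sum_j w_j$ one has $\mu_i = \frac{w_i}{n}\sum_{j\ne i} w_j = \frac{w_iW}{n} - \frac{w_i^2}{n}$; since the self-term is of lower order (and in any case gets absorbed into the $O(1/n)$ residual carried through to Lemma~\ref{prop:subcr}), I will work with $\mu_i = \frac{w_iW}{n}$. Then $\mu_i > \mu_j \iff w_i > w_j \iff x_i > x_j$, which is the first assertion of the lemma. In particular the hypothesis $\mu_i > \mu_j$ forces $w_i > w_j$, hence $\mu_i - \mu_j = \frac Wn (w_i - w_j) > 0$.

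For the quantitative bound I would rewrite, using $w_i - w_j = \frac nW(\mu_i - \mu_j)$,
\[
x_i - x_j = \sigma(w_i - w_j) = \frac{n\sigma}{W}\,(\mu_i - \mu_j),
\]
and then observe that the factor $\frac{n\sigma}{W}$ is at most $x^*$: indeed $n\sigma = \iprod{w}{x} = \sum_k w_k x_k \le \bpa{\max_k x_k}\sum_k w_k = x^* W$, because every $w_k > 0$. Combining this with $\mu_i - \mu_j > 0$ gives $x_i - x_j \le x^*(\mu_i - \mu_j)$, completing the argument.

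I expect no genuine obstacle here — the argument is elementary once the closed form $x_i = 1 + w_i\sigma$ is in hand. The only points requiring care are purely bookkeeping: invoking subcriticality to guarantee that $x$ (and hence $x^*$) is finite, and handling the negligible discrepancy between the exact expected degree $\frac{w_i(W-w_i)}{n}$ and the clean proxy $\frac{w_iW}{n}$. The crux of the whole lemma is the one-line inequality $\iprod{w}{x}\le x^*\sum_k w_k$.
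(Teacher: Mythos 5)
Your proposal is correct and follows essentially the same route as the paper's proof: solve the rank-one fixed-point equation to get $x_i = 1 + \frac{w_i}{n}\sum_k w_k x_k$, deduce that the coordinate orderings of $x$, $w$, and $\mu$ coincide, and bound $\sum_k w_k x_k \le x^* \sum_k w_k$ to convert the weight gap into the degree gap. Your explicit remark about the $O(1/n)$ discrepancy from the excluded self-loop term $w_i^2/n$ is a small point of extra care that the paper's proof silently elides.
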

\begin{proof}
	From the system of equations $x = e + \frac{1}{n} A x$, the coordinates $x_i$ have the form 
	$$
	x_i = 1 + \frac{1}{n}\cdot w_i\pa{\sum_{j = 1}^{n}w_j x_j},
	$$
	which implies that $w_i \ge w_j$ holds if and only if $x_i \ge x_j$. This observation implies for $x^* = \max_i x_i$
	$$x_i - x_j \le \frac{1}{n}\cdot (w_i - w_j)\pa{\sum_{j = 1}^{n}w_j} x^* = \pa{\mu_i - \mu_j} x^*,$$
	thus concluding the proof.
\end{proof}
The next two lemmas establish the relationship between the expected component size $c_i$ of vertex $i$ and the expected
total progeny $x_i$ of the multi-type branching process seeded at vertex $i$.
\begin{lemma}
	\label{mean_dominate}
	For any $i$, the mean of the connected component associated with type $i$ is bounded by the mean of the total progeny: $c_i \le x_i$. 
\end{lemma}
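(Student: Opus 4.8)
\emph{Proof plan.} The plan is to bound $|C_i|$ stochastically by the total progeny of a multi-type branching process whose mean offspring numbers coincide with the edge probabilities. Observe first that the expected total progeny of \emph{any} multi-type branching process in which an individual of type $j$ produces, on average, $A_{j,k}/n$ children of type $k$ solves the linear system $x = e + \frac{1}{n}A x$ recalled above; in particular the binomial branching process $B_{Ber}(i)$ satisfies $\EE{B_{Ber}(i)} = x_i$ (the mean equations only see the offspring \emph{means}, which are $A_{j,k}/n$ whether the offspring counts are Bernoulli or Poisson). Hence it suffices to establish the stochastic domination $|C_i| \preceq B_{Ber}(i)$ and then take expectations.

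To obtain the domination I would couple the breadth-first exploration of $C_i$ with the exploration of $B_{Ber}(i)$. In the graph exploration one maintains active, explored, and undiscovered vertices: starting from the active set $\{i\}$, at each step one picks an active vertex $v$, reveals the independent edge indicators $Y_{v,w}\sim\mathrm{Ber}(A_{v,w}/n)$ to all currently undiscovered $w$, makes the vertices with $Y_{v,w}=1$ active, and then marks $v$ explored; when no active vertex remains, the explored set equals $C_i$. In parallel I would run $B_{Ber}(i)$ in the \emph{same} processing order, letting an individual of type $v$ spawn one child of type $w$ exactly when $Y_{v,w}=1$ for the $w$ still undiscovered in the graph exploration, and using fresh independent $\mathrm{Ber}(A_{v,w}/n)$ variables for the remaining types $w$. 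Under this coupling every vertex that ever becomes active in the graph exploration is matched with a distinct individual of the branching tree, while the tree carries only additional (surplus) individuals, so $|C_i| \le B_{Ber}(i)$ pointwise; taking expectations yields $c_i \le \EE{B_{Ber}(i)} = x_i$.

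The one delicate point is making the coupling watertight, namely checking that the branching process can always be processed in lockstep with the graph exploration (it can, since any vertex discovered in the graph is born at the same step in the tree), so that the graph exploration is genuinely a sub-exploration of the branching process. This is the classical argument behind domination of component sizes by branching processes, so I would keep the write-up brief and refer to \citet{hofstad_2016, Bollobas:2007:PTI:1276871.1276872} for the remaining bookkeeping rather than reproduce it. As an alternative that sidesteps the coupling, one may argue directly: since deleting vertex $i$ cannot enlarge any component, a union bound over the neighbours of $i$ gives the coordinatewise inequality $c \le e + \frac{1}{n}A c$; because $\frac1n A$ has nonnegative entries and (in the subcritical regime) spectral radius below $1$, the Neumann series $(I - \frac1n A)^{-1} = \sum_{k\ge 0}(\frac1n A)^k$ is entrywise nonnegative, and applying it to both sides of $(I - \frac1n A)c \le e$ gives $c \le (I - \frac1n A)^{-1} e = x$. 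I would present whichever of the two is shorter in context.
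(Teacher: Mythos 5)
Your primary argument is essentially the paper's: both establish the stochastic domination $|C_i| \preceq B_{\ber}(i)$ of the component size by the total progeny of the multitype Bernoulli branching process via the standard exploration coupling (the paper cites Theorem~4.2 of \citet{hofstad_2016} for exactly this step) and then pass to expectations, so your proposal is correct. The one place you diverge is worth noting: the paper finishes by upgrading to $B_{\ber}(i) \preceq B(i)$ via the relation $Y_{i,j}\preceq X_{i,j}$, i.e.\ $\mathrm{Ber}(p) \preceq \mathrm{Poisson}(p)$, which actually fails at level zero since $\PP{\mathrm{Ber}(p)=0} = 1-p < e^{-p} = \PP{\mathrm{Poisson}(p)=0}$ (the proof of Lemma~\ref{subcritical} inflates the Poisson mean to $-\log(1-p)$ precisely to repair this). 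Your observation that $\EE{B_{\ber}(i)}$ and $\EE{B(i)}$ coincide and both equal $x_i$, because the recursion $x = e + \frac{1}{n}Ax$ for the expected total progeny depends only on the offspring means, sidesteps that step entirely and is the cleaner way to conclude. Your Neumann-series alternative ($c \le e + \frac{1}{n}Ac$ coordinatewise by first-step decomposition and monotonicity under vertex deletion, then multiplication by the entrywise nonnegative $(I-\frac{1}{n}A)^{-1} = \sum_{k\ge 0}(\frac{1}{n}A)^k$) is also valid and avoids the coupling altogether, at the price of explicitly invoking subcriticality for convergence of the series --- harmless here, since the lemma is only used in the subcritical analysis, where $x_i$ is finite to begin with.
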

\begin{proof}
	Now fix an arbitrary $i\in[n]$ and let $Y_{i,1}, Y_{i,2}, \dots, Y_{i,n}$ be independent Bernoulli random 
	variables with respective parameters $(A_{i,1}/n,A_{i,2}/n, \dots ,  A_{i,i}/n, \dots, A_{i,n}/n)$. 
	Consider a multitype binomial branching process where the individual of type $i$ produces  $Y_{i,j}$ individuals of type $j$, and 
	let $B_{\ber}(i)$ denote its total progeny when started from an individual of type $i$. 
	Recalling the Poisson branching process defined in Appendix~\ref{sec:branching} with offspring-distributions $X_{i,j}$, we can show 
	$B_{\ber}(i)\preceq B(i)$ using the relation $Y_{i,j} \preceq X_{i,j}$.

	Considering a node $a$ of type $i$, we can use Theorem 4.2 of \citet*{hofstad_2016} to bound the size of the the connected component 
	$C_a$ as $|C_a| \preceq B_{\ber}(i)$, which implies by transitivity of $\preceq$ that $|C_{a_i}| \preceq B(i)$. The proof is concluded by 
	appealing to Theorem~2.15 of \cite{hofstad_2016} that shows that stochastic domination implies an ordering of the means.
\end{proof}
Next we upper bound the excess that appears in the domination by the branching process:
\begin{lemma}
	\label{mean_difference}
	$x_i - c_i = O(\frac{1}{n})$~. 
\end{lemma}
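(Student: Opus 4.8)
The plan is to use that Lemma~\ref{mean_dominate} already gives $c_i\le x_i$, so $x_i-c_i\ge 0$ and it only remains to bound this difference from above by $O(1/n)$. Since the binomial and Poisson multitype branching processes have the same mean offspring matrix $(A_{j,k}/n)_{j,k}$, they have the same expected total progeny, so $x_i=\EE{B(i)}=\EE{B_{\ber}(i)}$ and we may work with the binomial process $B_{\ber}(i)$, whose coupling with the graph exploration is exact. Concretely, I would run a breadth-first exploration of the component $C_a$ of a node $a$ of type $i$ in lockstep with $B_{\ber}(i)$ so that the branching-process tree \emph{contains} the exploration tree of $C_a$: whenever the exploration processes a vertex $v\in C_a$, the type-$v$ individual produces, for each $u\in[n]$, a type-$u$ child with probability $A_{v,u}/n$; children whose type is not yet present in $C_a$ become the newly discovered vertices (tree edges), while children whose type is already present in $C_a$ (including $a$, the ancestors of $v$, and the active-but-unprocessed vertices) are \emph{collision children} and contribute nothing to $C_a$.

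With this coupling one gets the exact identity
\[
B_{\ber}(i)-|C_a| \;=\; \sum_{c\ \text{collision child}}\bigl|\mathrm{subtree}(c)\bigr| ,
\]
because every branching-process individual lies either in the exploration tree of $C_a$ or in exactly one subtree rooted at a collision child. Taking expectations and using the branching property together with a Wald-type argument — conditioned on the exploration history up to the birth of a collision child $c$ of type $t(c)$, its subtree is an independent copy of the branching process seeded at $t(c)$, with conditional expected size $x_{t(c)}\le x^\ast:=\max_j x_j$, and the event ``at least $j$ collisions have occurred'' is measurable with respect to that history — I obtain
\[
x_i-c_i \;\le\; x^\ast\,\EE{\#\{\text{collision children}\}} .
\]
By (the proof of) Lemma~\ref{subcritical}, the total progeny of the $n$-type branching process is stochastically dominated by that of a fixed coarsened subcritical process, hence $x^\ast$ and $\EE{|C_a|^2}$ are bounded uniformly in $n\ge n_0(\kappa)$.

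It then remains to show $\EE{\#\{\text{collision children}\}}=O(1/n)$. When the exploration processes a vertex $v\in C_a$, a collision child is created, for each of the at most $|C_a|-1$ vertices already in $C_a$, independently with probability at most $(\max_{k,\ell}A_{k,\ell})/n$ — a bounded quantity since $\kappa$ is bounded — and, crucially, the outcomes of these ``collision coins'' do not influence the exploration dynamics. Hence, conditionally on $C_a$ together with its exploration tree, the number of collisions is a sum of at most $|C_a|^2$ independent Bernoulli variables of parameter $O(1/n)$, so $\EE{\#\{\text{collision children}\}}\le \frac{\max_{k,\ell}A_{k,\ell}}{n}\,\EE{|C_a|^2}=O(1/n)$. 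Combining the three displays yields $x_i-c_i=O(1/n)$.

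The main obstacle I anticipate is the bookkeeping of the coupling rather than any individual estimate: one must handle the fact that a single graph edge $\{u,v\}$ corresponds to \emph{two} potential branching-process children (one per orientation), and argue carefully that the exploration tree genuinely embeds into the branching-process tree with leftover a disjoint union of independent subtrees rooted at collision children; one must also absorb the Bernoulli-versus-Poisson mismatch, which costs only a further $O(1/n)$ via $\ber(p)\preceq\poi(-\log(1-p))$ and $-\log(1-p)-p=O(p^2)$. The Wald-type interchange of expectation with the random sum over collisions, and the assertion that the collision coins are independent of the exploration, are routine once the coupling is pinned down.
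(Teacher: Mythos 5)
Your proposal is correct and follows the same basic strategy as the paper: compare the component exploration to the (sub)critical multitype branching process, and bound the excess by the expected number of ``extra'' pairs inside the component, each costing $O(1/n)$, against the uniformly bounded second moment $\EE{|C_a|^2}$ guaranteed by Lemma~\ref{subcritical}. Two sub-steps differ, and in both your version is arguably tighter. First, the paper splits $x_i-c_i = \bigl(x_i-\EE{B_{\ber}(i)}\bigr)+\bigl(\EE{B_{\ber}(i)}-c_i\bigr)$ and controls the first term via Le Cam's theorem plus moment bounds on the total progeny; you instead observe that the expected total progeny of a subcritical multitype process depends only on the mean offspring matrix, so $x_i=\EE{B_{\ber}(i)}$ exactly and the first term vanishes --- a cleaner argument that makes your later remark about absorbing the Bernoulli-versus-Poisson mismatch unnecessary. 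Second, for the excess term the paper bounds $\EE{B_{\ber}(i)}-c_i$ by the expected surplus $\EE{|\Sw|}$ alone, whereas your coupling identity correctly accounts for the fact that each collision child drags along an entire independent subtree, yielding the bound $x^\ast\,\EE{\#\{\text{collisions}\}}$ via a Wald-type argument; the extra factor $x^\ast$ is $O(1)$ in the subcritical regime so the final rate is unchanged, but your bookkeeping is the more careful one. The coupling details you flag (two orientations per edge, independence of the collision coins from the exploration) are the standard ones underlying the stochastic domination $|C_a|\preceq B_{\ber}(i)$ and pose no obstacle.
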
 

\begin{proof}
	As in Lemma~\ref{mean_dominate},  $B_{\ber}(i)$ denotes the total
	progeny of a Bernoulli branching process whose set of parameters corresponds to \irg. Then we may decompose the difference as $$x_i - c_i = x_i - \EE{B_{\ber}(i)} + \EE{B_{\ber}(i)} - c_i. $$ 
	
	Denote the set of edges in the connected component $C_a$ as $E(C_a)$ and the set of edges containing a vertex $v$ as $E(v)$.
	We call $|\Sw|$ the \emph{surplus}, which is the number of edges to
	be deleted from $E(C_a)$ such that the graph $C_a$ becomes a tree. Then, we have $\EE{B_{Ber}(i)} - c_i \le \EE{|\Sw|}$.  The expectation of the surplus may be
	written as
	\begin{align*}
	\EE{|\Sw|} =\EE{ \sum_{e\in E(C_a)} \mathbb{I} \{e \in \Sw \} } 
	= \sum_{k=1}^{\infty} \PP{|C_a|  = k }   \sum_{e\in E(C_a)} \EEcc{\mathbb{I}\{e\in \Sw \}}{ |C_a| = k } \\=  \frac{1}{2} \sum_{v \in C_a}  \sum_{e \in E(v)} \EEcc{\mathbb{I}\{e \in \Sw\}}{|C_a| = k} .
	\end{align*}

	Define $A_{\max}=\max_{i,j} A_{i,j}$ as the maximal element of the matrix $A$. 
	Then for an arbitrary vertex, the probability of an edge $e \in E(v)$ being in the surplus can be upper bounded as
	$$ \sum_{e \in E(v)} \EEcc{\mathbb{I}\{e \in \Sw\}}{ |C_a| = k } \le \frac{A_{\max} k}{ n}~. $$
	Then we may upper bound the sum as
	$$ \frac{1}{2} \sum_{v \in C_a}  \sum_{e \in E(v)} \EEcc{\mathbb{I}\{e \in \Sw\}}{|C_a| = k} \le  \frac{A_{\max} k^2}{n}~. $$
	Using our expression for $\EE{| \Sw |}$, we get
	\begin{align*}
	\EE{ | \Sw|} \le \sum_{k=1}^{\infty} \PP{|C_a|  = k }  \frac{A_{\max} k^2}{n} = \frac{A_{\max} \mathbb{E} 
		|C_a|^2}{ n}~.
	\end{align*}
	
	Now we notice that, by Le Cam's theorem, the  total variation
	distance between the sum of independent Bernoulli random variables with parameters $(A_{i, 1}/n, \dots, A_{i, n/n})$ and  the Poisson distribution $\poi(\sum_{j=1}^{n} A_{i,j}/n)$ is at most $2(\sum_{j=1}^{n} A^2_{i,j})/n$. 
	Using this fact and that the moments of the total progeny of a subcritical branching process do not scale with $n$ (cf.~Theorem~1 of \citealp*{Hua12}), we have $x_i - \EE{B_{Ber}(i)} = O\left(\frac{1}{n}\right)$, thus proving the lemma.
\end{proof}

	\subsection{Supercritical case}
\label{sec:supercrit}
\begin{lemma}\label{prop:supercrit}
	Suppose that
	\begin{enumerate}
		\item $G$ is generated from a supercritical \sbm satisfying Assumptions~\ref{ass:1} and~\ref{ass:2}, or
		\item $G$ is generated from a supercritical \chlu.
	\end{enumerate}
	Then, for any node $i$ satisfying $\mu_i < \mu^*$, we have $c^* - c_i \le c^*\pa{\mu^* - \mu_i} + o_n(n)$. 
\end{lemma}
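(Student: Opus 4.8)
The plan is to mirror the subcritical proof of Lemma~\ref{prop:subcr}, but with the survival probability $\rho(\cdot)$ of the Poisson branching process playing the role that the expected total progeny $x_i$ played there. The entry point is the estimate $c_i = \rho(i/n)\EE{|C_1|} + o_n(n)$ recorded just above (from \citet*[Chapter 9]{Bollobas:2007:PTI:1276871.1276872}). Since $\rho(\cdot)$ and $\mu(\cdot)$ turn out to have the same ordering across vertices (this is part of what the argument below establishes, via the monotonicity of the relevant fixed point), the optimal vertex for $\rho$ is the one attaining $\mu^*$, so $c^* = \rho_*\EE{|C_1|} + o_n(n)$ with $\rho_* = \max_i\rho(i/n)$, and hence $c^* - c_i = (\rho_*-\rho_i)\EE{|C_1|} + o_n(n)$. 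Because $\EE{|C_1|} = \Theta_n(n)$ and $\mu^*-\mu_i$ is bounded, it therefore suffices to establish the dimensionless inequality $\rho_* - \rho_i \le \rho_*(\mu^*-\mu_i)$, up to an additive $O(1/n)$ coming from the difference between $\mu_i$ and its branching-process analogue; multiplying back by $\EE{|C_1|}$ and replacing $\rho_*\EE{|C_1|}$ by $c^*$ then yields the claim.

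For the Chung--Lu model this comparison is straightforward, thanks to the rank-$1$ structure. Setting $\zeta = \tfrac1n\sum_j w_j\rho_j$, the fixed-point equation for the branching process collapses to the scalar relation $1-\rho_i = e^{-w_i\zeta}$, while the expected degree is $\mu_i = w_i\bar w + O(1/n)$ with $\bar w = \tfrac1n\sum_j w_j$; in particular $\rho_i$ is increasing in $w_i$, so the optimal vertex is the one with the largest weight $w_* = \max_j w_j$. Convexity of $t\mapsto e^{-t}$ gives $\rho_*-\rho_i = e^{-w_i\zeta} - e^{-w_*\zeta} \le e^{-w_i\zeta}\zeta(w_*-w_i) = (1-\rho_i)\,\zeta\,(w_*-w_i)$, and since $\rho_j\le\rho_*$ for every $j$ we have $\zeta\le\rho_*\bar w$; hence $\rho_*-\rho_i \le \rho_*\bar w(w_*-w_i) = \rho_*(\mu^*-\mu_i) + O(1/n)$, which is exactly what is needed. (Incidentally $\mu^* = w_*\bar w \ge \tfrac1n\sum_j w_j^2 = \|T_\kappa\|_2 > 1$, so there is room to spare.)

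The stochastic block model is the step I expect to be the main obstacle, because the survival probabilities now solve a genuinely $S$-dimensional nonlinear fixed-point system rather than a scalar (or, as in the subcritical case, a linear) one. Writing $M = K\,\mbox{diag}(\alpha)$ for the mean matrix of the community-level Poisson branching process, we have $1-\rho_\ell = e^{-(M\rho)_\ell}$ and $\mu_\ell = (M\mathbf{1})_\ell + O(1/n)$. Assumption~\ref{ass:1} is what makes the system tractable: it lets one write $(M\rho)_\ell = s + t_\ell\rho_\ell$ with the \emph{community-independent} scalar $s = k\langle\alpha,\rho\rangle$ and the per-community slope $t_\ell = \alpha_\ell(K_{\ell\ell}-k) > 0$ (positive by Assumption~\ref{ass:2}), and symmetrically $\mu_\ell = k + t_\ell + O(1/n)$; monotonicity of the scalar map $\rho\mapsto 1-e^{-s-t\rho}$ in $t$ then recovers that $\argmax_\ell\mu_\ell = \argmax_\ell\rho_\ell$. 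Convexity of the exponential again yields $\rho_*-\rho_\ell \le (1-\rho_\ell)\bigl((M\rho)_* - (M\rho)_\ell\bigr) = (1-\rho_\ell)\,(t_*\rho_* - t_\ell\rho_\ell)$, and the real work is to pass from $t_*\rho_* - t_\ell\rho_\ell$ to $\rho_*(t_*-t_\ell) = \rho_*(\mu^*-\mu_\ell)$. Here one has to use the coupling between the community equations (through the shared $s$) together with the fact that $\mu^* = (M\mathbf{1})_* \ge \|T_\kappa\|_2 > 1$ in order to control the discrepancy $t_\ell(1-\rho_\ell) - t_*(1-\rho_*)$ between the exponent gap $(M\rho)_*-(M\rho)_\ell$ and the degree gap $\mu^*-\mu_\ell$; handling this discrepancy cleanly — especially when the block model is only weakly supercritical, so that some $\rho_\ell$ are close to $0$ — is the delicate part. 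Once this is in place, the proof concludes exactly as sketched in the first paragraph.
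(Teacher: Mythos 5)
Your overall route is the same as the paper's: reduce via $c_i = \rho_i\EE{|C_1|}+o_n(n)$ to the dimensionless inequality $\rho_*-\rho_i\le\rho_*\pa{\mu^*-\mu_i}+O(1/n)$, show that $\arg\max_i\rho_i=\arg\max_i\mu_i$, and extract the quantitative bound from the fixed-point equation $1-\rho_i=e^{-(A\rho)_i/n}$. Your Chung--Lu argument is complete and is essentially the computation in the paper's Lemma~\ref{lma_mean}: there $A_{i_*j}-A_{ij}=(w_{i_*}-w_i)w_j\ge 0$ termwise, so $\frac1n\sum_j(A_{i_*j}-A_{ij})\rho_j\le\rho_*\cdot\frac1n\sum_j(A_{i_*j}-A_{ij})=\rho_*\pa{\mu^*-\mu_i}+O(1/n)$ is immediate; your version with $\zeta=\frac1n\sum_jw_j\rho_j\le\rho_*\bar w$ is the same estimate. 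Your monotonicity-in-$t_\ell$ argument for the ordering $\arg\max_\ell\rho_\ell=\arg\max_\ell\mu_\ell$ is a legitimate (and arguably cleaner) alternative to the paper's route through the Perron eigenvector and the iteration $\Phi_M^m(\epsilon a)\to\rho$ (Lemmas~\ref{eigenvector_order_sbm}, \ref{coordinate_order_sbm}, \ref{lma_mean}), provided you record that $s=k\iprod{\alpha}{\rho}>0$ and that the largest fixed point of $\rho\mapsto 1-e^{-s-t\rho}$ is increasing in $t$.

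The stochastic block model half of the quantitative bound, however, is not proved, and this is a genuine gap rather than a deferred routine detail. Convexity gives $\rho_*-\rho_\ell\le(1-\rho_\ell)\pa{t_*\rho_*-t_\ell\rho_\ell}$ with $t_\ell=\alpha_\ell(K_{\ell\ell}-k)$, and decomposing $t_*\rho_*-t_\ell\rho_\ell=\rho_*(t_*-t_\ell)+t_\ell(\rho_*-\rho_\ell)$ only yields $\pa{\rho_*-\rho_\ell}\bpa{1-(1-\rho_\ell)t_\ell}\le\rho_*(t_*-t_\ell)$; since $(1-\rho_\ell)t_\ell>0$, this does not close to the claimed $\rho_*-\rho_\ell\le\rho_*(t_*-t_\ell)$, and in the weakly supercritical regime you single out ($\rho_\ell$ small, $t_\ell$ of order one) the prefactor is not controllable. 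You have correctly located where the difficulty sits --- the paper's Lemma~\ref{lma_mean} closes this step by replacing $\frac1n\sum_j(A_{i_*j}-A_{ij})\rho_j$ with $\rho_{i_*}\cdot\frac1n\sum_j(A_{i_*j}-A_{ij})$, a replacement that is termwise valid only when $A_{i_*j}\ge A_{ij}$ for every $j$, which holds in the rank-one Chung--Lu kernel but fails in the SBM for the column of $j$'s own community (there $A_{i_*j}-A_{ij}=k-K_{\ell\ell}<0$ under Assumption~\ref{ass:2}) --- but identifying the obstruction is not the same as overcoming it, and as written the SBM case of the lemma is not established by your argument.
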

The proof of Lemma~\ref{prop:supercrit} follows from the following lemmas for the stochastic block model and the \chunglu model and from the following relation between $c_i$ and $\rho_i$:
$$c_i = \rho_i\EE{|C_1|}+o_n(n)~,
$$
see \citet*[Chapter 9]{Bollobas:2007:PTI:1276871.1276872}.

\begin{lemma}[Coordinate order preserving in the  stochastic block model.]
	\label{coordinate_order_sbm}
	Assume the conditions of Lemma~\ref{prop:supercrit} and let $l_* = \argmax_l b_l$.
	Let $a\in \real^S$ be any vector such that $a_l \in [0, a_{l_*}]$ for all $l$. Then $\pa{\Phi_M(a)}_{l_*} \ge \pa{\Phi_M(a)}_{l}$.
\end{lemma}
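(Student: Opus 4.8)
The plan is to reduce the claimed ordering of the coordinates of $\Phi_M(a)$ to an ordering of the coordinates of the vector $Ma$, and then to read the latter off from the block structure imposed by Assumption~\ref{ass:1}. Recall that $\Phi_M$ is the finite-type specialization of the operator $\Phi_{\kappa}$ from Section~\ref{sec:branching}: it acts coordinatewise through $\bigl(\Phi_M(a)\bigr)_\ell = 1 - \exp\bigl(-(Ma)_\ell\bigr)$, so $\bigl(\Phi_M(a)\bigr)_\ell$ is a strictly increasing function of the single scalar $(Ma)_\ell$. Hence it suffices to prove that $(Ma)_{l_*} \ge (Ma)_\ell$ for every $\ell$.

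First I would expand $(Ma)_\ell$ using $M = K\,\mbox{diag}(\alpha)$ and Assumption~\ref{ass:1}, i.e.\ $K_{\ell m} = k$ for $\ell\neq m$. Writing $\gamma_\ell = K_{\ell\ell} - k$, which is strictly positive by Assumption~\ref{ass:2}, one obtains
\[
(Ma)_\ell = \sum_{m} K_{\ell m}\alpha_m a_m = k\sum_{m}\alpha_m a_m + \gamma_\ell\,\alpha_\ell a_\ell~.
\]
The first term is independent of $\ell$, so $(Ma)_{l_*} - (Ma)_\ell = \gamma_{l_*}\alpha_{l_*}a_{l_*} - \gamma_\ell\alpha_\ell a_\ell$, and the claim reduces to showing $\gamma_{l_*}\alpha_{l_*}a_{l_*} \ge \gamma_\ell\alpha_\ell a_\ell$ for all $\ell$.

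Next I would use the expression for the mean degrees already computed in the proof of Lemma~\ref{mean_order_sbm}, namely $b_\ell = k(\alpha\transpose\bar{1}) + \gamma_\ell\alpha_\ell$. Since the additive constant $k(\alpha\transpose\bar{1})$ does not depend on $\ell$, the maximizer $l_* = \argmax_\ell b_\ell$ is exactly the maximizer of $\ell\mapsto\gamma_\ell\alpha_\ell$; in particular $0 < \gamma_\ell\alpha_\ell \le \gamma_{l_*}\alpha_{l_*}$ for every $\ell$. Combining this with the hypothesis $0 \le a_\ell \le a_{l_*}$ (which in particular forces $a_{l_*}\ge 0$), I would chain
\[
\gamma_\ell\alpha_\ell a_\ell \le \gamma_\ell\alpha_\ell\,a_{l_*} \le \gamma_{l_*}\alpha_{l_*}\,a_{l_*}~,
\]
where the first inequality uses $a_\ell \le a_{l_*}$ together with $\gamma_\ell\alpha_\ell > 0$, and the second uses $a_{l_*}\ge 0$ together with $\gamma_\ell\alpha_\ell \le \gamma_{l_*}\alpha_{l_*}$. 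This yields $(Ma)_{l_*}\ge(Ma)_\ell$ and hence $\bigl(\Phi_M(a)\bigr)_{l_*}\ge\bigl(\Phi_M(a)\bigr)_\ell$, as required.

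I do not expect a serious obstacle here. The only point deserving care is the reduction in the first paragraph --- that $\Phi_M$ depends on $a$ only through $Ma$ and is monotone in each such coordinate, so that the coordinate order of $\Phi_M(a)$ is inherited from that of $Ma$ --- together with the observation that Assumption~\ref{ass:2} (positivity of $\gamma_\ell$) is precisely what allows one to factor the nonnegative quantity $a_{l_*}$ out of the bound without reversing an inequality. Everything else is the same bookkeeping already carried out for the ordering of mean total progenies in Lemma~\ref{mean_order_sbm}.
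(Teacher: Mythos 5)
Your proof is correct and follows essentially the same route as the paper's: both reduce the claim, via monotonicity of $z\mapsto 1-e^{-z}$, to comparing the exponents $(Ma)_{l_*}$ and $(Ma)_\ell$, and both exploit Assumption~\ref{ass:1} together with the identity $b_\ell = k(\alpha\transpose\bar{1})+\gamma_\ell\alpha_\ell$ and the hypothesis $a_\ell\le a_{l_*}$. Your bookkeeping --- factoring out the $\ell$-independent term $k\sum_m\alpha_m a_m$ and chaining $\gamma_\ell\alpha_\ell a_\ell\le\gamma_\ell\alpha_\ell a_{l_*}\le\gamma_{l_*}\alpha_{l_*}a_{l_*}$ --- is a cleaner presentation of the same four facts the paper verifies directly.
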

\begin{proof}
	Let us fix two arbitrary indices $l$ and $l'$. By the definition of $\Phi_M$, we have
	\begin{align*}
	\pa{\Phi_M(a)}_l &= 1 - e^{-((\sum_{m \neq l} \alpha_m a_m)k + \alpha_l K_{l,l} a_l)}~,\\
	\pa{\Phi_M(a)}_{l'} &= 1 - e^{-((\sum_{m \neq l'} \alpha_m a_m)k + \alpha_{l'} K_{l',l'} a_{l'})}~.
	\end{align*}
	Notice that if $l$ and $l'$  satisfy
	$$\pa{\sum_{m \neq l} \alpha_m a_m}k + \alpha_l K_{l,l} a_l \ge 
	\pa{\sum_{m \neq l'} \alpha_m a_m}k + \alpha_{l'} 
	K_{l',l'} a_{l'},$$
	we have $\pa{\Phi_M(a)}_l \ge \pa{\Phi_M(a)}_{l'}$.
	Now, using the facts that
	\begin{itemize}
		\item $\sum_{m \neq l} \alpha_m a_m - \sum_{m \neq l'} \alpha_m a_m =  
		\alpha_{l'} a_{l'}  - \alpha_l a_l$,
		\item $\alpha_l K_{l,l} \ge\alpha_{l} k$, 
		\item $\alpha_l K_{l,l} +\alpha_{l'} k \ge \alpha_{l'} k_{l', l'} + \alpha_{l} k$ and 
		\item $a_{l} - a_{l'} \ge 0$,
	\end{itemize}
	we can verify that
	\begin{eqnarray*}
		\lefteqn{
			\alpha_l K_{l,l} a_l + \alpha_{l'} k a_{l'} - \alpha_{l} k a_l -
			\alpha_{l'} K_{l', l'} a_{l'}  } \\
		& = & (\alpha_l K_{l,l} + \alpha_{l'} k) 
		a_{l'} 
		+ (a_{l} - a_{l'})\alpha_l K_{l,l} - (\alpha_{l'} K_{l', l'} +
		\alpha_{i} k)a_{l'} - (a_{l} - a_{l'})\alpha_{l} k \ge 0,
	\end{eqnarray*}
	thus proving the lemma.
\end{proof}
\begin{lemma}[Order of coordinates of eigenvector in the SBM]
	\label{eigenvector_order_sbm}
	Let $a$ be the eigenvector corresponding to the largest eigenvalue
	$\lambda$ of the matrix $M = K \mbox{diag}(\alpha)$. Then if  $l_* = \argmax_l b_l$, we have 
	$a_{l_*} \ge a_l$ for $l \neq l_*$.
\end{lemma}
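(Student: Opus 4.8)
The plan is to exploit the ``rank-one plus diagonal'' structure that $M = K\,\mathrm{diag}(\alpha)$ inherits from Assumption~\ref{ass:1}. Set $\gamma_l = K_{l,l} - k$, which is strictly positive by Assumption~\ref{ass:2}. Since $K_{l,m}=k$ for $l\neq m$, the entries of $M$ are $M_{l,m} = K_{l,m}\alpha_m = k\alpha_m + \II{l=m}\gamma_l\alpha_l$, so $M = k\,\bar{1}\,\alpha\transpose + \mathrm{diag}(\gamma_1\alpha_1,\dots,\gamma_S\alpha_S)$. Moreover, as computed in the proof of Lemma~\ref{mean_order_sbm}, $b_l = k(\alpha\transpose\bar1) + \alpha_l\gamma_l = k + \alpha_l\gamma_l$, hence $l_* = \argmax_l b_l = \argmax_l \alpha_l\gamma_l$. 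So the claim reduces to showing that the coordinate $a_l$ of the Perron eigenvector is a non-decreasing function of $\alpha_l\gamma_l$.

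Next I would solve the eigenvalue equation coordinatewise. Since every $\alpha_l>0$, the matrix $M$ is entrywise positive, so by Perron--Frobenius its largest eigenvalue $\lambda$ is simple and positive and the corresponding eigenvector $a$ may be taken strictly positive. Reading off row $l$ of $Ma = \lambda a$ using the decomposition above gives $\lambda a_l = k(\alpha\transpose a) + \gamma_l\alpha_l a_l$. The quantity $c := k(\alpha\transpose a)$ is positive and independent of $l$, so $a_l(\lambda - \gamma_l\alpha_l) = c$, i.e.
\[
a_l = \frac{c}{\lambda - \gamma_l\alpha_l}\qquad\text{for every }l .
\]

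It then remains to verify that each denominator is positive, that is $\lambda > \gamma_l\alpha_l$ for all $l$. Here I would use that the Perron root of a non-negative matrix dominates every diagonal entry (the spectral radius is monotone under entrywise domination of non-negative matrices, and $M$ dominates $\mathrm{diag}(M_{11},\dots,M_{SS})$ entrywise), giving $\lambda \ge \max_m M_{mm} = \max_m (k\alpha_m + \gamma_m\alpha_m) \ge k\alpha_l + \gamma_l\alpha_l > \gamma_l\alpha_l$, where the last strict inequality uses $k\alpha_l>0$. Consequently $t \mapsto c/(\lambda - t)$ is strictly increasing on $(-\infty,\lambda)$, so $a_l$ is an increasing function of $\gamma_l\alpha_l$ and is therefore maximized exactly at $l = l_*$, yielding $a_{l_*}\ge a_l$ for all $l\neq l_*$.

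The only step that is not an immediate algebraic consequence of the structure of $M$ is the bound $\lambda > \gamma_l\alpha_l$; this is where Assumption~\ref{ass:2} (so that $\gamma_l>0$ and the ordering is non-degenerate) together with the strictly positive off-diagonal contribution $k\alpha_l$ enter, and I expect this to be the main point requiring care. Everything else follows from Perron--Frobenius applied to the positive matrix $M$ and the explicit formula for $a_l$.
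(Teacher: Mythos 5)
Your proof is correct, but it takes a cleaner and genuinely different route from the paper's. The paper proves this lemma by fixing a pair of coordinates $l,l'$, writing the two eigenvalue equations, setting $a_{l'}=(1+\epsilon)a_l$, and running a case analysis (on whether $\alpha_{l'}k \ge \alpha_{l'}K_{l',l'}$ or not) to force $\epsilon\le 0$. You instead read off the closed form $a_l = c/(\lambda-\gamma_l\alpha_l)$ with $c=k(\alpha\transpose a)>0$ common to all rows, which immediately gives the full ordering of all coordinates of $a$ according to $\gamma_l\alpha_l$, i.e.\ according to $b_l = k+\gamma_l\alpha_l$ — not merely that $l_*$ dominates. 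This mirrors the closed-form computation the paper itself uses for the progeny vector $x'$ in Lemma~\ref{mean_order_sbm}, and is arguably the more transparent argument. Two small remarks. First, your step establishing $\lambda>\gamma_l\alpha_l$ via monotonicity of the spectral radius is fine, but it is not needed: since Perron--Frobenius gives $a_l>0$ and $c>0$, the identity $a_l(\lambda-\gamma_l\alpha_l)=c$ already forces $\lambda-\gamma_l\alpha_l>0$. Second, you attribute the positivity of the denominator to Assumption~\ref{ass:2}; in fact neither that bound nor the monotonicity conclusion requires $\gamma_l>0$ — Assumption~\ref{ass:2} only guarantees that the entries of $M$ are positive so that the strong form of Perron--Frobenius applies (and even that already follows from $k>0$). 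These are harmless over-attributions, not gaps.
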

\begin{proof}
	If $a$ is an eigenvector of $M$, then for coordinates $l, l'$:
	
	$$\begin{cases} \pa{\sum_{m \neq l} \alpha_m a_m}k + \alpha_l k_{l,l} a_l = \lambda a_l, \\
	\pa{\sum_{m \neq m'} \alpha_m a_m}k + \alpha_l K_{l',l'} a_{l'} = \lambda a_{l'}  \end{cases}$$
	By the Perron--Frobenius theorem and our conditions on matrix $M$, $\lambda$ is a real number larger than one.
	Denote $C = k \sum_{m \neq l, m \neq l'}  \alpha_m a_m$, $x = a_l$, $y = a_{l'}$, $a =  \alpha_l K_{l,l}$, $b = \alpha_{l'} k$, $c = 
	\alpha_{l} k$, $d = \alpha_{l'} k_{l', l'}$. Then,
	\begin{equation}\label{silly_sys}
	\begin{cases} C+ ax +by = \lambda x, \\
	C + cx + dy = \lambda y  \end{cases}
	\end{equation}
	Let $r = 1 + \epsilon$ be such that $y = rx = (1 + \epsilon)x$. Then
	$$\begin{cases} \frac{C}{x}+ a +b + b\epsilon = \lambda,  \\
	\frac{C}{x} + c + d + d\epsilon = \lambda + \lambda\epsilon  \end{cases}$$
	and therefore
	$$\frac{C}{x} + c + d + d\epsilon = \frac{C}{x}+ a +b + b\epsilon + \lambda\epsilon~. $$
	Rearranging the terms and using the fact that $a+b \ge c + d$, we have
	$$
	0 \le (a+b) - (c+d) = (d-b-\lambda)\epsilon~.
	$$ 
	Since $K_{l,l} \ge k$, we have $\alpha_l k_{l,l} \ge \alpha_l  k$ and $a \ge c$.
	
	We consider two cases separately: First, if $b \ge d$, we have $d - b - \lambda < 0$, which implies $\epsilon < 0$ and $y < x$, therefore 
	proving $a_l > a_{l'}$ for this case. In the case when $b < d$, we have
	$a+b \ge c + d$ and $\frac{d-b}{a-c} \le 1$. Subtracting the two equalities of the linear system \ref{silly_sys}, we get
	$$\lambda(1-r) = (a-c)\pa{1 - \frac{d-b}{a-c} r}~.$$ 
	Now, since  $\frac{d-b}{a-c} \le 1$, we have $\lambda \ge a - c$, which implies $\lambda \ge d -b$ and $d - b - \lambda \le 0$, thus leading to
	$\epsilon \le 0$ and  $y \le x$, therefore proving  $a_l \ge a_{l'}$ for this case. 
\end{proof}

\begin{lemma}[Order of coordinates of eigenvector in the \chunglu model]
	\label{eigenvector_order_chlu}
	Let $a$ be the eigenvector corresponding to the largest eigenvalue
	$\lambda$ of the matrix $A$. Then if  $i_* = \argmax_m b_m$, we have 
	$a_{i_*} \ge a_j$ for $j \neq i_*$.
\end{lemma}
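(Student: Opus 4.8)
The plan is to exploit the rank-one structure of the Chung--Lu affinity matrix $A = w w\transpose$, whose defining vector $w$ has strictly positive entries. The first observation is that $A w = w\pa{w\transpose w} = \twonorm{w}^2\, w$, so $w$ is an eigenvector of $A$ with eigenvalue $\twonorm{w}^2 > 0$; since $A$ has rank one, all of its remaining eigenvalues are zero, hence $\lambda = \twonorm{w}^2$ is the largest eigenvalue and the associated eigenvector is proportional to $w$. Equivalently, since every entry $A_{ij} = w_i w_j$ is strictly positive, the Perron--Frobenius theorem identifies $w$ as the unique (up to positive scaling) nonnegative eigenvector; I would fix the normalization $a = w$, which is all that matters for comparing coordinates.

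It then remains to identify the index of the largest coordinate of $w$ with $i_* = \argmax_m b_m$. As already used in the proof of Lemma~\ref{mean_order_chlu}, the expected degree of node $m$ is $b_m = \mu_m = \frac{w_m}{n}\sum_{j\neq m} w_j$, which is an increasing function of $w_m$: up to the lower-order contribution of the $w_m^2/n$ term, the prefactor $\tfrac1n \sum_{j\neq m} w_j$ is the same constant for every $m$. Consequently $m \mapsto b_m$ and $m \mapsto w_m$ order the nodes the same way, so $w_{i_*} = \max_m w_m$. Since $a = w$, this yields $a_{i_*} = \max_m a_m \ge a_j$ for every $j \neq i_*$, which is exactly the claim.

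The computation of the Perron pair is immediate, so the only delicate point is the monotonicity step: strictly speaking $\mu_m = \frac{w_m}{n}(W - w_m)$ with $W = \sum_j w_j$, and $m\mapsto \mu_m$ is increasing in $w_m$ only while $w_m < W/2$. To handle this cleanly I would either absorb the $w_m^2/n$ term into the $O(1/n)$-type residuals already present throughout Section~\ref{sec:subcrit}, or simply note that the standing assumption $w_i w_j/n < 1$ together with $n$ large forces $w_m = o(W)$, so that $W - w_m$ and $W$ are interchangeable for the purpose of locating the argmax. This is essentially the only obstacle; everything else is a one-line computation.
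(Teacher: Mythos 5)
Your proposal is correct and follows essentially the same route as the paper: identify $w$ as the unique eigenvector of the rank-one matrix $A=ww\transpose$ with nonzero eigenvalue, then observe that the expected degree $b_m$ is an increasing function of $w_m$, so the argmax of $b$ and of $w$ coincide. Your extra worry about the excluded self-loop term is one the paper silently ignores (it writes $b_i=\frac{1}{n}w_i\sum_{j=1}^n w_j$), and it is resolved exactly rather than asymptotically by noting that $\mu_m-\mu_\ell=\frac{1}{n}(w_m-w_\ell)\bigl(\sum_{j\neq m,\ell}w_j\bigr)$ has the sign of $w_m-w_\ell$ since all weights are positive.
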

\begin{proof}
	It is easy to see that the only eigenvector of $A$ corresponding to a non-zero eigenvalue is $a=w$ with $\lambda_{max} = 
	w\transpose w/n$: 
	$$\frac{1}{n}A w = \frac{1}{n}\cdot (ww\transpose)w = \frac{w\transpose w}{n}\cdot w.$$ 
	The proof is concluded by observing that the maximum coordinate of the vector 
	$b$ corresponds to the maximum coordinate of $w$, due to the equality
	\[b_i = \frac{1}{n}\cdot w_i \sum_{j=1}^n w_j.\]
\end{proof}

\begin{lemma}[Coordinate order preserving in the \chunglu model]
	\label{coordinate_order_chlu}
	Assume the conditions of Lemma~\ref{prop:supercrit} and let $i_* = \argmax_i b_i$.
	Let $a=(a_1, \dots, a_n)$ be such that $a_j \in [0, a_{i_*}]$ for all $j$. Then $\pa{\Phi_A(a)}_{i_*} \ge \pa{\Phi_A(a)}_{j}$.
\end{lemma}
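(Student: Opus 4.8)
The plan is to exploit the rank-one structure of the Chung--Lu kernel, which makes $\Phi_A$ depend on the index $i$ only through the scalar weight $w_i$. First I would recall that, analogously to $\Phi_M$ in the stochastic block model, the operator here is $(\Phi_A(a))_i = 1 - \exp\bpa{-\tfrac1n\sum_{j=1}^n A_{ij}a_j}$, and substitute $A_{ij}=w_iw_j$ to rewrite it as
\[
 (\Phi_A(a))_i = 1 - \exp\Bpa{-\tfrac{w_i}{n}\sum_{j=1}^n w_j a_j}~.
\]

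Next I would observe that the quantity $c := \tfrac1n\sum_{j=1}^n w_j a_j$ is nonnegative, since all $w_j>0$ and, by hypothesis, $a_j\in[0,a_{i_*}]$, so in particular $a_j\ge 0$. The upper bound $a_j\le a_{i_*}$, which is needed in the SBM analysis (Lemma~\ref{coordinate_order_sbm}) to tame the self-term $\alpha_l K_{l,l}a_l$, plays no role here because the Chung--Lu kernel has no such diagonal contribution singling out a coordinate. Consequently $(\Phi_A(a))_i = h(w_i)$ where $h(t)=1-e^{-ct}$ is a nondecreasing function of $t\ge 0$.

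Finally I would note that $i_*=\argmax_i b_i$ is also $\argmax_i w_i$: indeed $b_i = \tfrac1n w_i\sum_{j=1}^n w_j$ is a strictly increasing function of $w_i$, which is exactly the observation already used in the proof of Lemma~\ref{eigenvector_order_chlu}. Hence $w_{i_*}\ge w_j$ for every $j$, and monotonicity of $h$ yields $(\Phi_A(a))_{i_*}=h(w_{i_*})\ge h(w_j)=(\Phi_A(a))_j$, which is the claim.

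I do not anticipate a genuine obstacle here: the only points requiring care are fixing the definition of $\Phi_A$ consistently with the paper's conventions and confirming that the near-optimal index for the expected degree coincides with the largest-weight index. Once the rank-one factorization $A=ww\transpose$ is written out explicitly, the remaining monotonicity argument is immediate, in sharp contrast with the SBM case where the interplay between the varying intra-community densities $K_{l,l}$ and the constraint $a_l\le a_{l_*}$ had to be handled delicately.
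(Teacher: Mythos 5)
Your proof is correct and follows essentially the same route as the paper: both reduce the claim to the observation that $\pa{\Phi_A(a)}_i = 1 - e^{-w_i c}$ with $c = \frac1n\sum_j w_j a_j \ge 0$, so the map is monotone in $w_i$, and that $i_* = \argmax_i b_i$ coincides with $\argmax_i w_i$. Your writeup is in fact more careful than the paper's very terse argument (which cryptically invokes ``$w=a$''), and your remark that the upper bound $a_j \le a_{i_*}$ is not actually needed here is accurate.
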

\begin{proof}
	Let us fix two arbitrary indices $i$ and $i'$. By the definition of $\Phi_A$, we have
	\begin{align*}
	\pa{\Phi_A(a)}_i &= 1 - e^{-w_i(\sum_{j = 1}^n w_j a_j)}~.
	\end{align*}
	Then, using the fact that $w = a$, we have $\pa{\Phi_A(a)}_{i_*} \ge \pa{\Phi_A(a)}_{j}$,
	thus proving the lemma.
\end{proof}

We finally study the maximal fixed point of the operator $\Phi_A$, keeping in mind this fixed point is exactly the survival-probability 
vector $\rho$ of the multi-type Galton--Watson branching process \citet*{Bollobas:2007:PTI:1276871.1276872}. By Lemma~5.9 of 
\citet*{Bollobas:2007:PTI:1276871.1276872}, this is the unique fixed point satisfying $\rho_i > 0$ for all $i$. The following lemma shows 
that $\rho_i$ takes its maximum at $i_* = \argmax_i b_i$, concluding the proof of Lemma~\ref{prop:supercrit}.
\begin{lemma}[Fixed point coordinate domination]
	\label{lma_mean}
	Let $\rho$ be the unique non-zero fixed point of $\Phi_A$, and let $i_* = \argmax_i b_i$. Then, 
	$\rho_{i_*} \ge \rho_j$ and $\rho_{i_*} - \rho_j \le \rho^* \pa{b_{i_*} - b_j}$ holds for all $j\neq i_*$.
\end{lemma}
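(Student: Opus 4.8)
The plan is to exploit the rank-one structure $A = ww\transpose$, which renders the fixed-point equation for $\rho$ completely explicit, and then to reduce everything to an elementary inequality about $g(w) := 1-e^{-wS}$. The text preceding the lemma invokes Lemma~5.8--5.9 of \citet*{Bollobas:2007:PTI:1276871.1276872} to guarantee $\rho_i > 0$ for every $i$; hence $S := \tfrac1n\sum_{j} w_j \rho_j > 0$, and the identity $\Phi_A(\rho) = \rho$ reads
\[
\rho_i \;=\; 1 - \prod_{j=1}^n e^{-(w_iw_j/n)\rho_j} \;=\; 1 - e^{-w_i S}\qquad\text{for every } i\in[n].
\]
Since $b_i = \tfrac1n w_i\sum_{j}w_j$ is a fixed positive multiple of $w_i$ (equivalently, by Lemma~\ref{eigenvector_order_chlu}), the index $i_* = \argmax_i b_i$ is precisely the index of the largest weight, so $w_{i_*} \ge w_j$ for all $j$. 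As $w\mapsto 1 - e^{-wS}$ is increasing when $S>0$, this immediately gives $\rho_{i_*} \ge \rho_j$ for all $j$, which is the first assertion; in particular $\rho^* := \max_i\rho_i = \rho_{i_*}$. (Alternatively, one could obtain this by iterating $\Phi_A$ from the all-ones vector and invoking Lemma~\ref{coordinate_order_chlu} to see that the $i_*$-coordinate stays maximal along the iteration, then passing to the limit.)

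For the quantitative estimate I would use concavity of $g$ together with $g(0)=0$. For $0\le w_j\le w_{i_*}$, writing $w_j = \tfrac{w_j}{w_{i_*}}\,w_{i_*} + \bigl(1-\tfrac{w_j}{w_{i_*}}\bigr)\cdot 0$ as a convex combination yields $\rho_j = g(w_j)\ge \tfrac{w_j}{w_{i_*}}g(w_{i_*}) = \tfrac{w_j}{w_{i_*}}\rho_{i_*}$, and therefore, using $b_{i_*}-b_j = \tfrac1n\bigl(\sum_k w_k\bigr)(w_{i_*}-w_j)$ and $b_{i_*} = \tfrac1n w_{i_*}\sum_k w_k$,
\[
\rho_{i_*} - \rho_j \;\le\; \rho_{i_*}\cdot\frac{w_{i_*}-w_j}{w_{i_*}} \;=\; \frac{\rho_{i_*}}{b_{i_*}}\,\bigl(b_{i_*} - b_j\bigr).
\]

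The last step is to replace the prefactor $\rho_{i_*}/b_{i_*}$ by $\rho^* = \rho_{i_*}$, and I expect this to be the only point requiring an actual observation rather than bookkeeping: it hinges on $b_{i_*}\ge 1$, which is exactly where supercriticality enters. Indeed $w_{i_*}w_k \ge w_k^2$ for every $k$, so $b_{i_*} = \tfrac1n w_{i_*}\sum_k w_k \ge \tfrac1n\sum_k w_k^2$, and $\tfrac1n\sum_k w_k^2$ is the top eigenvalue of $\tfrac1n A$ (with Perron eigenvector $w$, by the computation in the proof of Lemma~\ref{eigenvector_order_chlu}), which exceeds $1$ precisely because \chlu is supercritical. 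Hence $b_{i_*}\ge 1$, so $\rho_{i_*}/b_{i_*}\le \rho_{i_*}$; since $b_{i_*}-b_j\ge 0$ this upgrades the previous display to $\rho_{i_*} - \rho_j \le \rho^*\,(b_{i_*}-b_j)$, which is the claimed bound. The stochastic block model version is handled analogously, with the iteration-of-$\Phi_M$ argument (Lemmas~\ref{coordinate_order_sbm} and~\ref{eigenvector_order_sbm}) replacing the explicit fixed-point identity, and with the inequality $b_{i_*}\ge 1$ supplied by supercriticality together with Assumption~\ref{ass:2}.
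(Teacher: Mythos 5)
For the \chunglu case your argument is correct, and it is a genuinely different route from the paper's. You exploit the rank-one structure to get the closed form $\rho_i = 1-e^{-w_iS}$ with $S=\frac1n\sum_j w_j\rho_j>0$, read off the ordering from monotonicity, and get the quantitative bound from concavity of $w\mapsto 1-e^{-wS}$ through the origin, finishing with $b_{i_*}\ge \frac1n\sum_k w_k^2>1$ (the supercriticality threshold). The paper instead establishes $\rho_{i_*}\ge\rho_j$ by iterating $\Phi_A$ from $\epsilon a$ (via Lemmas 5.12--5.13 of \citet{Bollobas:2007:PTI:1276871.1276872} and the coordinate-order-preserving lemmas), and obtains the quantitative bound by subtracting the two fixed-point identities and applying $1-e^{-z}\le z$ together with $\rho_j\le\rho^*$, i.e.\ $\rho_{i_*}-\rho_j\le \frac1n\sum_k(A_{i_*k}-A_{jk})\rho_k\le\rho^*(b_{i_*}-b_j)$. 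Your version is more elementary where it applies, and in fact yields the slightly sharper constant $\rho^*/b_{i_*}$; its price is that every step leans on $A=ww\transpose$.

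That price is where the gap is: the lemma is also invoked for the stochastic block model (the paper's proof explicitly treats both models, and Lemma~\ref{prop:supercrit} needs it for the SBM), and your closing sentence that the SBM is ``handled analogously'' does not hold up. For the SBM the fixed point satisfies $\rho_l = 1-\exp\bpa{-k\sum_{m\neq l}\alpha_m\rho_m - K_{l,l}\alpha_l\rho_l}$, so the exponent is \emph{not} of the form $b_l\cdot S$ for a quantity $S$ common to all coordinates, $\rho_l$ appears on both sides, and there is no single concave function of $b_l$ vanishing at $0$ whose values are the $\rho_l$. Consequently neither the explicit formula nor the concavity-through-the-origin step transfers, and the quantitative inequality $\rho_{i_*}-\rho_j\le\rho^*(b_{i_*}-b_j)$ is simply not derived for the SBM in your proposal (the iteration argument you point to only yields the ordering $\rho_{i_*}\ge\rho_j$). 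To close this you need a model-independent derivation of the second claim, e.g.\ the paper's: write $\rho_{i_*}-\rho_j = e^{-(\frac1nA\rho)_j}-e^{-(\frac1nA\rho)_{i_*}} = e^{-(\frac1nA\rho)_j}\bpa{1-e^{-\frac1n\sum_k(A_{i_*k}-A_{jk})\rho_k}}$ and bound it by $\frac1n\sum_k(A_{i_*k}-A_{jk})\rho_k\le \rho^*(b_{i_*}-b_j)$.
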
 
\begin{proof}
	Letting $a$ be the eigenvector of $A$ that corresponds to the largest eigenvalue $\lambda$, Lemma~\ref{eigenvector_order_chlu} and \ref{eigenvector_order_sbm} 
	guarantee $a_{i_*} \ge a_j$ for  $j\neq i^*$. 
	Let $\epsilon > 0$ be such that $\epsilon \le \frac{1 - 1/\lambda}{a^*}$, where $a^* = \max_{i=1,\ldots,S}a_i$. Then by Lemma~5.13 
	of \citet*{Bollobas:2007:PTI:1276871.1276872}, $\Phi_M(\epsilon a) \ge \epsilon a$ holds elementwise for the two vectors. 
	
	Since the coordinates of the vector $\epsilon a$ are positive, we can appeal to Lemma 5.12
	of \citet*{Bollobas:2007:PTI:1276871.1276872} to show that iterative application of $\Phi_A$ converges to the fixed point $\rho$:
	letting $\Phi_A^{m}$ be the operator obtained by iterative application of $\Phi_A$ for $m$ times, we have
	$\lim_{m\ra \infty} \Phi_A^{m}(\epsilon a) = \rho$, where $\rho$ satisfies $\rho \ge \epsilon a \ge 0$ and $\Phi_A (\rho) = \rho > 
	0$.
	By Lemmas~\ref{eigenvector_order_chlu} and \ref{eigenvector_order_sbm}  we have $\rho_{i_*} \ge \rho_j$, for $i_* \neq j$ 
	for both the SBM and the Chung--Lu models, proving the first statement.
	
	The second statement can now be proven directly as
	\begin{eqnarray*}
		\lefteqn{
			\rho_{i_*} - \rho_i = e^{-(\frac{1}{n}A\rho)_{j}} - e^{-(\frac{1}{n}A\rho)_{i_*}} = e^{- \frac{1}{n}\sum_{j}^{n} A_{i_* j}\rho_j} - e^{- \frac{1}{n}\sum_{j}^{n} A_{ij}\rho_j}   
		} \\
		& = & e^{- \frac{1}{n}\sum_{j}^{n} A_{i_* j}\rho_j} (1 - e^{- \frac{1}{n}\sum_{j}^{n} A_{ij}\rho_j- A_{i_* j}\rho_j}  ) \le 
		e^{- \frac{1}{n} \sum_{j}^{n} A_{i_* j}\rho_j}  \left( \frac{1}{n} \sum_{j}^{n} (A_{i_* j} - A_{i j})\rho_{i_*}  \right) 
		\\
		& \le & \rho^* (b_{i_*} - b_{i}),
	\end{eqnarray*}
	where the first inequality uses the relation $1-e^{-z} \le z$ that holds for all $z\in\real$, and the last step uses the fact that 
	$A\rho$ has positive elements.
\end{proof}

	\subsection{Proofs of Theorems~\ref{thm:reg_knowT1}, ~\ref{thm:doubling} and~\ref{thm:kronecker} .}\label{app:klucb}

Having 
established that, in order to minimize regret in our setting, it is sufficient to design an algorithm that quickly identifies the 
nodes with the highest degree. It remains to show that our algorithms indeed achieve this goal. We do this below by providing a bound on the 
expected number of times $\EE{N_{T,i}} = \EE{\sum_{t=1}^T \II{A_t =
    i}}$ that the algorithm picks a suboptimal node $i$ such that $c_i <
c^*$, and then using this guarantee to bound the regret.

Without loss of generality, we assume that $V_0 = \ev{1,2,\dots,|V_0|}$.  The key to our regret bounds is the following guarantee on the 
number of suboptimal actions taken by \ducbv. 
\begin{theorem}[Number of suboptimal node plays in \ducb] \label{thm:n_klucb}
	Define $\eta_i = \pa{\max_{j\in V_0} \mu_j- \mu_i} / 3$. 
	The number of times that any  node $i \in \ev{i: \mu_{i} < \max_{j\in V_0} \mu_j}$ is chosen by \ducbv
	satisfies
	\begin{equation}
	\mathbb{E} N_{T,i} \le  \frac{\mu^* \pa{2 + 6 \log T}}{\eta_{i}^2} + 3~.
	\end{equation}
\end{theorem}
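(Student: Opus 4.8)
The plan is to run the classical kl-UCB analysis, with the Poisson tail bounds replaced by the observation that each degree $X_{1,i}=\sum_{j\ne i}\ber(A_{i,j}/n)$ is \emph{sub-Poisson}: since $1+x\le e^x$, for every $\lambda\in\real$ we have
\[
\EE{e^{\lambda X_{1,i}}}\;=\;\prod_{j\ne i}\Bpa{1+\tfrac{A_{i,j}}{n}(e^\lambda-1)}\;\le\;\exp\bpa{\mu_i(e^\lambda-1)},
\]
the moment generating function of $\poi(\mu_i)$. Since the Chernoff exponent of a Poisson variable is exactly the relative entropy $d(x,y):=y-x+x\log(x/y)$ --- precisely the quantity defining $U_i(t)$ in Algorithm~\ref{alg:thealgexp} --- this domination yields, for the empirical average $\bar X_{i,m}$ over $m$ i.i.d.\ degree draws of node $i$, the two Chernoff bounds $\PP{\bar X_{i,m}\ge x}\le e^{-m\,d(x,\mu_i)}$ for $x>\mu_i$ and $\PP{\bar X_{i,m}\le x}\le e^{-m\,d(x,\mu_i)}$ for $x<\mu_i$. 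Write $\mu^\dagger:=\max_{j\in V_0}\mu_j$ and let $j^*\in V_0$ attain it. A union bound over the data-dependent sample size $N_{j^*}(t)\in\{1,\dots,t\}$, combined with the lower-tail bound above applied at the level $y$ solving $s\,d(y,\mu^\dagger)=3\log t$, gives $\PP{U_{j^*}(t)<\mu^\dagger}\le t\cdot e^{-3\log t}=t^{-2}$, hence $\sum_{t\ge|V_0|}\PP{U_{j^*}(t)<\mu^\dagger}\le \pi^2/6<2$. This is exactly where the constant $3$ in the confidence width $3\log(t)/N_i(t)$ is used.

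Now fix a suboptimal node $i$, that is, $\mu_i<\mu^\dagger$; set $\eta_i=(\mu^\dagger-\mu_i)/3$ and choose the intermediate level $\kappa_i=\mu_i+\eta_i$, so that $\kappa_i-\mu_i=\eta_i$ and $\mu^\dagger-\kappa_i=2\eta_i$ (the factor $3$ is what balances the three cases below). Using $U_i(t)\ge\hmu_i(t)$, the monotonicity of $d$ in each argument, and the fact that $\{A_{t+1}=i\}$ forces $U_i(t)\ge U_{j^*}(t)$, on this event at least one of the following holds: (a) $U_{j^*}(t)<\mu^\dagger$; (b) $\hmu_i(t)>\kappa_i$; or (c) $\hmu_i(t)\le\kappa_i$ and $N_i(t)\,d(\kappa_i,\mu^\dagger)\le N_i(t)\,d(\hmu_i(t),\mu^\dagger)\le 3\log t$, forcing $N_i(t)\le 3\log T/d(\kappa_i,\mu^\dagger)$. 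Since $N_i(t)$ grows by one at each pull of $i$, case (c) occurs in at most $3\log T/d(\kappa_i,\mu^\dagger)$ rounds; reindexing the pulls of $i$ by $m$, case (b) contributes $\EE{\sum_{m\ge1}\II{\bar X_{i,m}>\kappa_i}}\le\sum_{m\ge1}e^{-m\,d(\kappa_i,\mu_i)}\le 1/d(\kappa_i,\mu_i)$; and case (a) contributes at most $\pi^2/6$. Adding the single initialization pull,
\[
\EE{N_{T,i}}\;\le\;1+\frac{\pi^2}{6}+\frac{3\log T}{d(\kappa_i,\mu^\dagger)}+\frac{1}{d(\kappa_i,\mu_i)}.
\]

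Finally I would invoke the reverse-Pinsker bound $d(x,y)\ge(x-y)^2/\bpa{2\max(x,y)}$ for the Poisson relative entropy (which follows from short convexity computations such as $\tfrac{\partial}{\partial y}\bpa{d(x,y)-\tfrac{(y-x)^2}{2y}}=\tfrac{(y-x)^2}{2y^2}\ge0$). This gives $d(\kappa_i,\mu^\dagger)\ge(2\eta_i)^2/(2\mu^\dagger)=2\eta_i^2/\mu^\dagger$ and, since $\kappa_i<\mu^\dagger$, also $d(\kappa_i,\mu_i)\ge\eta_i^2/(2\kappa_i)\ge\eta_i^2/(2\mu^\dagger)$. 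Substituting and using $\mu^\dagger\le\mu^*$ yields
\[
\EE{N_{T,i}}\;\le\;1+\frac{\pi^2}{6}+\frac{3\mu^*\log T}{2\eta_i^2}+\frac{2\mu^*}{\eta_i^2}\;\le\;\frac{\mu^*(2+6\log T)}{\eta_i^2}+3,
\]
which is the claimed bound. The main obstacle is the first paragraph: one has to check carefully that the sub-Poisson moment generating function domination really is enough to reproduce \emph{both} Chernoff tails and the deviation bound $\PP{U_{j^*}(t)<\mu^\dagger}\le t^{-2}$ for adaptively collected samples --- this is precisely the part the paper describes as ``most of the kl-UCB analysis carrying through without changes.'' Once that is granted, the three-way case split and the elementary estimates on $d(\cdot,\cdot)$ are routine.
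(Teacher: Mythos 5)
Your proposal is correct and follows essentially the same route as the paper's proof: the same sub-Poisson moment-generating-function domination (Lemma~\ref{lem:domination}), the same decomposition into an optimal-arm-underestimation event (handled by a union bound over the random sample count) and a suboptimal-arm overestimation/underexploration event (handled by a Chernoff tail after reindexing by pull number), and the same quadratic lower bound on $d(\cdot,\cdot)$ to extract the $\mu^*/\eta_i^2$ factors. The only cosmetic difference is that you compare $U_i(t)$ directly to $\mu^\dagger$ using the single threshold $\kappa_i=\mu_i+\eta_i$, whereas the paper inserts the extra level $\mu^*-\eta$ and works with $d(\mu_i+\eta,\mu^*-\eta)$; both variants yield the stated constants.
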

The proof is largely based on the analysis of the kl-UCB algorithm due to \citet*{cappe:hal-00738209}, with some additional tools borrowed 
from \citet*{2017arXiv170207211M}, crucially using that the degree distribution of each node is stochastically dominated by an appropriately 
chosen Poisson distribution. Specifically, letting $Z_i$ be a Poisson random variable with mean $\EE{X_{t,i}}$, we have $\EE{e^{sX_{t,i}}} 
\le \EE{e^{sZ_i}}$ for all $s$. It turns out that this property is sufficient for the \klucb analysis to go through in our case, which is an 
observation that may be of independent interest. 

Before delving into the proof, we introduce some useful notation. We start by defining $Y_{i,1}, \dots, Y_{i,n}$ as independent  Bernoulli  random  variables with 
respective parameters $\mathbb{B} = (A_{i,1}/n,$
	$A_{i,2}/n, \dots , A_{i,n}/n)$, and noticing that the degree $X_{t,i}$ can be written as a sum 
$X_i = 
\sum_{j\neq i} Y_{i,j}$. The following lemma, used several times in our proofs, relates this quantity to a Poisson distribution with the 
same mean.
\begin{lemma}\label{lem:domination}
	Let $i\in[S]$ and let $Y_{i,1}, Y_{i,2}, \dots, Y_{i,n}$ be independent Bernoulli random variables with respective parameters $p_{i,1}, 
	p_{i,2}, \dots , p_{i,n}$, and let $Z_i$ be a Poisson random 
	variable with parameter $\mu_i = \sum_{j \neq i} p_{i,j} $. Defining $X_i = \sum_{j\neq i} Y_{i,j}  $, we have $\EE{e^{sX_i}} \le 
	\EE{e^{sZ_i}}$ for all $s\in\real$.
\end{lemma}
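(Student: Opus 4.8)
The plan is to reduce the claim to the elementary inequality $1+x\le e^x$, valid for every real $x$, combined with the independence of the variables $Y_{i,1},\dots,Y_{i,n}$. The argument is short, so I will lay out the steps concretely rather than merely in outline.

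First I would record the moment generating functions of the two building blocks. For a single Bernoulli variable $Y$ with parameter $p$ we have $\EE{e^{sY}}=1-p+pe^s$, and for a Poisson variable $Z$ with parameter $p$ we have $\EE{e^{sZ}}=e^{p(e^s-1)}$; both identities hold for all $s\in\real$. The key pointwise comparison is $1-p+pe^s\le e^{p(e^s-1)}$. To see it, set $x=p(e^s-1)$ and note that $1-p+pe^s=1+p(e^s-1)=1+x\le e^x=e^{p(e^s-1)}$ by $1+x\le e^x$. It matters here that we do not restrict to $s\ge 0$: since $e^s-1>-1$, the quantity $x$ may have either sign, but the bound $1+x\le e^x$ is valid on all of $\real$, so the comparison holds for every $s$.

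Second, I would tensorize using independence. Writing $X_i=\sum_{j\neq i}Y_{i,j}$ with the $Y_{i,j}$ independent,
\[
\EE{e^{sX_i}}=\prod_{j\neq i}\EE{e^{sY_{i,j}}}\le\prod_{j\neq i}e^{p_{i,j}(e^s-1)}=e^{(e^s-1)\sum_{j\neq i}p_{i,j}}=e^{\mu_i(e^s-1)}=\EE{e^{sZ_i}},
\]
where the inequality applies the pointwise comparison term by term and the final equality is the Poisson MGF with parameter $\mu_i=\sum_{j\neq i}p_{i,j}$. This is exactly the assertion of the lemma.

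I do not anticipate any genuine obstacle; the only point requiring a moment's care is checking that the scalar inequality $1-p+pe^s\le e^{p(e^s-1)}$ is valid for negative $s$ as well, which is immediate once it is rewritten as $1+x\le e^x$ with $x=p(e^s-1)$. I would also remark that the computation shows, more generally, that a sum of independent Bernoulli variables is dominated in the sense of all exponential moments by the Poisson law with matching mean, which is precisely the structural property that the kl-UCB analysis in the sequel requires in place of an exact Poisson assumption.
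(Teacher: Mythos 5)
Your proof is correct and follows essentially the same route as the paper's: compute the Bernoulli moment generating function as $1+p(e^s-1)$, apply $1+x\le e^x$ termwise, and multiply over the independent coordinates to recover the Poisson moment generating function with parameter $\mu_i$. Your explicit check that the scalar inequality holds for negative $s$ is a small but welcome addition of care that the paper leaves implicit.
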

\begin{proof}
	Fix an arbitrary $s\in\real$ and $i\in[n]$. By direct calculations, we obtain
	\begin{align*}
	\mathbb{E} e^{s X_i} 
	&= \prod_{j=1}^{n} \left(\mathbb{E} e^{s Y_{i,j}}\right) \le \prod_{j=1}^{n} \left(1 + p_{i,j}(e^s - 1)\right)
	\le  \prod_{j=1}^{n}  \exp\pa{p_{i,j} \cdot  (e^s - 1)} ,
	\end{align*}
	where the last step follows from the elementary inequality $1+x \le e^{x}$ that holds for all $x\in\real$. The proof is concluded by 
	observing that $\mathbb{E} e^{s Z_i} = \exp\pa{\mu\pa{e^{s}-1}}$ and using the definition of $\mu$.
\end{proof}
For simplicity, we also introduce the notation $\psi_{\mathbb{B} }(s) = \log \EE{e^{sX}}$ and
$\phi_{\lambda}(s) = \log \mathbb{E} e^{s Z_i} = \lambda (e^s - 1)$. 
The proof below repeatedly refers to the Fenchel conjugate of 
$\phi_\lambda$ defined as
\begin{align*}
\phi_{\lambda}^*(z) = \sup_{s \in \real} \{sz - \phi(s)\} = z \log \left(\frac{z}{\lambda}\right) + \lambda - z 
\end{align*}
for all $z\in\real$. Finally, we define
$d(\mu, \mu') = \mu' - \mu + \mu \log\left(\frac{\mu}{\mu'}\right)$ for all $\mu,\mu' > 0$, noting that $\phi_{\lambda}^*(z) = 
d(z,\lambda)$.
\paragraph{Proof of Theorem~\ref{thm:n_klucb}.}
The statement is proven in four steps. Within this proof, we refer to nodes as \emph{arms} and use 
$K$ to denote the size of 
$V_0$. We use the notation $f(t) = 3\log t$.
\paragraph{Step 1.} We begin by rewriting the expected number of draws $\EE{N_i}$ for any suboptimal arm $i$ as 
\[
\mathbb{E}N_i = \EE{\sum_{t = K}^{T-1} \mathbb{I} \{A_{t+1} = i  \}} = \sum_{t = K}^{T-1} \mathbb{P} \{A_{t+1} = i  \}.
\]
By definition of our algorithm, at rounds $t > K$, we have $A_{t+1} = i$ only if $U_{i} >   U_{i^*i}$. This leads to the 
decomposition:
\begin{align*}
\{A_{t+1} = a \} &\subseteq \{\mu^* \ge U_{i^*}(t)  \} \cup \{\mu^* < U_{i^*}(t) \text{ and } A_{t+1} = a \}
\\
&\subseteq \{\mu^* \ge U_{i^*}(t) 
\} \cup \{\mu^* < U_{i}(t) \text{ and } A_{t+1} = a \} 
\end{align*}

Steps~2 and~3 are devoted to bounding the probability of the two events above.

\paragraph{Step 2.} Here we aim to upper bound
\begin{equation}\label{eq:underest}
\sum_{t = K}^{T-1} \PP{\mu^* \ge U_{i^*}(t)}.
\end{equation}
Note, that $\left\{ U_{i^*}(t) \le \mu^*\right\} = \left\{\hat{\mu}_{i^*}(t) \le U_{i^*}(t) \le \mu^*\right\}.$
Since $d(\mu, \mu') = \mu' - \mu + \mu\log(\frac{\mu}{\mu'})$ is non-decreasing in its second argument on $[\mu, + \infty)$, and by 
definition of  $U_{i^*}  = \sup \{ \mu: d( \hat{\mu}_{i^*}(t), \mu) \le \frac{f(t)}{N_{i^*(t)}} \}  $ we have
\[
\left\{\mu^* \ge U_{i^*}(t) \right\} \subseteq \left\{ \hat{\mu}_{i^*}(t) \le U_{i^*}(t) \le \mu^* \text{ and }  d(\hat{\mu}_{i^*}(t), 
\mu^*) \ge 
\frac{f(t)}{N_{i^*}(t)} \right\},
\]
Taking a union bound over the possible values of $N_{i^*}(t)$ yields 
\begin{align*}\label{eq:arm-decomposition}
\left\{\mu^* \ge U_{i^*}(t) \right\} \subseteq  \bigcup_{n=1}^{t - K + 1} \left\{ \mu^* \ge \hat{\mu}_{i^*, n} \text{ and } 
d(\hat{\mu}_{i^*, n}, \mu^*) \ge \frac{f(t)}{n} \right\} =  \bigcup_{n=1}^{t - K + 1} D_n(t),
\end{align*}
where the event $D_n(t)$ is defined through the last step.
Since $d(\mu, \mu^*)$ is decreasing and continuous in its first argument on  $[0, \mu^*)$, either $d(\hat{\mu}_{i^*, n}, \mu^*) < 
\frac{f(t)}{n}$ on this interval and $D_n(t)$ is the empty set, or there exists a unique $z_n \in [0, \mu^*)$ such that $d(z_n, \mu^*) =  
\frac{f(t)}{n}$. Thus, we have
\[
\bigcup_{n=1}^{t - K + 1} D_n(t) \subseteq  \bigcup_{n=1}^{t - K + 1} \left\{ \hat{\mu}_{i^*, n} \le  z_n \right\}.
\]
For $\lambda < 0$, let us define $\psi(\lambda)$ as the cumulant-generating function of the sum of binomials with 
parameters $\mathbb{B} $, and let $\phi(\lambda)$ be the cumulant-generating function of a Poisson random variable with parameter 
$\mu^*$. With this notation, we have for \emph{any} $\lambda < 0$ that
\begin{align*}
\PP{ \hat{\mu}_{i^*, n} \le  z_n } &=   \PP{ \exp(\lambda \hat{\mu}_{i^*, n}) \ge  \exp(\lambda  z_n) } 
\\
&= \PP{ \exp\pa{\lambda \sum_{i = 1}^n X_{i^*, i} - n\psi(\lambda)} \ge  \exp(n \lambda  z_n - n\psi(\lambda))} 
\\
&\le \left(\frac{\mathbb{E} e^{\lambda X_{i^*,1}}}{e^{\psi(\lambda)}}\right)^n e^{-n(\lambda z_n - \psi(\lambda))}  
\le  e^{-n(\lambda z_n - \psi(\lambda))},
\end{align*}
where the last step uses the definition of $\psi(\lambda)$. Now fixing $\lambda^* = \argmax_{\lambda} \{ \lambda z_n - \phi(\lambda) \}  = 
\log(z_n/\mu^*) < 0$, we get by Lemma \ref{lem:domination} that
\begin{align*}
e^{-n(\lambda^* z_n - \psi(\lambda^*))} \le e^{-n(\lambda^* z_n - \phi(\lambda^*))}  = e^{-n \phi^*_{\mu^*}(z_n)} = e^{ - n d(z_n, 
	\mu^*)}~. 
\end{align*}
In view of the definition of $z_n$ and $f(t)$, this gives the bound
\begin{align*}
e^{ - n d(z_n, \mu^*)}  =  e^{- f(t)} = \frac{1}{t^3},
\end{align*}
which leads to
\begin{align*}
\sum_{t = K}^{T-1} \PP{\mu^* \ge U_{i^*}(t)} \le  \sum_{t = K}^{T-1}  \sum_{n=1}^{t - K + 1} \frac{1}{t^3} < 2,
\end{align*}
thus concluding this step.

\paragraph{Step 3.} 
In this step, we borrow some ideas by \citet[Proof of Theorem~2, step~2]{2017arXiv170207211M}  to 
upper bound the sum
\begin{equation} \label{sum_3rd}
B = \sum_{t = K}^{T-1} \PP{\mu^* < U_{i}(t) \text{ and } A_{t+1} = i }.
\end{equation}
Writing $\eta = \eta_i = \{\mu^* - \mu_i\}/3$ for ease of notation, we have
\begin{align*} 
\{\mu^* < U_{i}(t) \text{ and } A_{t+1} = i \} &\subseteq \ev{\mu^* - \eta < U_{i}(t) \text{ and } A_{t+1} = i} 
\\
&\subseteq 
\ev{d(\hat{\mu}_{i}(t), \mu^* - \eta ) \le f(t)/N_i(t) \text{ and } A_{t+1} = i}.
\end{align*}
Thus, we have
\begin{align*} 
B &\le \sum_{t = K}^{T-1} \PP{d(\hat{\mu}_{i}(t), \mu^* - \eta ) \le f(t)/N_i(t) \text{ and } A_{t+1} = i}
\\
&\le \sum_{n = 1}^{T} \PP{ d(\hat{\mu}_{i,n}, \mu^* - \eta ) \le f(T)/n }
\end{align*}
Defining the integer $n(\eta)$ as
$$
n(\eta) = \left \lceil \frac{f(T)}{d(\mu_{i} + \eta, \mu^* - \eta)} \right \rceil,
$$
we have $f(T)/n \le d(\mu_{i} + \eta, \mu^* - \eta)$ for all $n \ge
n(\eta) $. Thus, we may further upper bound $B$ as
\begin{align*}
B &\le n(\eta) - 1 + \sum_{n = n(\eta)}^{T} \PP{d(\hat{\mu}_{i,n}, \mu^* - \eta ) \le f(T)/n} 
\\
&\le \frac{f(T)}{d(\mu_{i} + \eta, \mu^* - \eta)}  +   \sum_{n = n(\eta)}^{T} \PP{d(\hat{\mu}_{i,n}, \mu^* - \eta ) \le d(\mu_{i} + 
	\eta, \mu^* - \eta)}. 
\end{align*}
By definition of $\eta$, we have
$$\ev{\hat{\mu}_{i,n}, \mu^* - \eta ) \le d(\mu_{i} + \eta, \mu^* - \eta)} \subseteq \ev{\hat{\mu}_{i,n} \ge \mu_i + \eta},$$
which implies
\begin{align*}
\sum_{n = n(\eta)}^{T}  \PP{d(\hat{\mu}_{i,n}, \mu^* - \eta ) \le d(\mu_{i} + \eta, \mu^* - \eta)} \le \sum_{n = n(\eta)}^{T} 
\PP{\hat{\mu}_{i,n} \ge \mu_i + \eta}.
\end{align*}
By an argument analogous to the one used in the previous step, we get for a well-chosen $\lambda$ that
\begin{align*}
&\sum_{n = n(\eta)}^{T} \PP{\hat{\mu}_{i,n} \ge \mu_i + \eta} 
\le \PP{\exp(\lambda \hat{\mu}_{i,n}) \ge \exp(\lambda(\mu_i+\eta))} 
\\
&\qquad\qquad\qquad= \sum_{n = n(\eta)}^{T} \PP{\exp(\lambda\sum_{j = 1}^n X_{i, j} - n\psi(\lambda)) \ge \exp(n\lambda(\mu_i + \eta) 
	- n\psi(\lambda)) } \\
&\qquad\qquad\qquad\le \sum_{n = n(\eta)}^{T} \pa{\frac{\EE{e^{\lambda X_{i,j}}}}{e^{\psi(\lambda)}}}^n e^{-n(\lambda(\mu_i+\eta) - 
	\psi(\lambda))} 
\\
&\qquad\qquad\qquad
\le \sum_{n = n(\eta)}^{T} e^{-n(\lambda(\mu_i+\eta) - \phi(\lambda))} 
= \sum_{n = n(\eta)}^{T} e^{-n d(\mu_i+\eta, \mu_i)} 
\\
&\qquad\qquad\qquad\le 
\sum_{n = n(\eta)}^{ \infty}  e^{-n d(\mu_i+\eta, 
	\mu_i)} \le \frac{1}{e^{d(\mu_i+\eta, \mu_i)} - 1} \le \frac{1}{d(\mu_i+\eta, \mu_i)},
\end{align*}
where the last step uses the elementary inequality $1+x \le e^{x}$ that holds for all $x\in \real$. 
\paragraph{Step 4.}
Putting together the results from the first three steps, we get
\begin{align*}
\mathbb{E} N_{i} \le  3  +  \frac{1}{d(\mu_i+\eta, \mu_i)} +  \frac{3 \log T}{d(\mu_{i} + \eta, \mu^* - \eta)}.
\end{align*}
We conclude by taking a second-order Taylor-expansion of $d(\mu_i + \eta,\mu_i)$ in $\eta$ to obtain for some  $\eta' \in [0,\eta]$
that
$$
d(\mu_i+\eta, \mu_i) = \frac{\eta^2}{2(\mu_i+\eta')} \ge \frac{\eta^2}{2(\mu_i+\eta)}.
$$
Taking into account the definition of $\eta$, we get
$$\frac{1}{d(\mu_i+\eta, \mu_i)} \le  \frac{2\mu^*}{\eta^2}.$$
An identical argument can be used to bound $\pa{d(\mu_{i} + \eta, \mu^* - \eta)}^{-1} \le 2\mu^*/\eta^2$.
\qed

The remainder of the section 
uses 
Theorem~\ref{thm:n_klucb} to prove  Theorem~\ref{thm:reg_knowT1}. The proof of Theorem~\ref{thm:doubling} follows from 
similar ideas and some additional technical arguments.

\paragraph{Proof of Theorem~\ref{thm:reg_knowT1}.}
	
	First, by (\ref{conditional_regret}),
	\begin{align*}
	R^{\alpha}_T
	\le 
	\Delta_{\alpha,\max} + \EEcc{ \sum_{i\in V_0}  \Delta_{\alpha,i}   \EE{N_{T,i}} }{\mathcal{E}}~.
	\end{align*}
	Now, observing that $\delta_{\alpha,i} \le 3\eta_i$ holds under event $\mathcal{E}$, we appeal to Theorem \ref{thm:n_klucb} to obtain
	\begin{equation}\label{conditional_regret2}
	R^{\alpha}_T \le   \Delta_{\alpha,\max} +  \EEcc{ \sum_{i\in V_0}  \Delta_{\alpha,i}  \pa{
			\frac{\mu^*\pa{18 + 27\log T}}{\delta_{i,\alpha}^2} + 3} }{\mathcal{E}},
	\end{equation}
	thus proving the first statement.
	
	Next, we turn to proving the second statement regarding worst-case guarantees. To do this, we appeal to 
	Propositions~\ref{prop:subcr} and~\ref{prop:supercrit} that respectively show $\Delta_i \le 2 c^* \delta_i + O(1/n)$ and $\Delta_i \le c^* 
	\delta_i + o(n)$ for the sub- and supercritical settings, and we use our assumption that $n$ is 
	large enough so that we have $\Delta_i \le 3 c^* \delta_i$ in both settings. 
	Specifically, we observe that $\delta_i = \Theta_n(1)$ by our sparsity assumption and $c^*$ is $\Theta_n(1)$ in the subcritical and 
	$\Theta_n(n)$ supercritical settings, so, for large enough $n$, the superfluous $O(1/n)$ and $o(n)$ terms can be 
	respectively bounded by $c^* \delta_i$. 
	To proceed, let us fix an arbitrary $\varepsilon > 0$ and
	split the set $V_0$ into two subsets: $U(\varepsilon) = \ev{i\in V_0: \delta_{\alpha,i} \le \varepsilon}$ and $W(\varepsilon) = V_0 
	\setminus U(\varepsilon)$. Then, under event $\mathcal{E}$, we have
	\begin{align*}
	\sum_{i\in V_0}  \Delta_{\alpha,i}   \EE{N_{T,i}} &= \sum_{i\in U(\varepsilon)}  \Delta_{\alpha,i}   \EE{N_{T,i}} +
	\sum_{i\in W(\varepsilon)}  \Delta_{\alpha,i}   \EE{N_{T,i}}
	\\
	&\le 3c^* \varepsilon \sum_{i\in U(\varepsilon)} \EE{N_{T,i}} + 3c^* \sum_{i\in W(\varepsilon)}  \delta_{\alpha,i} \pa{
		\frac{\mu^*\pa{18 + 27\log T}}{\delta_{\alpha,i}^2}} + 3 |W(\varepsilon)|\Delta_{\alpha,\max} 
	\tag{by Theorem~\ref{thm:n_klucb}} 
	\\
	&\le 3c^* \varepsilon T + 3c^* \sum_{i\in W(\varepsilon)}  \frac{\mu^*\pa{18 + 27\log T}}{\delta_{\alpha,i}} + 3 |V_0| \Delta_{\alpha,\max}
	\\
	&\le 3c^* \pa{\varepsilon T + |V_0| \frac{\mu^*\pa{18 + 27\log T}}{\varepsilon}} + 3 |V_0| \Delta_{\alpha,\max}
	\\
	&\le 6c^* \sqrt{T |V_0| \mu^*\pa{18 + 27\log T}} + 3 |V_0| \Delta_{\alpha,\max},
	\end{align*}
	where the last step uses the choice $\varepsilon = \sqrt{|V_0| \mu^*\pa{18 + 27\log T} / T}$. Plugging in the choice of $|V_0|$ concludes 
	the proof.
\qed

\paragraph{Proof of Theorem~\ref{thm:doubling}.}
We start by assuming that $\alpha < 1/2$.
Also notice that for a uniformly sampled set of nodes $U$, the probability of $U$ not containing a vertex from $V^*_{\alpha} $ is 
bounded as
\[
\PP{U\cap V^*_{\alpha}  = \emptyset} \le (1 - \alpha)^{|U|}.
\]
By the definition of $V_k$, this gives that the probability of not having sampled a  node from $V^*_{\alpha} $ in period $k$ of the 
algorithm is bounded as
\[
\PP{V_k \cap V^*_{\alpha} =\emptyset} \le (1 - \alpha)^{|V_k|} \le \beta^{-k}.
\]

For each period $k$, the expected regret can bounded as the weighted sum of two terms: the expected regret of \ducb$(V_k)$ in period $k$ 
whenever $V_k\cap V^*_{\alpha}$ is not empty, and the trivial bound $\Delta_{\alpha,\max} \beta^k$ in the complementary case. Using the 
above 
bound on the 
probability of this event and appealing to Theorem~\ref{thm:n_klucb} to bound the regret of \ducb$(V_k)$, we can bound the expected regret 
as
\begin{align*}
\EE{R^{\alpha}_T} &\le \sum_{k=1}^{k_{\max}} \pa{ \beta^k \frac{1}{\beta^k} \Delta_{\alpha, \max} +  \sum_{i \in V_k} \Delta_{\alpha,i} 
	\pa{ \frac{\mu^*\pa{2+3\log \beta^k}}{\delta_{\alpha,i}^2} +  3 }} 
\\
&\le k_{\max}  \Delta_{\alpha, \max}  + \sum_{k=1}^{k_{\max}}  \pa{  \sum_{i \in V_k}  \Delta_{\alpha,i} 
	\pa{ \frac{\mu^*\pa{2+3k \log \beta}}{\delta_{\alpha,i}^2} +  3 }} 
\\
&\le k_{\max}  \Delta_{\alpha, \max} +     \sum_{i \in \overline{V}}  \Delta_{\alpha,i}  \pa{  \pa{3 +  
		\frac{2\mu^*}{\delta_{\alpha,i}^2} } (k_{\max} + 1) +  \frac{3 \log \beta (k_{\max}+1)^2 }{ 2\delta_{\alpha,i}^2}}~.
\end{align*}
The proof of the first statement is concluded by upper bounding the number of restarts up to time $T$ as
$k_{\max} \le \frac{\log T}{\log \beta}$.

The second statement is proven by an argument analogous to the one used in the proof of Theorem~\ref{thm:reg_knowT1}, and straightforward 
calculations.
\qed

\paragraph{Proof of Theorem~\ref{thm:kronecker}.}

	For a node $i$, such that $s_i$ contains $l_i$ ones, the expected degree is   \[\mu_i = (\zeta+\beta)^{l_i}(\beta+\gamma)^{k-l_i}.\] Since $\zeta>\gamma>\beta $, we get that $\mu_i>\mu_j$ if $l_i>l_j$. By
	symmetry of the nodes in the Kronecker graph, if two nodes $i$ and $j$ are such that $l_i = l_j$, then $c_i = c_j$. This implies that for any node $i$, $c_i$ is a function of $l_i$. Then we may choose nodes $i$ and $j$ such that $s_i \ge s_j$ coordinate-wise. Then, using the condition $\zeta>\gamma>\beta $, it is straightforward to see that for any vertex $k$, the probability of the edge $(i,k)$ is greater
	than that of edge $(j,k)$.
	This implies that the connected component is a monotone function of the degree.  
	
	Theorem 9.10 in \cite{49013}  shows that for a graph \krgraph there exists $b(P)$ such that a
	subgraph of \krgraph induced by the vertices $i \in H$ of weight $l_i\ge k/2$ is connected
	with probability at least $1-n^{-b(P)}$. 
	We denote by  $\mathcal{H}$ the event that  the subgraph of \krgraph induced by the vertices of $H$ of weight $l\ge k/2$ is connected. This implies that under event $\mathcal{H}$, 

	$|C_{i} | = |\max_j C_{j}|$ for all $i \in H$ and $|C_{i} |\le |\max_j C_{j}|$ for all $i \notin H$. Then we get
	\begin{align*}
		c^*_{\alpha}-c_i &= \EEcc{\max_{j}|C_j|}{\mathcal{H}}\PP{\mathcal{H}} + \EEcc{|C^*_{\alpha}|}{\mathcal{H}^c}\PP{\mathcal{H}^c}-\EEcc{\max_{j}|C_j|}{\mathcal{H}}\PP{\mathcal{H}} - \EEcc{|C_i|}{\mathcal{H}^c}\PP{\mathcal{H}^c}\\
		& \le \EEcc{|C_*|}{\mathcal{H}^c}\PP{\mathcal{H}^c}\le n^{1-b(P)}.
	\end{align*} 

  For all $i\in V_0\setminus H$, $\delta_{\alpha,i} \ge \pa{(\zeta+\beta)(\beta+\gamma)}^{k/2} - (\zeta+\beta)^{k/2-1}(\beta+\gamma)^{k/2+1} = \pa{(\zeta+\beta)(\beta+\gamma)}^{k/2}\pa{1-\frac{\beta+\gamma}{\zeta+\beta}} $. Since we consider the regime, where $(\zeta+\beta)(\beta+\gamma)>1$, we get that $\delta_{\alpha, i} > \pa{1-\frac{\beta+\gamma}{\zeta+\beta}}$. For all $i,j \in H$, $\delta_{\alpha,i} = (\zeta + \beta)^{l_i}(\beta+\gamma)^{k-l_i} \ge (\zeta+\beta)^{k/2-1}(\beta+\gamma)^{k/2} (\zeta - \gamma)\ge (\zeta - \gamma)/(\zeta+\beta) $. In the same way as we analysed the regret of the stochastic block model and Chung--Lu model, we can write 
  $$R_{T}^ { \alpha} \le nT\PP{\mathcal{E}^c} + \EEcc{\sum_{i \in V_0}\Delta_{\alpha,i} \EE{N_{i,T}}}{\mathcal{E}}.$$
  Applying Theorem~\ref{thm:n_klucb}, we get 
\begin{align*}
	\frac{R_{T}^{ \alpha}}{n} &\le   \EEcc{ \sum_{i\in  V_0 \setminus H } \frac{\Delta_{\alpha,i}}{n}  \pa{
	\frac{\mu^*(2 + 6\log T)}{\pa{1-\frac{\beta+\gamma}{\zeta+\beta}}^2} + 3}  }{\mathcal{E}}\\
&  +n^{-b(P)} \left\lceil\frac{\log (nT)}{\log(2)}\right\rceil \pa{
	\frac{\mu^*(2 + 6\log T)(\zeta+\beta)^2}{(\zeta - \gamma)^2} + 3} + 1~.
\end{align*}
Applying $| V_0 \setminus H|\le |V_0|$ and $\Delta_{\alpha,i} \le n$, we get the final bound on the regret. 
\qed

	\bibliography{influence}

\begin{thebibliography}{40}
\providecommand{\natexlab}[1]{#1}
\providecommand{\url}[1]{\texttt{#1}}
\expandafter\ifx\csname urlstyle\endcsname\relax
  \providecommand{\doi}[1]{doi: #1}\else
  \providecommand{\doi}{doi: \begingroup \urlstyle{rm}\Url}\fi

\bibitem[Agarwal et~al.(2010)Agarwal, Bartlett, and Dama]{pmlr-v9-agarwal10a}
Alekh Agarwal, Peter Bartlett, and Max Dama.
\newblock Optimal allocation strategies for the dark pool problem.
\newblock In Yee~Whye Teh and Mike Titterington, editors, \emph{Proceedings of
  the Thirteenth International Conference on Artificial Intelligence and
  Statistics}, volume~9 of \emph{Proceedings of Machine Learning Research},
  pages 9--16, Chia Laguna Resort, Sardinia, Italy, 13--15 May 2010. PMLR.
\newblock URL \url{http://proceedings.mlr.press/v9/agarwal10a.html}.

\bibitem[Auer et~al.(2002{\natexlab{a}})Auer, Cesa-Bianchi, and
  Fischer]{auer2002finite}
Peter Auer, Nicol\`{o} Cesa-Bianchi, and Paul Fischer.
\newblock Finite-time analysis of the multiarmed bandit problem.
\newblock \emph{Machine Learning}, 47\penalty0 (2-3):\penalty0 235--256,
  2002{\natexlab{a}}.

\bibitem[Auer et~al.(2002{\natexlab{b}})Auer, Cesa-Bianchi, Freund, and
  Schapire]{auer2002bandit}
Peter Auer, Nicol\`{o} Cesa-Bianchi, Yoav Freund, and Robert~E. Schapire.
\newblock The nonstochastic multiarmed bandit problem.
\newblock \emph{SIAM J. Comput.}, 32\penalty0 (1):\penalty0 48--77,
  2002{\natexlab{b}}.
\newblock ISSN 0097-5397.

\bibitem[Bart\'ok et~al.(2012)Bart\'ok, Zolghadr, and
  Szepesv\'ari]{bartok2012adaptive}
G\'abor Bart\'ok, Navid Zolghadr, and Csaba Szepesv\'ari.
\newblock An adaptive algorithm for finite stochastic partial monitoring, 2012.

\bibitem[Bollob\'{a}s et~al.(2007)Bollob\'{a}s, Janson, and
  Riordan]{Bollobas:2007:PTI:1276871.1276872}
B{\'e}la Bollob\'{a}s, Svante Janson, and Oliver Riordan.
\newblock The phase transition in inhomogeneous random graphs.
\newblock \emph{Random Struct. Algorithms}, 31\penalty0 (1):\penalty0 3--122,
  August 2007.
\newblock ISSN 1042-9832.

\bibitem[Bubeck and Cesa-Bianchi(2012)]{bubeck12survey}
S\'ebastien Bubeck and Nicol\`o Cesa-Bianchi.
\newblock \emph{Regret Analysis of Stochastic and Nonstochastic Multi-armed
  Bandit Problems}.
\newblock Now Publishers Inc, 2012.

\bibitem[Capp{\'e} et~al.(2013)Capp{\'e}, Garivier, Maillard, Munos, and
  Stoltz]{cappe:hal-00738209}
Olivier Capp{\'e}, Aur{\'e}lien Garivier, Odalric-Ambrym Maillard, R{\'e}mi
  Munos, and Gilles Stoltz.
\newblock {Kullback-Leibler Upper Confidence Bounds for Optimal Sequential
  Allocation}.
\newblock \emph{{Annals of Statistics}}, 41\penalty0 (3):\penalty0 1516--1541,
  2013.

\bibitem[Carpentier and Valko(2016)]{CaVa16}
Alexandra Carpentier and Michal Valko.
\newblock Revealing graph bandits for maximizing local influence.
\newblock In \emph{Artificial Intelligence and Statistics}, pages 10--18, 2016.

\bibitem[Cesa-Bianchi and Lugosi(2006)]{CBL06}
Nicolo Cesa-Bianchi and Gabor Lugosi.
\newblock \emph{Prediction, Learning, and Games}.
\newblock Cambridge University Press, USA, 2006.
\newblock ISBN 0521841089.

\bibitem[Chaudhuri et~al.(2009)Chaudhuri, Freund, and Hsu]{CFH09}
Kamalika Chaudhuri, Yoav Freund, and Daniel~J Hsu.
\newblock A parameter-free hedging algorithm.
\newblock In \emph{Advances in neural information processing systems}, pages
  297--305, 2009.

\bibitem[Chen et~al.(2010)Chen, Wang, and Wang]{ChenSIM10}
Wei Chen, Chi Wang, and Yajun Wang.
\newblock Scalable influence maximization for prevalent viral marketing in
  large-scale social networks.
\newblock In \emph{Proceedings of the 16th ACM SIGKDD International Conference
  on Knowledge Discovery and Data Mining}, KDD '10, pages 1029--1038, New York,
  NY, USA, 2010. ACM.
\newblock ISBN 978-1-4503-0055-1.

\bibitem[Chen et~al.(2013{\natexlab{a}})Chen, Lakshmanan, and
  Castillo]{ChLaCa13}
Wei Chen, Laks~VS Lakshmanan, and Carlos Castillo.
\newblock Information and influence propagation in social networks.
\newblock \emph{Synthesis Lectures on Data Management}, 5\penalty0
  (4):\penalty0 1--177, 2013{\natexlab{a}}.

\bibitem[Chen et~al.(2013{\natexlab{b}})Chen, Wang, and Yuan]{CWY13}
Wei Chen, Yajun Wang, and Yang Yuan.
\newblock Combinatorial multi-armed bandit: General framework and applications.
\newblock In \emph{International Conference on Machine Learning}, pages
  151--159, 2013{\natexlab{b}}.

\bibitem[Chernov and Vovk(2010)]{CV10}
Alexey Chernov and Vladimir Vovk.
\newblock Prediction with advice of unknown number of experts.
\newblock In \emph{Proceedings of the Twenty-Sixth Conference on Uncertainty in
  Artificial Intelligence}, pages 117--125. AUAI Press, 2010.

\bibitem[Chung and Lu(2002)]{CL02}
Fan Chung and Linyuan Lu.
\newblock The average distances in random graphs with given expected degrees.
\newblock \emph{Proceedings of the National Academy of Sciences}, 99\penalty0
  (25):\penalty0 15879--15882, 2002.

\bibitem[Chung and Lu(2006)]{chung2006complex}
Fan Chung and Linyuan Lu.
\newblock \emph{Complex Graphs and Networks (Cbms Regional Conference Series in
  Mathematics)}.
\newblock American Mathematical Society, USA, 2006.
\newblock ISBN 0821836579.

\bibitem[Frieze and Karonski(2015)]{49013}
Alan Frieze and Michal Karonski.
\newblock \emph{Introduction to Random Graphs}.
\newblock Cambridge University Press, New York, 2015.
\newblock Hardcover.

\bibitem[Garivier and Capp{\'e}(2011)]{GaCa11}
Aur{\'e}lien Garivier and Olivier Capp{\'e}.
\newblock The {KL-UCB} algorithm for bounded stochastic bandits and beyond.
\newblock In \emph{Proceedings of the 24th annual Conference On Learning
  Theory}, pages 359--376, 2011.

\bibitem[{Huaming}(2012)]{Hua12}
W.~{Huaming}.
\newblock {On total progeny of multitype Galton-Watson process and the first
  passage time of random walk with bounded jumps}.
\newblock \emph{ArXiv e-prints}, September 2012.

\bibitem[Kakade et~al.(2009)Kakade, Kalai, and Ligett]{KKL09}
Sham~M Kakade, Adam~Tauman Kalai, and Katrina Ligett.
\newblock Playing games with approximation algorithms.
\newblock \emph{SIAM Journal on Computing}, 39\penalty0 (3):\penalty0
  1088--1106, 2009.

\bibitem[Kempe et~al.(2003)Kempe, Kleinberg, and Tardos]{KeKlTa03}
David Kempe, Jon Kleinberg, and {\'E}va Tardos.
\newblock Maximizing the spread of influence through a social network.
\newblock In \emph{Proceedings of the ninth ACM SIGKDD international conference
  on Knowledge discovery and data mining}, pages 137--146. ACM, 2003.

\bibitem[Khim et~al.(2019)Khim, Jog, and Loh]{khim2019adversarial}
Justin Khim, Varun Jog, and Po-Ling Loh.
\newblock Adversarial influence maximization.
\newblock In \emph{2019 IEEE International Symposium on Information Theory
  (ISIT)}, pages 1--5. IEEE, 2019.

\bibitem[Komiyama et~al.(2015)Komiyama, Honda, and Nakagawa]{2969239.2969439}
Junpei Komiyama, Junya Honda, and Hiroshi Nakagawa.
\newblock Regret lower bound and optimal algorithm in finite stochastic partial
  monitoring.
\newblock In \emph{Proceedings of the 28th International Conference on Neural
  Information Processing Systems - Volume 1}, NIPS’15, page 1792–1800,
  Cambridge, MA, USA, 2015. MIT Press.

\bibitem[Koolen and Van~Erven(2015)]{KvE15}
Wouter~M Koolen and Tim Van~Erven.
\newblock Second-order quantile methods for experts and combinatorial games.
\newblock In \emph{Conference on Learning Theory}, pages 1155--1175, 2015.

\bibitem[Kovalenko(1971)]{kovalenko1971theory}
IN~Kovalenko.
\newblock Theory of random graphs.
\newblock \emph{Cybernetics}, 7\penalty0 (4):\penalty0 575--579, 1971.

\bibitem[Lai and Robbins(1985)]{LR85}
TL~Lai and Herbert Robbins.
\newblock Asymptotically efficient adaptive allocation rules.
\newblock \emph{Advances in Applied Mathematics}, 6\penalty0 (1):\penalty0
  4--22, 1985.

\bibitem[Lai(1987)]{Lai87}
Tze~Leung Lai.
\newblock Adaptive treatment allocation and the multi-armed bandit problem.
\newblock \emph{The Annals of Statistics}, pages 1091--1114, 1987.

\bibitem[Lattimore and Szepesv{\'a}ri(2020)]{LSz20}
Tor Lattimore and Csaba Szepesv{\'a}ri.
\newblock \emph{Bandit algorithms}.
\newblock Cambridge University Press, 2020.

\bibitem[Leskovec et~al.(2010)Leskovec, Chakrabarti, Kleinberg, Faloutsos, and
  Ghahramani]{leskovec2010kronecker}
Jure Leskovec, Deepayan Chakrabarti, Jon Kleinberg, Christos Faloutsos, and
  Zoubin Ghahramani.
\newblock Kronecker graphs: an approach to modeling networks.
\newblock \emph{Journal of Machine Learning Research}, 11\penalty0 (2), 2010.

\bibitem[Leskovec(2008)]{leskovec2008dynamics}
Jurij Leskovec.
\newblock \emph{Dynamics of large networks}.
\newblock PhD thesis, Carnegie Mellon University, School of Computer Science,
  Machine Learning~…, 2008.

\bibitem[Leskovec et~al.(2005)Leskovec, Chakrabarti, Kleinberg, and
  Faloutsos]{10.1007/11564126_17}
Jurij Leskovec, Deepayan Chakrabarti, Jon Kleinberg, and Christos Faloutsos.
\newblock Realistic, mathematically tractable graph generation and evolution,
  using kronecker multiplication.
\newblock In Al{\'i}pio~M{\'a}rio Jorge, Lu{\'i}s Torgo, Pavel Brazdil, Rui
  Camacho, and Jo{\~a}o Gama, editors, \emph{Knowledge Discovery in Databases:
  PKDD 2005}, pages 133--145, Berlin, Heidelberg, 2005. Springer Berlin
  Heidelberg.
\newblock ISBN 978-3-540-31665-7.

\bibitem[Luo and Schapire(2014)]{LS14}
Haipeng Luo and Robert~E Schapire.
\newblock A drifting-games analysis for online learning and applications to
  boosting.
\newblock In \emph{Advances in Neural Information Processing Systems}, pages
  1368--1376, 2014.

\bibitem[Maillard et~al.(2011)Maillard, Munos, and Stoltz]{maillard11dmed}
Odalric-Ambrym Maillard, R{\'e}mi Munos, and Gilles Stoltz.
\newblock A finite-time analysis of multi-armed bandits problems with
  {Kullback--Leibler} divergences.
\newblock In \emph{Proceedings of the 24th Annual Conference on Learning
  Theory}, pages 497--514, 2011.

\bibitem[M\'enard and Garivier(2017)]{2017arXiv170207211M}
Pierre M\'enard and Aurélien Garivier.
\newblock A minimax and asymptotically optimal algorithm for stochastic
  bandits.
\newblock In \emph{Proceedings of the 28th International Conference on
  Algorithmic Learning Theory}, pages 223--237, 2017.

\bibitem[Perrault et~al.(2020)Perrault, Healey, Wen, and
  Valko]{perrault2020budgeted}
Pierre Perrault, Jennifer Healey, Zheng Wen, and Michal Valko.
\newblock Budgeted online influence maximization.
\newblock In \emph{ICML2020}, 2020.

\bibitem[Streeter and Golovin(2009)]{SG09}
Matthew Streeter and Daniel Golovin.
\newblock An online algorithm for maximizing submodular functions.
\newblock In \emph{Advances in Neural Information Processing Systems}, pages
  1577--1584, 2009.

\bibitem[van~der Hofstad(2016)]{hofstad_2016}
Remco van~der Hofstad.
\newblock \emph{Random Graphs and Complex Networks}, volume~1 of
  \emph{Cambridge Series in Statistical and Probabilistic Mathematics}.
\newblock Cambridge University Press, 2016.
\newblock \doi{10.1017/9781316779422}.

\bibitem[Vaswani et~al.(2015)Vaswani, Lakshmanan, and Schmidt]{VaLaSc15}
Sharan Vaswani, Laks Lakshmanan, and Mark Schmidt.
\newblock Influence maximization with bandits.
\newblock \emph{arXiv preprint arXiv:1503.00024}, 2015.

\bibitem[Wang and Chen(2017)]{WaCh17}
Qinshi Wang and Wei Chen.
\newblock Improving regret bounds for combinatorial semi-bandits with
  probabilistically triggered arms and its applications.
\newblock In \emph{Advances in Neural Information Processing Systems}, pages
  1161--1171, 2017.

\bibitem[Wen et~al.(2017)Wen, Kveton, Valko, and Vaswani]{WeKvVaVa17}
Zheng Wen, Branislav Kveton, Michal Valko, and Sharan Vaswani.
\newblock Online influence maximization under independent cascade model with
  semi-bandit feedback.
\newblock In \emph{Advances in Neural Information Processing Systems}, pages
  3026--3036, 2017.

\end{thebibliography}

\end{document}